\definecolor{darkgreen}{RGB}{0, 100, 0} 
\definecolor{darkred}{RGB}{139, 0, 0} 
\title{Black-Box Privacy Attacks on Shared Representations in Multitask Learning}
\author[1]{John~Abascal}
\author[1]{Nicol\'as~Berrios}
\author[1]{Alina~Oprea}
\author[1]{Jonathan~Ullman}
\author[2,3]{Adam~Smith}
\author[3]{Matthew~Jagielski}
\affil[1]{Khoury College of Computer Sciences, Northeastern University}
\affil[2]{Department of Computer Science, Boston University}
\affil[3]{Google DeepMind, Advisor Only}
\date{}
\begin{document}

\maketitle


\begin{abstract}
\emph{Multitask learning} (MTL) has emerged as a powerful paradigm that leverages similarities among multiple learning tasks, each with insufficient samples to train a standalone model, to solve them simultaneously while minimizing data sharing across users and organizations. MTL typically accomplishes this goal by learning a \emph{shared representation} that captures common structure among the tasks by embedding data from all tasks into a common feature space. Despite being designed to be the smallest unit of shared information necessary to effectively learn patterns across multiple tasks, these shared representations can inadvertently leak sensitive information about the particular tasks they were trained on.

In this work, we investigate what information is revealed by the shared representations through the lens of inference attacks. Towards this, we propose a novel, \emph{black-box task-inference} threat model where the adversary, given the embedding vectors  produced by querying the shared representation on samples from a particular task, aims to determine whether that task was present when training the shared representation. We develop efficient, purely black-box attacks on machine learning models that exploit the dependencies between embeddings from the same task without requiring shadow models or labeled reference data. We evaluate our attacks across vision and language domains for multiple use cases of MTL and demonstrate that even with access only to fresh task samples rather than training data, a black-box adversary can successfully infer a task's inclusion in training. To complement our experiments, we provide theoretical analysis of a simplified learning setting and show a strict separation between adversaries with training samples and fresh samples from the target task's distribution.
\end{abstract}



\section{Introduction}

\emph{Multitask learning} (MTL) has emerged as a powerful paradigm that leverages similarities among multiple learning tasks, each with insufficient samples to train a standalone model, to solve them simultaneously while minimizing data sharing across multiple entities, such as users and organizations.
MTL accomplishes this goal by learning a \emph{shared representation} that captures common structure between the tasks. Concretely, a shared representation could be a neural network that learns a mapping of the data from all tasks into a shared feature space, where similar data points across tasks cluster together, and task-specific output layers (or heads) operate on these embeddings—that is, the outputs of the shared representation—to make predictions. MTL methods have shown remarkable success in various domains, including computer vision~\cite{girshick2015fastrcnn}, natural language processing~\cite{collobert2008multitasklanguage}, federated learning~\cite{smith2017federated, hanzely2021federatedmixture, mansour2020personalization, ghosh2020clusteredfederated}, drug discovery~\cite{ramsundar2015drugdiscovery}, and financial forecasting~\cite{ghosn1996finance}. For example, MTL can be used to personalize image classification models by learning a shared across many users and locally adapting to each user's small, on-device photo library.


Despite being designed to capture only generic patterns that can be applied to several downstream tasks, these shared representations can inadvertently leak sensitive information about specific tasks, or underlying data distributions, that they were trained on. The privacy risks that arise are of particular concern when data from several sensitive entities, such as individual users, private organizations, or data silos, are jointly used for training across multiple tasks or non-uniform data distributions. Here, shared representations are often the minimum piece of information that each entity must contribute in order to achieve strong generalization while only contributing a limited number of samples. This point is highlighted in prior work on privacy-preserving collaborative learning~\cite{shen2023sharerep}, where sharing \emph{only} the representation, not task-specific layers, demonstrably improves convergence rates and model performance when learning over multiple parties. 

In our work, we study privacy attacks at the level of entire tasks and broadly investigate the privacy risks of jointly learning over multiple tasks by mounting these attacks on the smallest unit of information required to jointly train a model, the shared representation. In particular, we attempt to determine whether, and to what extent, an adversary can infer a given task's inclusion in training given black-box query access to (i.e. the ability to sample embedding vectors from) the shared representation. While there is prior work that explores privacy attacks on MTL, this work is limited to sample-level membership inference and model extraction~\cite{yan2024miamtl} where the adversary can make queries to the task-specific heads of the multitask model and train reference models. 

Towards this goal, we propose a new threat model called \emph{task-inference} and develop black-box attacks with minimal adversarial knowledge to infer the inclusion of a task in MTL. Unlike membership-inference \cite{homer-2008-mi,SankararamanOJH09,dwork-2015-trace,shokri-2016-mi}, task-inference generalizes individual privacy attacks from the sample level to the task level, where the adversary aims to determine the presence of an entire target task, rather than a particular sample, in the training set. Critically, our threat model only assumes that the adversary has access to samples from the target task's distribution. We identify two variants of our threat model, one with a \textbf{strong} adversary who has the specific samples used to train the shared representation and another with a \textbf{weak} adversary who only has independent samples from the target task's distribution. Our experiments demonstrate a notable separation between these two threat models in terms of their attack capabilities,  and we provide analysis of a simplified learning setting to help explain our empirical findings in machine learning. We observe that while both adversaries can mount successful attacks, having access to training samples provides a sizeable advantage. Furthermore, in contrast to prior work \cite{shokri-2016-mi, carlini-2022-lira, 2021-liu-encodermi, chen2023faceauditor, chaudhari-2022-snap, abascal2023tmi, yan2024miamtl}, neither of the task-inference adversaries require auxiliary datasets to train reference, or shadow, models to calibrate their attack. We instead mount attacks by using the key observation that representation models tend to output codependent embedding vectors for samples that come from the same task. 

Because MTL is a general framework that enables learning high utility models by leveraging similarities between related problems, there are several different ways in which tasks can be defined. In this work, we study privacy leakage in two use cases of MTL which we believe are representative of common applications: MTL across many individuals, or users, for \textbf{personalization} and MTL for solving \textbf{multiple related learning problems}. A canonical use of MTL for personalization is photo tagging (i.e. detecting items, attributes, or people in images), where each task corresponds to an individual's mobile device with limited data who cannot train an model on their own. Thus, they learn a shared representation for a common learning problem with other participants in MTL and locally personalize a task-specific layer to their data. In contrast, MTL could also be used to solve several distinct learning problems with limited data but related structure, such as detection of multiple facial attributes in images or multiple topics in text. Here, each task can be seen as a different learning problem with limited data and a unique label space, but each task benefits from learning the structure of other tasks (e.g. facial attributes or topics can be correlated). We observe that the privacy risks that arise from our threat model depend on the specific application of MTL. When using MTL for personalization and tasks are defined as people, inferring inclusion of a task in training can be seen as analogous to membership~\cite{homer-2008-mi} or user-inference~\cite{kandpal2023userinference}. In this setting, the adversary composes the model's outputs over several samples belonging to an individual to determine if any of their data appeared in the dataset. If MTL is used to simultaneously solve multiple learning problems, inferring a task's inclusion poses a similar risk to inferring whether a property~\cite{ateniese-2015-property_inference, hartmann-2023-distributioninference, chaudhari-2022-snap}, or subpopulation that satisfies a common labeling, was present at all in the training dataset. Here, we observe that, similar to property inference, an adversary uses several samples sharing common attributes in the input space (e.g. all text samples discussing the topic "Python") to determine either if, or to what extent, samples satisfying these attributes were included in the training dataset. 

We highlight that the task-inference threat model we propose interpolates those of well-studied privacy attacks on machine learning, and it induces risks comparable to membership, user, and property inference depending on factors like the number of samples per task and the particular definition of tasks. While our analysis in Section~\ref{sec:tracing} offers broad intuition on the efficacy of privacy attacks on groups and non-i.i.d. data, connecting them to sample-level attacks, MTL is a particularly apt regime for studying leakage at the granularity of multiple related samples.


We comprehensively evaluate our black-box task-inference attacks across vision and language datasets, including CelebA~\cite{liu-2015-celeba} and Stack Overflow~\cite{stackoverflow}, where shared representations are trained using MTL for both personalization across several users and for simultaneously solving multiple learning problems. We also study the factors that lead to task-inference leakage by observing how attack success varies in the model's generalization gap and conducting ablation studies on synthetic data across parameters such as the total number of tasks in the dataset, the embedding dimension, and the number of samples per task. Our findings demonstrate that a purely black-box adversary can successfully infer inclusion of tasks in MTL using only query access to the shared representation, even with \textbf{weak} access to training data samples, reinforcing the generality of our threat model and the separation between \textbf{strong} and \textbf{weak} adversaries observed through our theoretical analysis of tracing attacks.

\subsection*{Our Contributions} We summarize the main contributions of our work as follows:

\begin{enumerate}[labelsep=*,leftmargin=15pt]
    \item We formulate novel, black-box threat models for privacy attacks on datasets which are composed of multiple tasks or distributions, focusing on shared representations learned with MTL. Our proposed threat models distinguish between adversaries with \textbf{strong} versus \textbf{weak} access to the shared representation's training data.
    \item We analyze our threat models in the context of tracing attacks for mean estimation over mixtures of Gaussian distributions, and demonstrate a notable separation between the \textbf{strong} and \textbf{weak} adversaries in terms of achievable attack success rates. In addition, we highlight the connections between our tracing attack on tasks and tracing attacks on individual samples. 
    \item We construct two efficient, purely black-box privacy attacks for machine learning models trained over multiple tasks, demonstrating that an adversary can infer the inclusion of a task when they only have \textbf{weak} access to the training data, but can mount significantly more powerful attacks with \textbf{strong} access.
    \item We extensively evaluate both of our attacks on shared representations in machine learning by running experiments in both the vision and language domains for two representative use-cases of MTL, and we empirically study the factors that lead to privacy leakage when learning shared representations over several tasks.
\end{enumerate}

\section{Background and Related Work}

We provide background on multitask learning, related work on existing privacy attacks, and federated learning.

\subsection{Multitask Learning}

The goal of MTL (see~\cite{caruana-1997-multitask} for an early survey) is to learn jointly over several related tasks, often with sparse data, by exploiting shared features between them. A \textit{task} $\tau$ is defined as a distribution over samples in $\mathcal{X} \times \mathcal{Y}$ (e.g. $d$-dimensional real vectors that correspond to binary labels). In this work, we assume that the tasks for some learning problem are drawn from a \textit{task distribution}, $\mathcal{Q}$. To leverage common features between tasks, we can learn a shared representation $h : \mathcal{X} \to \mathcal{Z}$, that maps samples in $\mathcal{X}$ to a lower dimensional space, $\mathcal{Z}$, where the representation vectors, called \emph{embeddings}, capture the shared structure of the tasks. Mapping these samples to a lower dimensional space simplifies the learning problem, allowing us to use simple, linear classifiers $g_i: \mathcal{Z} \to \{0,1\}$. Given data samples from $T$ tasks $\vec{\tau} = (\tau_1,  \dots,  \tau_T)$ and a loss function $\mathcal{L}$, learning the shared representation $h$ can be written as the following optimization problem:
\begin{align*}
     \min_{{h_\theta \in \mathcal{H}}} 
     \left( 
     \min_{g_{\beta_1},\dots,g_{\beta_{T}}} \sum_{i=1}^{\mathcal{T}} \mathcal{L}(g_{\beta_{i}} \circ h_{\theta}, \: \tau_i) 
     \right)
\end{align*}

The shared representation $h$ often takes the form of a neural network that maps an example $x \in \mathcal{X}$ to an embedding vector $z \in \mathcal{Z}$ with task-specific classifiers $\{g_1, ..., g_{T}\}$ being linear classifiers applied to the embedding vector.  Thus, the goal is to learn an $h$ such that it maps data samples to embedding vectors that are linearly separable. 

\subsubsection{Defining Tasks}

In this work, we consider two MTL settings that are representative of common scenarios: (1) \textbf{personalization}, where each \emph{user} or \emph{person} represents a task with a personalized objective (e.g., multilabel face detection, recommendation, etc.) learned from sparse data by leveraging the shared representation trained over all users; and (2) \textbf{multiple learning problems}, where the tasks correspond to distinct classification problems (e.g.,  detection of facial attributes, binary topic classification, etc.) that share a similar underlying structure. We choose these two settings because they highlight how our proposed threat model (Section~\ref{sec:threatmodel}) captures and abstracts the privacy threats posed by existing attacks at the individual level~\cite{shokri-2016-mi}, user level~\cite{kandpal2023userinference}, and property level~\cite{ateniese-2015-property_inference}. 
 

\subsubsection{Connections to Distributed and Collaborative Learning}

The most ubiquitous approaches to collaborative learning are based on federated learning~\cite{mcmahan2017FLoriginal} (FL). FL is a collaborative machine learning framework where a central, trusted server coordinates model training across several parties, such as users or silos~\cite{huang2022crosssiloFL}. In a simple FL setting, the server publishes an initial machine learning model to several devices and each device computes model updates locally on its data. The parties then send the updated models or the raw updates themselves to the trusted server to be aggregated, and the process repeats until convergence. While FL can be used as a tool to learn a single task over several parties and publish the shared model, many popular constructions of FL aim to collaboratively learn models that can be locally adapted to an individual party's data during training in a multitask fashion~\cite{smith2017federated}.

In fact, there are a multitude of prior works that study the connections between FL and MTL~\cite{yu-2022-FLadaptation, tan-2023-personalizedFL, hu2023privatemtl, fallah2020personalizedfederatedlearningmetalearning, mansour2020personalization}, many of which focus on techniques to personalize models on users' data by leveraging shared representations. Using collaborative learning for personalizing to users' data has also seen adoption in industry research~\cite{msr2022PersonalizedFL}, making it imperative to understand whether there are privacy risks incurred by having minimal access to the shared portion of the collaboratively learned model.

\subsection{Privacy Attacks on ML Models}

While overparameterized deep learning models are known to memorize individual training data points, even in settings where the labeling is random~\cite{zhang-2017-rethinking}, prior work has shown that models capable of such memorization still achieve good generalization on unseen data~\cite{belkin-2019-reconciling, feldman-2020-longtail, brown-2021-irrelevant}. This fact opens multiple paths for an inference attack adversary to learn sensitive information about a machine learning model's training data. In particular, these models can leak membership information about the individual points they memorized~\cite{shokri-2016-mi}, and they can leak information about the distributions of subpopulations within their training datasets~\cite{ateniese-2015-property_inference, kandpal2023userinference}.

\subsubsection{Membership-Inference Attacks}
Membership inference attacks (MIAs)~\cite{homer-2008-mi,SankararamanOJH09,dwork-2015-trace,shokri-2016-mi} seek to determine whether or not a given data record was present in a machine learning model's training dataset. These attacks are the most widely studied in the privacy literature as they represent a fundamental form of privacy leakage that has direct connections to the definition of differential privacy~\cite{dwork-2006-dmns}. Because of this, MIAs have been adapted to several machine learning settings and architectures, such as encoder models~\cite{liu2021encodermi, song-2020-embeddingleakage}, large language models~\cite{zhang2024mink, duan2024membershipinferenceattackswork}, and federated learning~\cite{suri2023subjectmembershipinferenceattacks, nasr-2019-federatedmia}, and they are often used to audit the empirical privacy leakage of machine learning models~\cite{song-2018-auditingmi, ye-2022-enhanced_mi, jagielski2020auditing}. While the privacy risks that such attacks pose on their own is subtle, mounting accurate MIAs is often a necessary step for performing powerful attacks such as training data extraction~\cite{carlini-2020-extraction_llm, carlini-2023-extraction_diffusion}. 

Additionally, there are studies on membership-inference attacks in the setting where the adversary has some knowledge of the distributions that comprise the training dataset~\cite{humphries-2023-dependencies, rezaei-2022-subpopulationbasedmia}. These works find that distinguishing between a member target sample and known non-member samples from the same subpopulation is an easier adversarial objective than distinguishing between a member target sample and a i.i.d. samples from the training dataset's distribution. The former study~\cite{humphries-2023-dependencies} investigates how this fact changes differential privacy guarantees, while the latter work~\cite{rezaei-2022-subpopulationbasedmia} leverages this additional knowledge to mount efficient membership-inference attacks.


\subsubsection{Property-Inference Attacks} In contrast to individual privacy attacks, property (or distribution) inference attacks~\cite{ateniese-2015-property_inference} aim to determine global attributes of the dataset, such as the proportions of subpopulations. These attacks were originally formalized as a distinguishing test between two worlds in which the subpopulation of interest comprises either $t_0$ or $t_1 > t_0$ fraction of the training data set. In recent years, the property inference threat model has been extensively studied in various settings in recent years with black-box and white-box access~\cite{suri2023dissectingdistributioninference, zhou2022propinfGANS, ganju2018propinfpermutation, hartmann-2023-distributioninference}, under data poisoning~\cite{mahloujifar2022propertypoisoning, chaudhari-2022-snap}, and within collaborative learning~\cite{xu2020subjectpropinf, melis-2018-exploitingunintendedcollab}. Extensions of the original property inference threat model, known as \emph{property existence attacks}~\cite{chaudhari-2022-snap} or \emph{distributional membership inference}~\cite{hartmann-2023-distributioninference}, have shown that, with and without data poisoning, respectively, an adversary can determine whether a property appeared at all in the data, rather than determining its frequency in the dataset. This can be seen as a special case of property inference, where $t_{0} = 0$ and $t_1 > 0$.

\subsubsection{Attacks on Representation Models} There are multiple works studying privacy leakage from representation models, which are trained to produce rich embedding vectors that can be applied to several downstream learning tasks. In two of the the seminal works on privacy leakage from embedding models~\cite{song-2020-overlearning, song-2020-embeddingleakage}, the authors find that unsupervised representation models can leak information about the training set that is uncorrelated with the learning task. The latter of these two works investigates a variety of attacks, such as membership-inference and inversion, on word embeddings. In~\cite{liu2021encodermi}, the authors observe that vision encoders trained with contrastive loss produce embedding vectors that are robust to augmentations, such as rotations and horizontal flips, of images that were included in training. By leveraging this fact, along with reference (or shadow) models to calibrate the attack, they mount successful membership-inference attacks on encoders.

\subsubsection{Privacy Attacks on Groups} 
While there are countless studies on individual privacy attacks, few works have studied attacks with coarser granularity than MIA and extraction but finer than property inference. Such privacy attacks can leverage the the composition of privacy leakage over correlated groups of sensitive samples, such as multiple messages or logs from a single device~\cite{mcmahan-2018-learningdplm}, to amplify privacy attacks on a single individual. One study on dual encoder models~\cite{song-2020-embeddingleakage} proposed attacks to infer membership of multiple sentences in a language model's context at once by measuring the similarity between context sentences in representation space. A similar work on facial detection models~\cite{chen2023faceauditor} shows that an adversary can infer inclusion of a person's contribution of images to the model's training data. Both of these works use large reference datasets to calibrate the adversary's attack using learned similarity scores and shadow models, respectively, from known members and non-members.

Recent work has started to explore user or distribution-level attacks with weaker assumptions. For example,~\cite{kandpal2023userinference} explores the privacy risks associated with fine-tuning generative large language models on data that are user-partitioned. This work uses similar techniques to the membership-inference literature, but generalizes the distinguishing test to the level of a given user's entire contribution to a dataset. This is accomplished by first observing the prediction confidences of a fine-tuned language model on a user's sentences that were not seen during training which assumes the same data access as our \textbf{weak} adversary. Their attack then calibrates the confidence scores against the prediction confidences of a pretrained reference model for which the user's contributions are assumed to be non-members. One related work to user level attacks \cite{suri2023subjectmembershipinferenceattacks} investigates leakage of subjects, or individuals, in cross-silo federated learning, where the adversary uses query access to the model after each federation round to determine a subject's inclusion in the training data.  

In a similar vein, the extension of property inference proposed by \cite{chaudhari-2022-snap} and \cite{hartmann-2023-distributioninference} aims to infer the membership of subpopulations in general rather than a particular user. The adversaries considered in both works have a similar level of access to the \textbf{weak} adversary. However, these works assume that the adversary has a sufficient number of samples from the target subpopulation and computational power to train several reference models. Recent related work has also investigated an adversary's ability to infer membership at the resolution of entire datasets in text corpora used to train large language models~\cite{maini-2024-datasetinference, ren-2025-datasetself}.

\begin{figure*}[t!]
    \centering
    \includegraphics[width=0.8\linewidth]{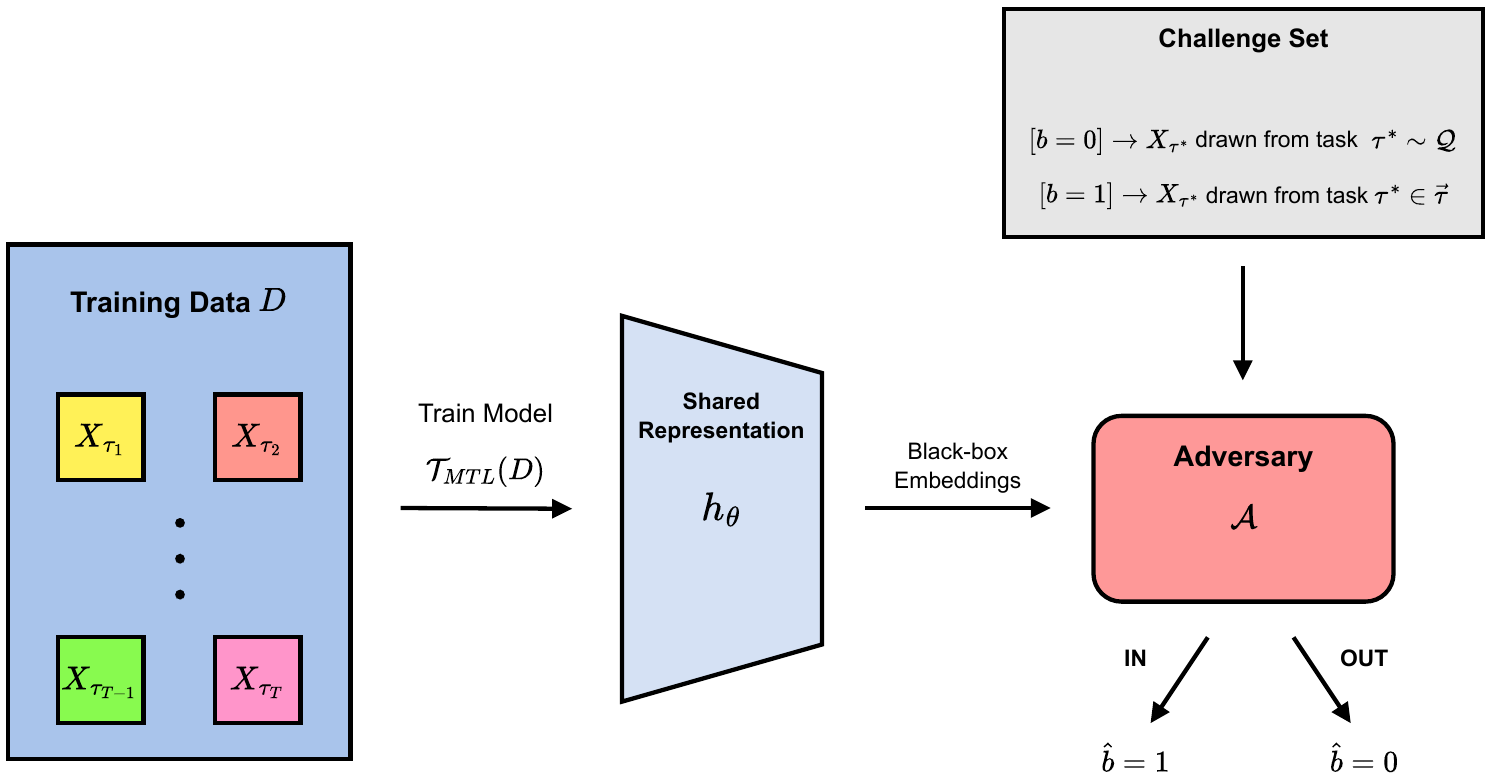}
    \caption{Security Game for Task Inference Attacks}
    \label{fig:threat-model}
\end{figure*}

\section{Threat Model} \label{sec:threatmodel}

Our goal is to assess whether shared representations reveal information about the specific tasks that they were trained on and to what degree they reveal this information. Towards this, we study privacy leakage in shared representations through the lens of inference attacks~\cite{homer-2008-mi, shokri-2016-mi, ateniese-2015-property_inference}. In this setting, there is an MTL model that is trained on several tasks simultaneously to learn a shared representation, or encoder, and individual task layers. Queries can then be made to the encoder to receive representation vectors, or embeddings, for any given input. A challenge task is drawn from the same distribution as the tasks used to train the MTL model. Our adversary uses their query access to the shared representation, along with data drawn from the challenge task, to infer whether the challenge task was included while training the MTL model. In this work, we study this leakage from a purely black-box perspective, only assuming query access and no knowledge of the underlying task distribution to train shadow models as is common in prior works~\cite{carlini-2022-lira}. Additionally, we reveal a separation between a \emph{strong adversary} that receives a set of the actual training data from the challenge task, and a \emph{weak adversary} that receives non-training data drawn from the same distribution as the challenge task. We can describe our threat model using the following security game between a \textit{challenger} and an \textit{adversary}:




    













 \begin{enumerate}[labelsep=*,leftmargin=15pt]
     \item[]  \begin{center}
         \textbf{Task Inference Security Game}
     \end{center} 
     \item The challenger receives $T$ tasks $\vec{\tau} = ( \tau_1, \dots, \: \tau_T )$, drawn from a task distribution $\mathcal{Q}$. For each of the tasks, $\tau_i$, the challenger is given a batch of samples $X_i$ and concatenates the $T$ batches into a dataset $D = \{X_{\tau_1}, \dots, X_{\tau_T} \}$.
     \item The challenger trains a \textit{shared representation} $h_\theta~\gets~\mathcal{T}_{\mathit{MTL}}(D)$ by simultaneously learning \emph{task-specific} models that share $h_\theta$.
     \item The challenger randomly selects $b \in \{0, 1\}$. If  $b = 0$, the challenger samples a challenge task $\tau^*$ from $\mathcal{Q}$ uniformly at random, such that $\tau^* \notin \vec{\tau}$. Otherwise, the challenger samples $\tau^*$ from $\vec{\tau}$ uniformly at random. 
     \item The challenger sends a batch of samples, $X^*$, drawn from the challenge task, $\tau^*$.
     \item The adversary, using the batch of samples and black-box access to $h_\theta$, guesses a bit $\hat{b} \gets \mathcal{A}(h_\theta(X_{\tau^*}))$.
     \item The adversary wins if $\hat{b} = b$ and loses otherwise.
 \end{enumerate}

In this security game the adversary only requires samples drawn from the challenge task rather than the specific training samples from $D$. We claim that a task-inference adversary does not necessarily need data from a member of the training dataset to perform an inference attack on a task that contains their data. Semantically, if a multitask model is trained over sparse data from several tasks, our proposed adversary can use \textit{fresh data} from a given task's data distribution to infer whether the task was included in the training dataset. We use the following terms to define this distinction: A task-inference adversary is \textbf{strong} if they have access to training samples when $b=1$ or \textbf{weak} if they \emph{do not} have access to training samples when $b=1$. In either setting, a task-inference adversary can mount attacks to infer inclusion of a task in training. To support this claim, we show a separation between the \textbf{strong} and \textbf{weak} adversaries in Section~\ref{sec:mean_est} and empirically verify this separation in Section~\ref{sec:eval}. 

Our proposed threat model interpolates those of existing property existence attacks~\cite{chaudhari-2022-snap} and user inference attacks~\cite{kandpal2023userinference}. Suppose that the challenger trains a MTL model over many users, such that each task belongs to a unique user.  In this setting, the adversary is distinguishing between worlds where a MTL model was or was not trained on a specific user's data. Detecting the inclusion of a task in MTL training yields the same privacy violation as user inference. If the task only includes one sample per user, the attack reduces to membership inference where each sample belongs to a member. When MTL is used to train tailored classifiers for multiple learning problems, each with limited data, our task-inference adversary is distinguishing between worlds where the model was or was not trained on a particular pair of samples and labels. In other words, the adversary is attempting to determine if samples that satisfy the labeling for a given task were included in MTL training. Here, we can think of tasks as properties or distributions within the training dataset since the inclusion or noninclusion of a task determines whether or not a particular distribution of labels over samples was learned during MTL training. Thus, inferring the inclusion of a task yields a similar privacy violation to property existence~\cite{chaudhari-2022-snap}, where an adversary can determine the subpopulations that compose the dataset.

Additionally, we note that the adversary described in the security game operates in a black-box manner, only requiring query access to the representation $h_\theta$. In contrast to prior work on membership and property inference, our adversary \emph{does not} require shadow models nor sampling access to the underlying data distribution~\cite{carlini-2022-lira, mahloujifar2022propertypoisoning,shokri-2016-mi, liu2021encodermi, abascal2023tmi}. Our task-inference adversary's access can be viewed as similar to that of the membership inference adversary described in \cite{liu2021encodermi}. In this work, the adversary makes black-box queries to a representation model and receives embedding vectors for multiple correlated samples which are then used to mount an attack. In contrast, the adversary proposed by~\citet{liu2021encodermi} also makes black-box queries to the shared representation but has access to additional samples from the training data distribution to train shadow models, whereas we assume pure black-box access.
\section{Analyzing Task-Inference} \label{sec:mean_est}

To motivate further exploration of attacks on deep learning models, we present a simplified analog of multitask learning via mean estimation. Proofs of our theorems in this Section can be found in Appendix~\ref{sec:proofs}.\\ 

\noindent Suppose that there exists a set of \emph{task means} $M = \{\mu_1, \dots, \mu_T\}$ which are sampled i.i.d.\ from $\mathcal{N}(\bar{\mu}, \bar{\sigma}^2 \mathbb{I}_d)$ where $\bar{\mu}$ is the "true" mean that we would like to estimate. For each "task" $\mu_i$, we can sample $N$ (where $N$ is relatively small) points i.i.d. from $\mathcal{N}(\mu_i, \sigma^2 \mathbb{I}_d)$ and store them in the set $X_i = \{x_{i, 1}, \dots, x_{i, N} \}$. Since no individual dataset $X_i$ would yield an accurate estimate of $\bar{\mu}$ we can instead compute the sample "multitask" mean using all of the $X_i$'s as 
\[
    \hat{\mu} = \frac{1}{T}\sum_{i=1}^{T} \left( \frac{1}{N} \sum_{j=1}^{N} X_{i,j} \right)
\]

\noindent This "multitask" mean can be viewed as analogous to the weights of the shared representation in multitask learning, because we average over data, $X_i$, sampled from a mixture of Gaussian distributions, $\mathcal{N}(\mu_i, \sigma^2 \mathbb{I}_d)$, which share features, to produce an accurate estimate of the underlying common parameter, $\bar{\mu}$. We note that $\hat{\mu}$ is Gaussian with expectation and covariance

\[
\ex{\mu, X}{\hat{\mu}} = \bar{\mu} \qquad
\cov{}{\hat{\mu}} = \Big( \frac{\bar{\sigma}^2}{T} + \frac{\sigma^2}{N \cdot T} \Big) \cdot \mathbb{I}_d
\]

\begin{figure*}[t!]
    \centering
    \begin{subfigure}[c]{0.40\textwidth}  
        \centering
        \includegraphics[width=\textwidth]{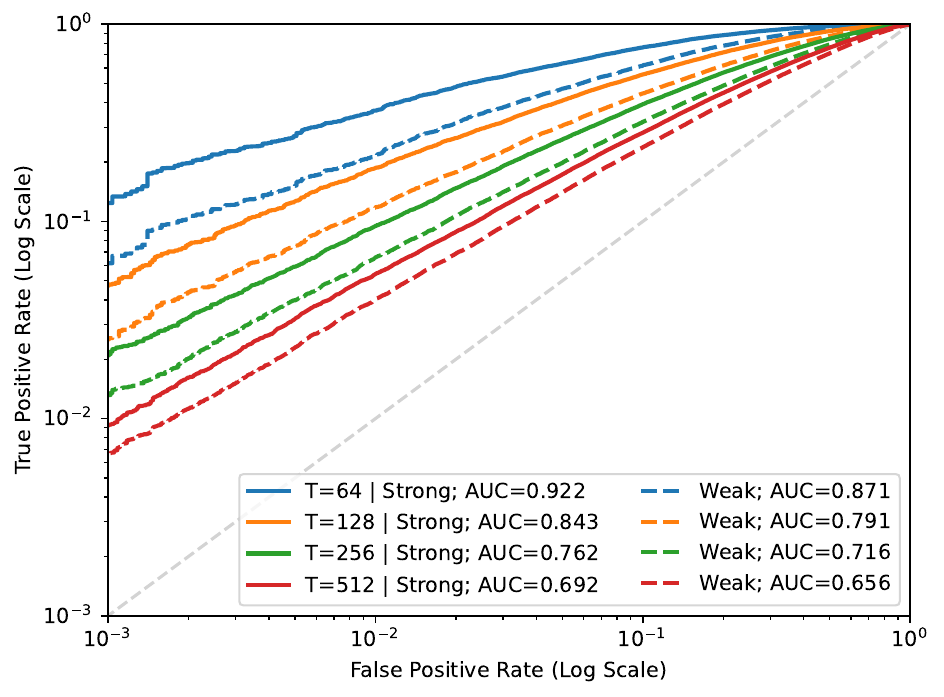}
        \caption{ROC Curves for Strong and Weak Tracing Adversaries}
        \label{fig:attack_roc}
    \end{subfigure}
    \quad \quad \quad \quad \quad
    \begin{subfigure}[c]{0.45\textwidth}  
        \centering
        \includegraphics[width=\textwidth]{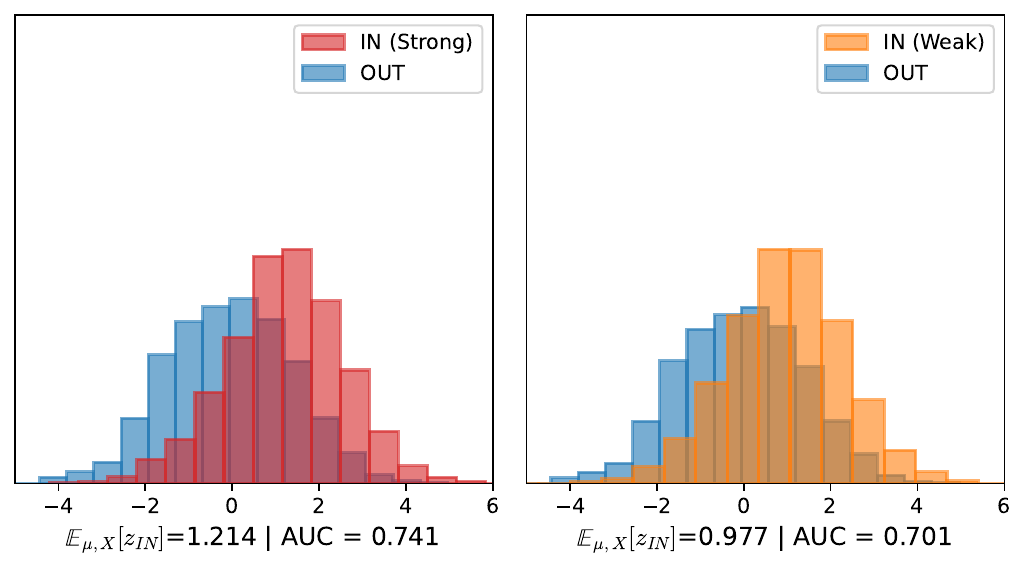}
        \caption{Distribution of Test Statistic $z$}
        \label{fig:test_statistic_mean}
    \end{subfigure}
    \caption{Results of Our Tracing Attack on Multitask Mean Estimation ( $T = 256$; $d=256$; $N=8$; $k=4$ )}
    \label{fig:mean_est_sim}
\end{figure*}

\subsection{Tracing Attack for Task Inference} \label{sec:tracing}

Here, we construct an adversary, based on prior work on tracing~\cite{dwork-2015-trace}, with similar variants to the \textbf{strong} and \textbf{weak} adversaries detailed in Section~\ref{sec:threatmodel}. We consider a challenger who releases the statistic $\hat{\mu}$ and an adversary who wants to learn whether a given task $\mu_i$ was included in the dataset that was used to compute $\hat{\mu}$.

\noindent One possible attack would be the following:

\begin{enumerate}[labelsep=*,leftmargin=15pt]
    \item The adversary receives a challenge set, or batch of data, $B$, from the challenger where $| B | = k \leq N$, such that the samples come from a task that was used to compute $\hat{\mu}$, $\mu_{IN} = \mu_i$, or a task that was \emph{not} used to compute $\hat{\mu}$ but comes from the same underlying task distribution, $\mu_{OUT} \sim \mathcal{N}(\bar{\mu}, \bar{\sigma}^2 \mathbb{I}_d)$
    \item First, the adversary computes the sample mean of the batch
    \[
    \mu_{B} = \frac{1}{k} \sum_{x \in B}{x} 
    \]
    \item Then, the adversary computes the test statistic
    \[
    z = \inn{\hat{\mu} - \Bar{\mu}}{\mu_B - \Bar{\mu}}
    \]
    \item Lastly, the adversary performs a thresholded classification, returning $z > \gamma$ for some threshold $\gamma$
\end{enumerate}

Our adversary adapts a test statistic originally intended for membership-inference~\cite{dwork-2015-trace} that measures the correlation between the released statistic and the samples available to the adversary. However, unlike the membership-inference setting, the adversary is attempting to detect traces of the \textit{task}, or data's distribution, $\mu_i$, rather than any particular sample from one of the datasets $X_i$. In the following theorems, we find that the inclusion of a task, $\mu_i$, can be inferred by both the \textbf{strong} and \textbf{weak} adversaries, and the \textbf{strong} adversary gets a slight advantage from having access to samples in $X_i$.

\begin{thm}[Strong Adversary]  \label{thm:strong_adv}

Let $\tau$ be the index of the target task and suppose that the challenger sends the adversary a challenge set of $k$ samples $B$ such that, when the task is \textbf{IN}, the $k$ samples are drawn uniformly at random from $X_\tau$. Then the expected value of the statistic, $z$, when $\mu_\tau$ is \textbf{OUT} is 
    \[
        \ex{\mu, \: X}{z_{\textit{OUT}}} = 0
    \]
    and when  $\mu_\tau$ is \textbf{IN}, the expected value of $z$ is
    \[
        \ex{\mu, X}{z_{\textit{IN}}} = \frac{d}{T} \Big( \bar{\sigma}^2 + \frac{\sigma^2}{N} \Big)
    \]
\end{thm}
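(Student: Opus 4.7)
The plan is to expand both mean-shifted vectors appearing in $z$ as averages and use linearity of expectation. Writing $\hat{\mu} - \bar{\mu} = \frac{1}{TN}\sum_{i,j}(X_{i,j} - \bar{\mu})$ and $\mu_B - \bar{\mu} = \frac{1}{k}\sum_{x \in B}(x - \bar{\mu})$, the test statistic becomes
\[
z = \frac{1}{kTN}\sum_{i=1}^{T}\sum_{j=1}^{N}\sum_{x \in B} \langle X_{i,j} - \bar{\mu},\, x - \bar{\mu}\rangle,
\]
so the computation reduces to evaluating the expectation of each inner product. These expectations split into a small number of cases depending on whether $i = \tau$ and, when $i = \tau$, whether the sample $X_{\tau,j}$ coincides with the batch element $x \in B$.

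For the OUT case, the target task $\mu_\tau \sim \mathcal{N}(\bar{\mu}, \bar{\sigma}^2 \mathbb{I}_d)$ is drawn fresh, independent of the training data, so $B$ is independent of $\hat{\mu}$. Since $\mathbb{E}[\mu_B] = \mathbb{E}[\mu_\tau] = \bar{\mu}$ and $\mathbb{E}[\hat{\mu}] = \bar{\mu}$, coordinate-wise independence lets me factor the expected inner product as $\langle \mathbb{E}[\hat{\mu} - \bar{\mu}],\, \mathbb{E}[\mu_B - \bar{\mu}]\rangle = 0$, yielding $\mathbb{E}[z_{\textit{OUT}}] = 0$.

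For the IN case, I would split the outer sum over $i$ into $i \neq \tau$ and $i = \tau$. Terms with $i \neq \tau$ vanish in expectation because $X_{i,j}$ is independent of $B$ and has mean $\bar{\mu}$. For $i = \tau$ I further split according to whether $X_{\tau,j} = x$ or not. In the \emph{diagonal} case, the expected inner product equals $\mathbb{E}[\|X_{\tau,j} - \bar{\mu}\|^2] = d(\bar{\sigma}^2 + \sigma^2)$, by conditioning on $\mu_\tau$ and using the law of total variance. In the \emph{off-diagonal} case, $X_{\tau,j}$ and $x$ are conditionally independent given $\mu_\tau$, so the expectation equals $\mathbb{E}[\|\mu_\tau - \bar{\mu}\|^2] = d\bar{\sigma}^2$. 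Each of the $k$ challenge samples pairs with exactly one diagonal and $N-1$ off-diagonal indices $j$, producing a total contribution of $k \cdot \big(d(\bar{\sigma}^2 + \sigma^2) + (N-1) d\bar{\sigma}^2\big) = k \cdot d(N\bar{\sigma}^2 + \sigma^2)$, which after dividing by $kTN$ gives $\frac{d}{T}(\bar{\sigma}^2 + \sigma^2/N)$ as claimed.

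The main obstacle is the combinatorial bookkeeping in the IN case: one must correctly count how many $(i,j,x)$ triples yield nonzero expected correlations and avoid double-counting the diagonal versus off-diagonal contributions. The symmetry of the uniform sampling of $B$ as a size-$k$ subset of $X_\tau$ simplifies this, since the counts above do not depend on which particular subset is drawn, and one can condition on any fixed choice of $B$ before computing the expectation over the remaining randomness in $M$ and the $X_i$'s.
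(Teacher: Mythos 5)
Your proposal is correct and follows essentially the same route as the paper's proof: factor the OUT expectation using independence of $\hat{\mu}$ and $\mu_B$, then in the IN case split the sum by task index and by whether a training sample coincides with a batch element, counting $k$ diagonal terms contributing $d(\bar{\sigma}^2+\sigma^2)$ and $k(N-1)$ off-diagonal terms contributing $d\bar{\sigma}^2$. The only cosmetic difference is that you center every sample by $\bar{\mu}$ before expanding, which makes the $i\neq\tau$ terms vanish immediately, whereas the paper expands the uncentered product and cancels the resulting $\bar{\mu}^2$ terms at the end.
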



In Theorem~\ref{thm:strong_adv}, we see that the expectation of the \textbf{strong} adversary's test statistic, $z$, primarily grows in the dimension of the data divided by the total number of tasks used to learn the multitask mean, $\hat{\mu}$. Through this result, we demonstrate that detecting traces of tasks can be seen as a generalization of membership inference, as setting the number of tasks $T=1$, sampling $N$ datapoints, $X$, only from the single task to estimate the task mean, and allowing the adversary to use $k=1$ sample reduces $z_{IN}$ to the membership-inference test statistic, which has expectation $\Theta \left( \frac{d}{N} \right)$. In our setting, if the adversary wants to trace a single sample in the multitask mean, the expectation of their statistic would be $\Theta \left( \frac{d}{NT} \right)$. Because the total dataset size for our mean estimation is $T\cdot N$ with $N \ll T$, tracing tasks is an easier objective for the adversary than tracing individual samples, as there a larger separation between the distributions of $z_{IN}$ and $z_{OUT}$. Moreover, because the dataset is composed of multiple task distributions, a \textbf{weak} task-inference adversary can mount attacks by using data that comes from one of the included tasks but was never used in the estimation of $\hat{\mu}$.

\begin{thm}[Weak Adversary] \label{thm:weak_adv}
    Let $\tau$ be the index of the target task and suppose the challenger sends the adversary a challenge set of $k$ samples $B$ such that, when the task is \textbf{IN}, \textbf{the $k$ samples are drawn i.i.d. from the same distribution as $X_\tau$, $\mathcal{N}(\mu_\tau, \sigma^2 \mathbb{I}_d)$.} Then the expected value of the statistic, $z$, when $\mu_\tau$ is \textbf{OUT} is 0. When $\mu_\tau$ is \textbf{IN}, the expected value of $z$ is 
    \[
        \ex{\mu, X}{z_{\textit{IN}}} = \frac{d}{T} \bar{\sigma}^2
    \]
\end{thm}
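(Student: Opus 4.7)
The plan is to compute $\mathbb{E}[z]$ in both cases by decomposing $\hat{\mu} - \bar{\mu}$ as a per-task sum, exploiting linearity of the inner product, and then using independence to kill most of the cross terms. The main conceptual point—and the reason this differs from Theorem~\ref{thm:strong_adv}—is that when the challenge samples $B$ are drawn \emph{fresh} from $\mathcal{N}(\mu_\tau, \sigma^2 \mathbb{I}_d)$, the within-task sampling noise in $\mu_B$ is independent of the within-task sampling noise in $\hat{\mu}$, so only the ``task-mean'' component survives.

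Concretely, I would write $\hat{\mu} - \bar{\mu} = \frac{1}{T}\sum_{i=1}^T (\bar{X}_i - \bar{\mu})$ where $\bar{X}_i = \frac{1}{N}\sum_j X_{i,j}$, and further decompose $\bar{X}_i = \mu_i + \eta_i$ with $\eta_i \sim \mathcal{N}(0, \tfrac{\sigma^2}{N}\mathbb{I}_d)$ independent of $\mu_i$. Similarly, write $\mu_B = \mu_\tau + \zeta$ where $\zeta \sim \mathcal{N}(0, \tfrac{\sigma^2}{k}\mathbb{I}_d)$ is independent of everything else in the experiment, since by assumption $B$ consists of fresh samples. The \textbf{OUT} case is immediate: $\mu_\tau$ is then sampled independently of $\hat{\mu}$ from $\mathcal{N}(\bar{\mu}, \bar{\sigma}^2 \mathbb{I}_d)$, so $\mu_B - \bar{\mu}$ has mean zero and is independent of $\hat{\mu} - \bar{\mu}$, giving $\mathbb{E}[z_{\textit{OUT}}] = 0$.

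For the \textbf{IN} case, linearity gives
\[
\mathbb{E}[z_{\textit{IN}}] \;=\; \frac{1}{T}\sum_{i=1}^T \mathbb{E}\bigl[\langle (\mu_i - \bar{\mu}) + \eta_i, \; (\mu_\tau - \bar{\mu}) + \zeta \rangle\bigr].
\]
For each $i \neq \tau$, the pair $(\mu_i, \eta_i)$ is independent of $(\mu_\tau, \zeta)$, and each factor has mean zero, so that term vanishes. For $i = \tau$, I expand the inner product into four pieces. Three of them vanish in expectation: $\langle \mu_\tau - \bar{\mu}, \zeta\rangle$ and $\langle \eta_\tau, \mu_\tau - \bar{\mu}\rangle$ by independence of $\zeta$ and $\eta_\tau$ from the other factor, and $\langle \eta_\tau, \zeta\rangle$ because $\zeta$ (the fresh-sample noise) is independent of $\eta_\tau$ (the training-sample noise) and both have mean zero. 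The only surviving piece is $\mathbb{E}\|\mu_\tau - \bar{\mu}\|^2 = d\bar{\sigma}^2$, so the overall expectation is $\frac{d}{T}\bar{\sigma}^2$.

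The main obstacle, and the pedagogical point of stating this alongside Theorem~\ref{thm:strong_adv}, is to justify cleanly why the $\sigma^2/N$ term that appears for the strong adversary disappears here. I would emphasize in the write-up that this reduction follows solely from $\zeta \perp \eta_\tau$, which in turn is exactly the structural difference between ``fresh'' and ``training'' challenge samples. Once that independence is noted, the calculation is routine and parallels the strong-adversary proof nearly verbatim, dropping only the term corresponding to the shared within-task noise.
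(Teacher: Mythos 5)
Your proposal is correct and follows essentially the same route as the paper's proof: both isolate the $i=\tau$ term, use that the fresh challenge samples are conditionally independent of the training samples $X_\tau$ given $\mu_\tau$, and conclude that only the task-mean correlation $\ex{\mu}{\|\mu_\tau-\bar{\mu}\|^2}=d\bar{\sigma}^2$ survives. The only difference is cosmetic bookkeeping—you center everything first via the decomposition $\bar{X}_i=\mu_i+\eta_i$, $\mu_B=\mu_\tau+\zeta$ and kill cross terms by independence, whereas the paper expands the uncentered second moment $\ex{\mu,X}{\hat{\mu}\mu_B}-\bar{\mu}^2$ coordinate-wise and cancels the $\bar{\mu}^2$ contributions at the end.
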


Informally, Theorem~\ref{thm:weak_adv} shows that the expectation of the weak adversary's test statistic depends only on the number of tasks, and does not benefit from a smaller number of total training samples. Similar to the \textbf{strong} case, when analyzing the \textbf{weak} case, we split the statistically dependent and independent components of the sum, as the batch is drawn from the underlying task distribution for some fixed task which was included in the estimate. In contrast with the \textbf{strong} case, there is no overlap between the adversary's challenge set and the training data. Thus, we need not account for overlapping $X_{\tau,j}$'s. Rather, we only need to consider that the challenge set samples come from the same distribution, or task, as one of the tasks in the dataset used to compute $\hat{\mu}$. \\ 

\begin{thm}[Variance of $z$] \label{thm:variance}
    The variance of $z$ when $\mu_\tau$ is \textbf{OUT} is
    \[
        \var{\mu, X}{z_{\textit{OUT}}} = \frac{d}{T} \Big[ \bar{\sigma}^4 + \frac{\sigma^4}{kN} + \Big( \frac{k+N}{kN} \Big) ( \bar{\sigma}^2\sigma^2 ) \Big]
    \]
    \\
    When $\mu_\tau$ is \textbf{IN}, 
    \[
        \var{\mu,X}{{z}_{\textit{IN}}} \leq {3} \var{\mu,X}{{z}_{\textit{OUT}}}
    \]
\end{thm}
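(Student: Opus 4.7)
The plan is to exploit the joint Gaussianity of $(\hat{\mu}, \mu_B)$ to reduce both variance calculations to a single clean scalar formula. First I would center by $\bar{\mu}$, setting $W = \hat{\mu} - \bar{\mu}$ and $V = \mu_B - \bar{\mu}$, and expand each as a linear combination of the independent Gaussian pieces $\delta^{(i)} = \mu_i - \bar{\mu}$ and $\epsilon^{(i,j)} = X_{i,j} - \mu_i$. Because every $\delta^{(i)}$ and $\epsilon^{(i,j)}$ has an isotropic covariance, the joint law of $(W, V)$ decouples across coordinates: the $d$ pairs $(W_\ell, V_\ell)$ are i.i.d.\ mean-zero bivariate Gaussians with variances
\[
\sigma_W^2 = \frac{1}{T}\left(\bar{\sigma}^2 + \frac{\sigma^2}{N}\right), \qquad \sigma_V^2 = \bar{\sigma}^2 + \frac{\sigma^2}{k},
\]
and a common covariance $c$ (the same for every $\ell$ by symmetry).

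Next, writing $z = \sum_\ell W_\ell V_\ell$ as a sum of $d$ i.i.d.\ products of jointly Gaussian mean-zero variables, I would apply Isserlis' theorem to the bivariate Gaussian $(W_1, V_1)$ to obtain $\mathbb{E}[W_1^2 V_1^2] = \sigma_W^2 \sigma_V^2 + 2c^2$, which collapses $\mathrm{Var}(z)$ to the identity
\[
\mathrm{Var}(z) \;=\; d\left(\sigma_W^2 \sigma_V^2 + c^2\right).
\]
For the OUT case, $\mu_\tau$ is drawn fresh and is independent of everything in $\hat{\mu}$, so $c = 0$ and the stated formula for $\mathrm{Var}(z_{\mathrm{OUT}})$ falls out by expanding $\sigma_W^2 \sigma_V^2$ term by term.

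For the IN case, the only shared randomness between $W$ and $V$ consists of $\delta^{(\tau)}$ together with the $\epsilon^{(\tau,j)}$ for $j$ in the batch (strong adversary), or just $\delta^{(\tau)}$ (weak adversary). A direct moment computation gives $c = \frac{1}{T}(\bar{\sigma}^2 + \sigma^2/N)$ in the strong case, which is consistent with $\mathbb{E}[z_{\mathrm{IN}}] = dc$ from Theorem~\ref{thm:strong_adv}, and the strictly smaller value $c = \bar{\sigma}^2/T$ in the weak case. Therefore $\mathrm{Var}(z_{\mathrm{IN}}) = \mathrm{Var}(z_{\mathrm{OUT}}) + dc^2$, and the target bound reduces to $dc^2 \leq 2\,\mathrm{Var}(z_{\mathrm{OUT}})$, i.e., $\frac{1}{T}\left(\bar{\sigma}^2 + \sigma^2/N\right) \leq 2\left(\bar{\sigma}^2 + \sigma^2/k\right)$, which follows immediately from $T \geq 1$ and $k \leq N$.

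The main obstacle is the Isserlis step: one must verify that the cross-covariance matrix $\mathrm{Cov}(W, V)$ really is isotropic (a scalar multiple of $\mathbb{I}_d$), which follows from the coordinate-wise independence of the $\delta^{(i)}$ and $\epsilon^{(i,j)}$ but is easy to overlook in the messy IN-case decomposition. Without that structural fact, one cannot collapse $\mathrm{Var}(\langle W, V\rangle)$ to the single-coordinate bivariate calculation; once it is in hand, both halves of the theorem follow by elementary algebra.
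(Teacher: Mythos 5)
Your proposal is correct, and for the \textbf{IN} case it takes a genuinely different route from the paper. For the \textbf{OUT} case the two arguments coincide: independence of the centered statistic and the centered batch mean gives $\mathrm{Var}(\alpha_\ell \beta_\ell)=\mathrm{Var}(\alpha_\ell)\mathrm{Var}(\beta_\ell)$ per coordinate, and summing over the $d$ independent coordinates yields the stated product. For the \textbf{IN} case, the paper does not compute the variance exactly; it writes $\mathrm{Var}(\alpha_\ell\beta_\ell)=\mathbb{E}[(\alpha_\ell\beta_\ell)^2]-\rho^2\mathrm{Var}(\alpha_\ell)\mathrm{Var}(\beta_\ell)$, bounds the second moment by Cauchy--Schwarz together with the Gaussian fourth-moment identity $\mathbb{E}[\alpha_\ell^4]=3\mathrm{Var}(\alpha_\ell)^2$, and discards the nonnegative $\rho^2$ term to obtain the factor of $3$. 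You instead invoke Isserlis on the jointly Gaussian pair $(W_\ell,V_\ell)$ to get the exact identity $\mathrm{Var}(z_{\mathrm{IN}})=d(\sigma_W^2\sigma_V^2+c^2)=\mathrm{Var}(z_{\mathrm{OUT}})+dc^2$, with $c$ recovering the means from Theorems~\ref{thm:strong_adv} and~\ref{thm:weak_adv}, and then check $dc^2\le 2\,\mathrm{Var}(z_{\mathrm{OUT}})$ via $c=\sigma_W^2\le 2\sigma_V^2$ (using $T\ge 1$ and $k\le N$). Your route requires verifying joint Gaussianity and isotropy of the cross-covariance (which does hold, since $W$ and $V$ are linear combinations of the independent isotropic pieces $\delta^{(i)}$ and $\epsilon^{(i,j)}$), but it buys a strictly sharper conclusion: your exact formula shows $\mathrm{Var}(z_{\mathrm{IN}})\le(1+\tfrac{1}{T})\mathrm{Var}(z_{\mathrm{OUT}})$, considerably tighter than the paper's factor of $3$.
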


We note that the \textbf{strong} adversary always achieves greater attack success rates than the \textbf{weak} adversary because the distance between the expected value of the test statistic, $z$, when the task is \textbf{IN} or \textbf{OUT} is strictly larger than that of the \textbf{weak} adversary at a similar level of variance. 
Furthermore, this attack on mean estimation shows that the adversary's success is dependent on two distinct parts: the knowledge they receive from having \textit{member} data (i.e. the $\frac{d }{TN} \sigma^2 $ term) and the knowledge they have about the \textit{distributions}, or \textit{tasks}, that compose the dataset (i.e. the $\frac{d}{T} \bar{\sigma}^2$ term). To empirically verify our results, we simulated the attack with parameters $T = 256$, $d=256$, $N=8$, and $k=4$. The ROC curves and distribution of $z$ for both the \textbf{strong} and \textbf{weak} adversaries are shown in Figure~\ref{fig:mean_est_sim}.

\section{Methodology} \label{sec:methodology}

In this section, we realize the threat model in Section~\ref{sec:threatmodel} for machine learning models by introducing two efficient, purely black box attacks on shared representations. 


\subsection{Our Task Inference Attacks}

Here, we describe two simple, purely black-box, efficient attack algorithms that are most applicable to situations where MTL is used to train a model over few samples per task (e.g. \textit{personalization}). Our attack algorithms are inspired by previous work on membership-inference attacks on image encoder models~\cite{liu2021encodermi}, where the key insight is following: For any given image, embeddings produced by well-fit encoder models are robust to augmentations, such as random rotations and horizontal flips. Computing the similarity score of the embeddings of augmentations results in a statistic that has high membership-inference signal. 

In this work, we hypothesize that rich, shared representations learned in MTL and collaborative learning settings produce embeddings where samples from the same task exhibit both positive and negative codependencies, similar to the relationship between augmentations of the same image studied by~\cite{liu2021encodermi} and~\cite{song-2020-embeddingleakage} in the contrastive learning setting. We also believe that, in well-generalized machine learning models, some amount of correlation between samples from the same task distribution in the shared representation space exists regardless of how samples are labeled within the MTL optimization. 
%

\subsubsection{Coordinate-Wise Variance Attack (Algorithm~\ref{alg:var_attack})} To mount the \textit{coordinate-wise variance attack}, the adversary first queries the shared representation on a batch of $k$ data samples from a given task, then aggregates the embeddings into a set $E = \{ h_\theta(x_1), \dots, h_\theta(x_k) \}$. Then, the adversary computes the empirical covariance matrix of $E$ and takes the trace divided by the dimension of the embedding vectors to be the task-inference statistic, $z$. This is equivalent to computing the sum of coordinate-wise variances of the embeddings. Lastly, the adversary sets a threshold $\gamma$ such that any task with $z > \gamma$ is labeled as $IN$, and any task with $z < \gamma$ is labeled as $OUT$.

\begin{algorithm}[H]
    \caption{Coordinate-Wise Variance Attack}
    \label{alg:var_attack}
    \begin{algorithmic}[1]
       \State {\bfseries Input:} A shared representation $h_\theta$, a challenge set  $X_\tau$
       \State $E = \{ \}$ 
       \For{$x_{i} \in X_\tau$}
           \State $e_i \gets h_\theta(x_i)$ \Comment{Query representation on challenge set}
           \State $E \gets E \cup \{ e_i \}$
       \EndFor
       \State Compute $Q$, the empirical covariance matrix of $E$
       \State \Return $tr(Q)$ \Comment{Return avg.\ coordinate-wise variance}
    \end{algorithmic}
\end{algorithm}

\subsubsection{Pairwise Inner Product Attack (Algorithm~\ref{alg:inner_attack})} For our second attack, we once again use the fact that the shared representation produces close embeddings for data from the same task, but we measure similarity of the entire embedding vectors rather their individual coordinates by taking their \textit{inner products}. As in the previous attack, the adversary first queries the shared representation and aggregates the embeddings $E$. Then, for each unique pair of data samples $(x_i, x_j) \: i \neq j$, the adversary computes the absolute value of the inner product of their embeddings, $z$, and stores the value in a set $S$. The adversary then takes the mean of the $z$'s ($\Bar{S}$) and applies a threshold $\gamma$ as in the variance attack. Alternatively, the adversary can compute the normalized inner products, or \emph{cosine similarities}, of each pair of embeddings. In our evaluation, cosine similarity yields better attack performance on language models, while the standard inner product attack achieves higher true positive rates on vision models.

\begin{algorithm}[H]
    \caption{Pairwise Inner Product Attack}
    \label{alg:inner_attack}
    \begin{algorithmic}[1]
       \State {\bfseries Input:} A shared representation $h_\theta$, a challenge set  $X_\tau$
       \State $E = \{ \}$ 
       \For{$x_{i} \in X_\tau$}
           \State $e_i \gets h_\theta(x_i)$ \Comment{Query representation on challenge set}
           \State $e_i 
           \gets W (e_i - \bar{e}) $ \Comment{Apply whitening}
           \State $E \gets E \cup \{ e_i \}$ \Comment{Store normalized embedding}
       \EndFor
       \State $S = \{ \}$
       \For{all unique pairs $(e_i, \: e_j); \: i \neq j; \: e_i, e_j \in E$}
           \If{\texttt{use\_cosine\_similarity}}
               \State $(e_i, e_j) \gets \left( \frac{e_i}{\| e_i \|}, 
               \: \frac{e_j}{\| e_j \|} \right)$
           \EndIf
           \State $z \gets \vert \inn{e_i}{e_j} \vert$
           \State $S \gets S \cup \{ z \}$
       \EndFor
       \State \Return $\Bar{S}$ \Comment{Return average pairwise inner product}
    \end{algorithmic}
\end{algorithm}

\subsubsection{Normalizing Embeddings} Unlike prior work on membership-inference~\cite{shokri-2016-mi,carlini-2022-lira} and property inference~\cite{chaudhari-2022-snap, mahloujifar2022propertypoisoning} attacks that train shadow models, our task-inference adversary requires only query access to the shared representation, eliminating the need for shadow, or reference, models to calibrate the attack. While a slightly stronger adversary could construct a labeled, auxiliary dataset of known \textbf{OUT} tasks to calibrate their attack and perform a more powerful exact one-sided test, as in~\cite{carlini-2022-lira}, we find that simple thresholding is sufficient to achieve high success rates while maintaining the purely black-box aspect of the threat model. Thus, using the adversary's sampling access to task data and query access to the shared representation, we can attempt to reduce the noise in the embedding vectors by using a \textit{whitening transformation}.

Applying a whitening transformation to the embeddings transforms them into random vectors with covariance equal to the identity matrix. By doing this, we attempt to contract the axes in the the representation space that dominate our task-inference statistics. We find that whitened embeddings often provide better signal to the adversary for our inner product attack (Algorithm~\ref{alg:inner_attack}) than raw embeddings. To compute the transformation, we first estimate the covariance matrix of the embedding space by pooling all of the embeddings available to the adversary, regardless of task or inclusion in the model's training set, and computing the regularized covariance matrix $\Sigma_{reg} = (Q + \lambda \cdot \frac{tr(Q)}{d} \mathbb{I}_d)$ where $Q$ is the sample covariance matrix, $d$ is the embedding dimension, and $\lambda$ is a small constant. Once a well-conditioned $\Sigma_{reg}$ has been estimated, a common choice for the whitening transformation would be $W = \Sigma_{reg}^{-1/2}$. To ensure that the covariance estimate is not dependent on a challenge task's data, we compute whitening transformations for each of the $2T$ tasks that we input to our attack, leaving out the data from one task each time. When applying $W$ to the embeddings, we additionally center the embeddings by computing $\bar{e}$, the sample mean of \textit{all} embedding vectors. Thus, we normalize a given embedding $e$ by computing $W(e - \bar{e})$.
\section{Evaluation} \label{sec:eval}

In this section we present a thorough evaluation of our task-inference attacks on MTL models trained for differing use cases across both vision and language domains. We showcase our attacks' success and study their performance when the adversary has access to training samples (i.e. \textbf{strong}) and when they only have access to fresh samples from the challenge task (i.e. \textbf{weak}). Additionally, we attempt to understand the factors that lead to task-inference leakage by observing how task-inference changes as a function of the MTL model's generalization loss. Additional experiments and figures can be found in Appendix~\ref{appdx:eval}. The results we present are underscored by our attacks' efficiency, taking as few as four samples per task and requiring \emph{only black-box query access} to the shared representation. In our experiments, this translates to each black-box query and task-inference prediction taking roughly \emph{one tenth of a second} on a single RTX 4090 GPU.

\subsection{MTL Training} 
In this study, we consider the original instantiation of MTL training which is described in the seminal work on the topic of MTL~\cite{caruana-1997-multitask}. We highlight that this MTL algorithm is a slight variation of centralized FedSGD~\cite{mcmahan2017FLoriginal}, one of the baseline algorithms in the original paper on collaborative learning. For all of our models, we allow all tasks to share all but the task-specific classification heads. In our experiments, the shared representation takes the form of a neural network, and the classification heads are linear. During MTL training, we perform a forward pass and route each embedding vector to its corresponding task-specific layer. Thus, in the backward pass, the loss from \textit{all} tasks is used to train the shared encoder, while only the loss from each particular task is used to train each linear layer. Algorithm~\ref{alg:mtl_training} shows the general training loop we use for MTL. In practice, we use AdamW~\cite{loshchilov-2017-adamw} for our \texttt{Update} step, and we perform shuffling over tasks and batching for efficiency. Additionally, when there is sufficient data available per task, we randomly subsample data from each task for each training epoch. 

\begin{algorithm}[h]
    \caption{Multitask Learning Training Loop}
    \label{alg:mtl_training}
    \begin{algorithmic}[1]
       \State {\bfseries Input:} A shared representation $h_\theta$, $T$ task-specific layers $\{g_{\beta_1}, \dots, g_{\beta_T}\}$, a training set split by task $D~=~\{(X_{1}, y_1 ) \dots (X_{T}, y_T )\}$, number of training rounds $t$, a loss function $\mathcal{L}$
       \For{$i \in [t]$}
            \State $\ell_{\textit{MTL}}$ $\gets 0$ \Comment{Initialize multitask loss}
            \For{$j \in [T]$} 
                \State $p_j \gets g_{\beta_j} ( \: h_{\theta}(\: X_j \:) \:)$ \Comment{Get predictions for task $j$}
                \State $\ell_{\textit{MTL}} \gets \ell_{\textit{MTL}} + \frac{1}{T} \mathcal{L}(p_{j}, y_j)$
            \EndFor
            \For{$j \in [T]$} \Comment{Update each task specific layer}
                \State $\beta_j \gets \texttt{Update} \left( \beta_j, \: \nabla_{\beta_j} \ell \right)$ 
            \EndFor
            \State $\theta \gets \texttt{Update} \left(\theta, \:  \nabla_{\theta} \ell \right)$ \Comment{Update shared representation}
        \EndFor
        \Return $h_\theta$
    \end{algorithmic}
\end{algorithm}

Since the number of samples per task can be smaller than the embedding dimension, we use several regularization techniques (detailed in Appendix~\ref{appdx:training}) to keep the task-specific linear layers from overfitting to noisy embeddings early during MTL training. To summarize, we use large values for weight decay, or $L_{2}$ regularization, learn a bottleneck on the shared representation to decrease the embedding dimension, normalize gradients~\cite{chen-2018-gradnormalization} across tasks during model updates, apply clipping to gradients, and, in our vision experiments, apply standard augmentations to training images.

\subsection{Models and Datasets}

\subsubsection{Vision Models} In our vision experiments, we use ResNet models~\cite{ResNet} of differing sizes as the shared representation for MTL. The ResNet architecture has gained widespread adoption across computer vision applications due to its computational efficiency and high performance on a variety of datasets. This model architecture makes use of residual connections, which stabilize training and convergence to help maintain high utility when trained on large image datasets. When available, such as in our CelebA experiments, we use larger ResNet models which are pretrained on the ImageNet~\cite{imagenet} dataset from the PyTorch~\cite{paszke2019pytorch} library.

\subsubsection{Language Models} For our language experiments, we use the BERT~\cite{devlin2018bert}  architecture as the shared representation. In particular, we use the downsized variants of BERT~\cite{bertsmall_1, bertsmall_2} which are pretrained on MNLI~\cite{MNLI} for downstream sequence classification.


\subsubsection{Vision Datasets} We evaluate our task-inference attacks on both the CelebA~\cite{liu-2015-celeba} faces dataset and the Federated EMNIST (FEMNIST)~\cite{caldas-2019-leafbenchmark} handwritten character and images dataset. CelebA contains high-resolution images of celebrities, each with a unique identifier, and 40 facial attributes with binary labels. The FEMNIST dataset contains 28x28 grayscale images of 62 different types of handwritten characters. In our experiments on CelebA, we split the dataset into tasks by person and by facial attribute for each of the two MTL use cases we consider. We split FEMNIST by writer and train an MTL model for character recognition, personalized to each writer's hand-drawn images.



\subsubsection{Language Datasets} To evaluate our attack in the language setting, we use the Stack Overflow~\cite{stackoverflow} dataset, which consists of posts from the Stack Overflow website with corresponding ratings, user ID, and topic tags for each post. In our evaluation, we split the posts into tasks by user ID and by topic for each of the MTL use cases. 



\subsection{Metrics}

To measure the performance of our task-inference attacks, we use metrics that are commonly found in the inference attack literature~\cite{carlini-2022-lira, zarifzadeh2024lowcostmia, ye-2022-enhanced_mi, kandpal2023userinference}. In particular, we analyze the ROC curves of our attack, which measure the relationship between true positive rate (TPR) and false positive rate (FPR). As summary statistics, we report the area under the ROC curve, or AUC, as well as the TPR at fixed low FPR (e.g. 1\%). Our evaluation distinguishes between two adversarial settings: a \textbf{strong} adversary with access to training samples and a \textbf{weak} adversary limited to auxiliary data from the challenge task that was never seen during training. Thus, we present ROC curves for each adversary. Additionally, we highlight that our attacks can be performed in a purely black-box setting. This results in the adversary not necessarily having sufficient knowledge to select the threshold that yields an optimal tradeoff between TPR and FPR. So, we also report the (TPR, FPR) pairs when thresholding our test statistics at the 50th, 75th, and 90th percentiles.


\begin{figure}[t]
    \centering
    \begin{subfigure}[b]{0.32\columnwidth}
        \centering
        \includegraphics[width=\textwidth]{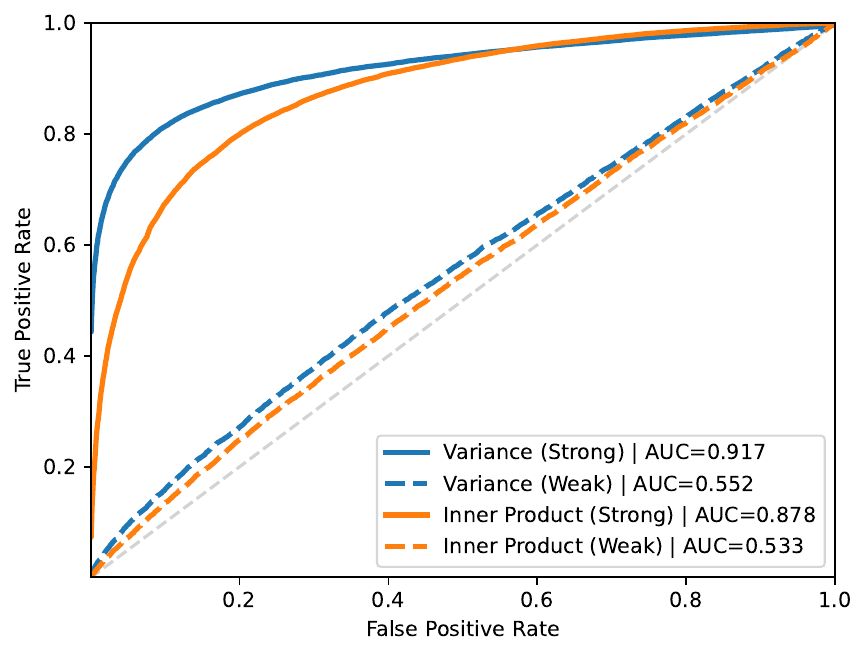}
        \caption{CelebA}
        \label{fig:celeba}
    \end{subfigure}
    \hfill
    \begin{subfigure}[b]{0.32\columnwidth}
        \centering
        \includegraphics[width=\textwidth]{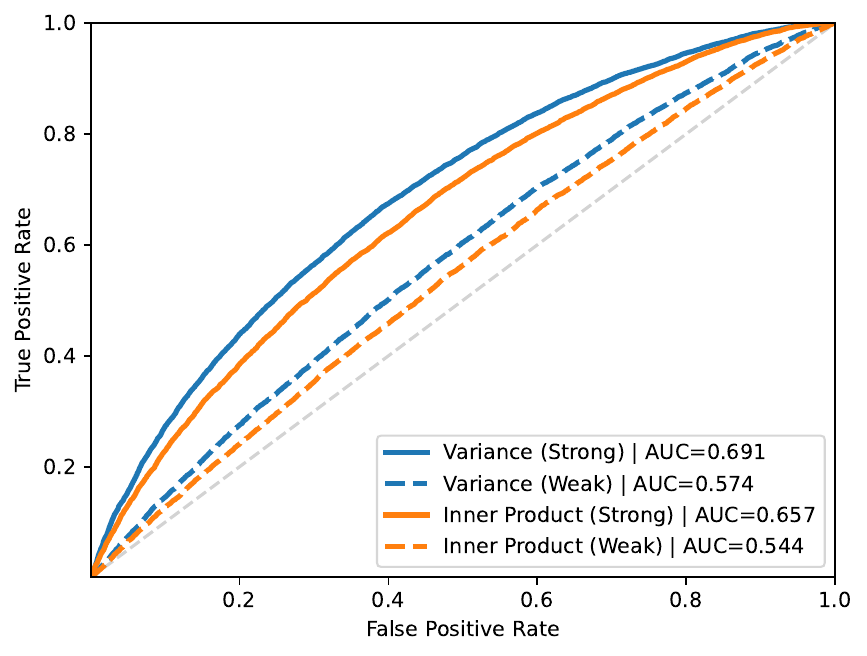}
        \caption{FEMNIST}
        \label{fig:femnist}
    \end{subfigure}
    \hfill
    \begin{subfigure}[b]{0.34\columnwidth}
        \centering
        \includegraphics[width=\textwidth]{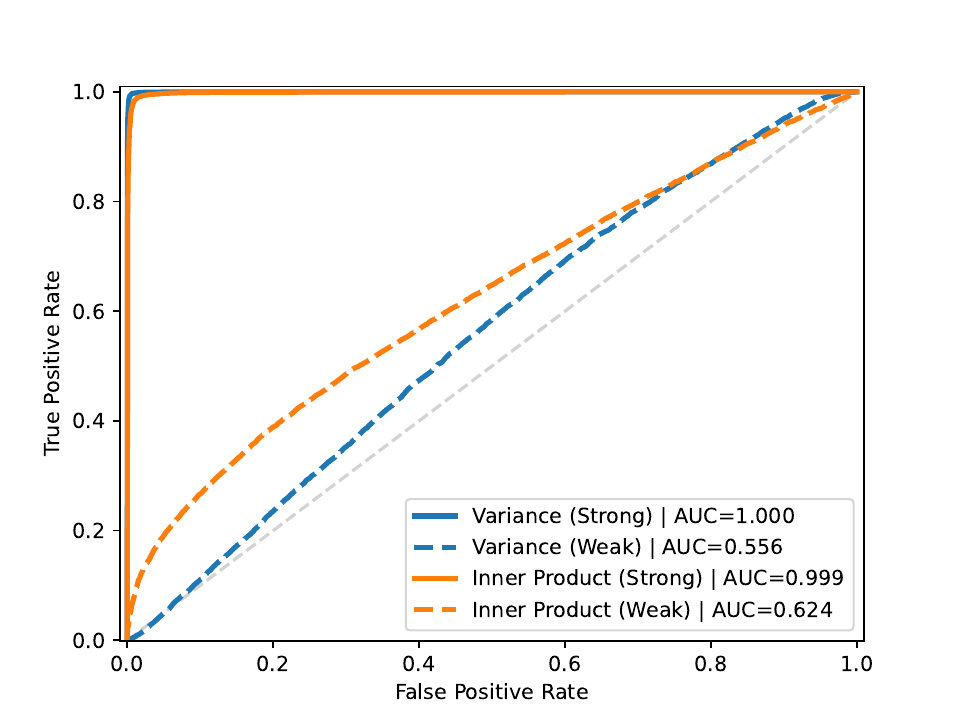}
        \caption{Stack Overflow}
        \label{fig:user_level_lm}
    \end{subfigure}
    \caption{ROC of Task-Inference Attacks (Personalization)}
    \label{fig:personalization_roc}
\end{figure}

\subsection{Personalization} \label{sec:personalization}

Here, we present the results of our experiments in the vision and language settings where MTL is used for \textit{personalization}. Across all experiments, the datasets are organized by individual users, with each user contributing multiple samples. In the MTL framework, the multiple tasks are the users, and each user's task-specific layer, which is linear in this case, receives updates from their data. In a distributed learning setting, this could be thought of as locally adapting the model to user-specific data. The shared representation is trained on all of the tasks. When the adversary infers a task's inclusion in multitask training, they are leaking whether or not an individual's data was present at all in updates to the shared representation. Like in membership-inference, the \textbf{strong} adversary will have a batch of real training samples at their disposal to perform this test, but the \textbf{weak} adversary receives fresh samples that belong to the user and were never seen during training.


\begin{center}
    \begin{table}
    \centering
  \caption{(TPR, FPR) and Balanced Accuracy for Black-Box Percentile Thresholds for Inner Product Attack at $50^{th}$, $75^{th}$, and $90^{th}$ Percentile Thresholds~(Personalization)}
  \label{tab:percentiles_personalization}
  \resizebox{0.9\linewidth}{!}{%
        \begin{tabular}{ll cc cc cc}
            \toprule
            & & \multicolumn{2}{c}{$50^{th}$ Percentile} & \multicolumn{2}{c}{$75^{th}$ Percentile} & \multicolumn{2}{c}{$90^{th}$ Percentile} \\
            Dataset & Access & (TPR, FPR) & Acc. & (TPR, FPR) & Acc. & (TPR, FPR) & Acc. \\
            \midrule
            \multirow{2}{*}{CelebA (Fig~\ref{fig:celeba})} 
            & \textbf{Strong} & $(80\%, 20\%)$ & $80\%$ & $(46.7\%, 3.3\%)$ & $71.7\%$ & $(19.5\%, 0.5\%)$ & $59.5\%$ \\
            & \textbf{Weak} &$(52.4\%, 47.6\%)$ & $52.4\%$ & $(27.4\%, 22.5\%)$ & $52.5\%$ & $(11.5\%, 8.5\%)$ & $51.6\%$ \\
            \midrule
            \multirow{2}{*}{FEMNIST (Fig~\ref{fig:femnist})} 
            & \textbf{Strong} & $(61.1\%, 38.8\%)$ & $61.1\%$ & $(33.5\%, 16.4\%)$ & $58.7\%$ & $(14.3\%, 5.7\%)$ & $54.3\%$\\
            & \textbf{Weak} &$(53.2\%, 46.8\%)$ & $53.2\%$ & $(27.2\%, 22.8\%)$ & $52.2\%,$ & $(11.2\%, 8.8\%)$ & $51.2\%$ \\
            \midrule
            \multirow{2}{*}{Stack Overflow (Fig~\ref{fig:user_level_lm})} 
            & \textbf{Strong} & $(98.7\%, 1.2\%)$ & $98.7\%$ & $(50.0\%, 0\%)$ & $75.0\%$ & $(19.9\%, 0\%)$ & $59.9\%$ \\
            & \textbf{Weak} & $(58.2\%, 41.7\%)$ & $58.2\%$ & $(34.1\%, 15.8\%)$ & $59.1\%$ & $(16.3\%, 3.6\%)$ & $56.3\%$ \\
            \bottomrule
        \end{tabular}
    }
    \end{table}
\end{center}





\subsubsection{CelebA} \label{sec:celeba}

First, we report the results for our attacks in the vision setting. We train a ResNet34~\cite{ResNet} from an ImageNet~\cite{imagenet} checkpoint with one linear layer per task on the CelebA~\cite{liu-2015-celeba} dataset to perform binary facial attribute detection, with each task corresponding to a person in the dataset. We filter CelebA such that each unique individual has roughly 30 images containing their face, and we hold out 8 samples per individual to mount an attack with the \textbf{weak} adversary. In our experiments, we jointly train the MTL model on 256 total tasks with 22 samples per task, or roughly 5600 total samples, and thus use several regularization techniques to ensure that the ResNet layers learn performant embeddings. During each training step, we randomly sample a batch of tasks and update their corresponding task heads, while aggregating these tasks' gradients to update the shared representation.

Once the model is trained, we run 128 trials of each attack on all 512 tasks (256 $IN$ and 256 $OUT$), using 8 samples per task for the \textbf{strong} adversary and 4 of the held out samples per task for the \textbf{weak} adversary. The results of this experiment are shown in Figure~\ref{fig:celeba} and Table~\ref{tab:percentiles_personalization}. We find that while both the \textbf{strong} and \textbf{weak} adversaries can achieve non-trivial success rates in determining a task's inclusion in training, the true positive rate of the \textbf{weak} adversary is bounded above by the true positive rate of the \textbf{strong} adversary for any fixed FPR. Moreover, we see that our variance attack nets better TPR in the low FPR regime, but the inner product attack sees better FPR at higher FPR. At a fixed FPR of $1\%$, the TPR of our variance attack on CelebA achieves a TPR of $61.2\%$  and  $2.9\%$ for the for the \textbf{strong} and \textbf{weak} adversaries, respectively.

\subsubsection{FEMNIST} \label{sec:femnist}

We report additional results in the vision setting for Federated EMNIST~\cite{caldas-2019-leafbenchmark}. We train a MTL model with ResNet8~\cite{ResNet} as the shared representation and personalize each task-specific layer to a unique writer in the dataset. Each writer in the FEMNIST has much more data than each individual in CelebA. Thus, we train the MTL model on 128 tasks with 128 samples per task and hold out 16 samples to mount our attack with the \textbf{weak} adversary. For each of the 128 runs of the attack, the \textbf{strong} adversary receives 16 training samples, and the \textbf{weak} adversary receives 8 of the held out samples. Figure~\ref{fig:femnist} and Table~\ref{tab:percentiles_personalization} show the ROC curves and (TPR, FPR) pairs of our task-inference attacks on Federated EMNIST. We observe that both the \textbf{strong} and \textbf{weak} adversaries are able to achieve nontrivial success rates, and the \textbf{strong} adversary sees a TPR of 3\% at a fixed 1\% FPR. When the \textbf{weak} adversary picks the $90^{th}$ percentile threshold, the inner product attack yields a 2.5$\times$ higher TPR than FPR.


\subsubsection{Stack Overflow} \label{sec:stackoverflow}

Next, we present the results for our language experiments on the StackOverflow posts dataset. We train a BERT Small~\cite{bertsmall_1} (29M parameters) to perform topic classification, where the tasks are users who contributed posts to Stack Overflow. We filter the dataset such that each individual has at least 48 total posts; 32 posts for the model's training set and 16 to be used by the \textbf{weak} adversary. Then, we split the dataset into 128 \textbf{IN} tasks and 128 \textbf{OUT} tasks, yielding two datasets of roughly 10k posts, with each user contributing about 140 posts on average. We train the shared representation and task-specific linear layers jointly on the \textbf{IN} dataset. The MTL model is trained to detect the presence of 256 unique topics in posts, and each post can have multiple corresponding topics. 

After training is complete, we run each attack on all 256 tasks for 128 trials, using 8 samples and 4 samples per trial for the \textbf{strong} and \textbf{weak} adversary, respectively. Figure~\ref{sec:stackoverflow} and Table~\ref{tab:percentiles_personalization} show the results for this attack. We find that the \textbf{strong} adversary achieves nearly perfect AUC for both attacks and FPR of 0\% at the $75^{th}$ and $90^{th}$ percentile thresholds over all runs of the attack. The \textbf{weak} adversary again sees nontrivial AUC for both attacks, and we see a tradeoff between the TPR of the variance and inner product attacks at particularly high FPR. At fixed FPR of 1\% the inner product adversary with \textbf{weak} access to the data achieves a TPR of 8.2\%, while the \textbf{strong} adversary achieves a TPR of 98.5\%. By analyzing the topic frequencies in the dataset, we empirically find that individual users tend to only write about a small subset of topics. Across all posts, the median user posts about 31 (or 12.1\%) of the 256 total topics, which could lead to high distinguishability of users in representation space.


\subsubsection{Variance Attack Test Statistic} Across all of our experiments in both MTL settings, the statistic produced by our inner product attack (Algorithm~\ref{alg:inner_attack}) is consistent with respect to the ordering of \textbf{IN} and \textbf{OUT} distributions. In contrast, when studying leakage in MTL for personalization, we observe a discrepancy in the test statistic distributions for vision and language datasets. Figure~\ref{fig:hist_difference} shows that the coordinate-wise variance statistic (Algorithm~\ref{alg:var_attack}) is larger for \textbf{IN} tasks than \textbf{OUT} tasks. In Appendix~\ref{appdx:variance_attack}, we find that MTL training of ResNet models produces task heads which are positively correlated, while our BERT models produce nearly orthogonal task-specific layers. Because the task heads are correlated, the primary signal for our attack is the shared representation's "overconfidence" in certain directions of the embedding space, which yields higher coordinate-wise variance. We find that the task heads of our FEMNIST models have the highest correlation, followed by our CelebA models, then Stack Overflow, which maps to the ordering of the AUC scores we see in our experiments on MTL for personalization.

\begin{figure}[h]
    \centering
    \begin{subfigure}[h]{0.35\columnwidth}
        \centering
        \includegraphics[width=\textwidth]{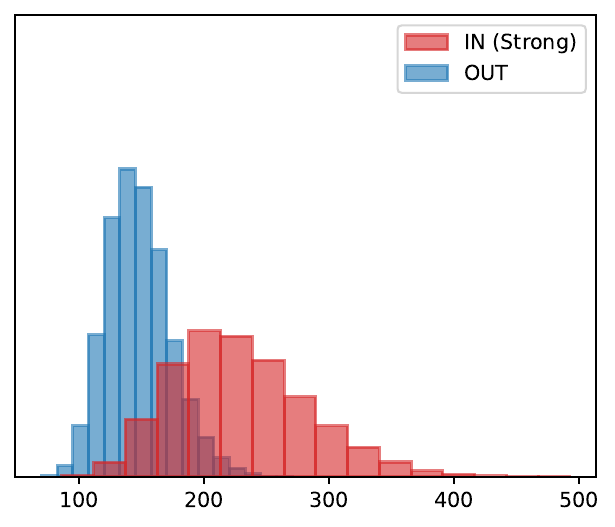}
        \caption{CelebA}
        \label{fig:celeba_hist}
    \end{subfigure}
    \quad \quad
    \begin{subfigure}[h]{0.35\columnwidth}
        \centering
        \includegraphics[width=\textwidth]{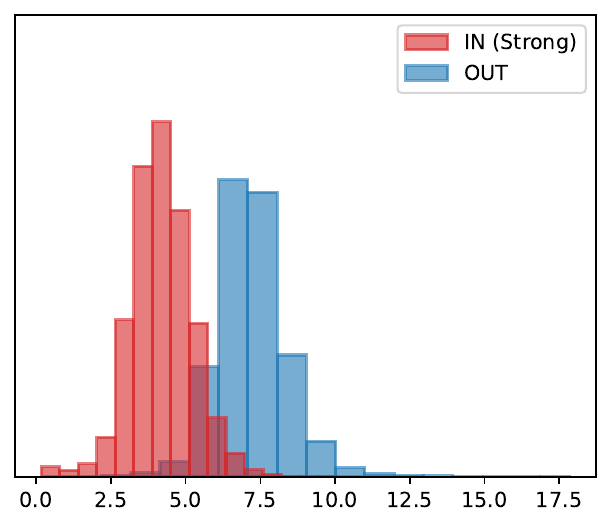}
        \caption{Stack Overflow}
        \label{fig:so_hist}
    \end{subfigure}
    \caption{Distribution of Coordinate-Wise Variance Statistic}
    \label{fig:hist_difference}
\end{figure}


\begin{table*}[t]
    \centering
    \caption{(TPR, FPR) and Balanced Accuracy for Black-Box Percentile Thresholds for Inner Product Attack at $50^{th}$, $75^{th}$, and $90^{th}$ Percentile Thresholds~(Multiple Learning Problems)}
    \label{tab:percentiles_task_level}
    \resizebox{0.88\linewidth}{!}{
        \begin{tabular}{ll cc cc cc}
            \toprule
            & & \multicolumn{2}{c}{$50^{th}$ Percentile} & \multicolumn{2}{c}{$75^{th}$ Percentile} & \multicolumn{2}{c}{$90^{th}$ Percentile} \\
            Dataset & Access & (TPR, FPR) & Acc. & (TPR, FPR) & Acc. & (TPR, FPR) & Acc. \\
            \midrule
            \multirow{2}{*}{CelebA (Fig~\ref{fig:task_level_celeba})} 
              & \textbf{Strong} & $(68.9\%, 31.1\%)$  & $68.9\%$ & $(42.6\%, 7.4\%)$ & $67.6\%$ & $(22.8\%, 0\%)$ & $61.4\%$ \\
              & \textbf{Weak}   & $(52.5\%, 47.5\%)$ & $52.5\%$ & $(29.0\%, 21.0\%)$ &  $54\%$ & $(13.5\%, 6.6\%)$  & $53.5\%$ \\
            \midrule
            \multirow{2}{*}{Stack Overflow (Fig~\ref{fig:task_level_lm})} 
              & \textbf{Strong} & $(87\%, 13.9\%)$ & $87\%$ & $(48.3\%, 1.7\%)$ & $73.3\%$ & $(19.9\%, 0.1\%)$ & $59.9\%$ \\
              & \textbf{Weak}   & $(85.9\%, 14.1\%)$ & $85.9\%$ & $(47.6\%, 2.4\%)$ & $72.6\%$ & $(19.6\%, 0.2\%)$ & $59.7\%$ \\
            \bottomrule
        \end{tabular}
    }
\end{table*}



\subsection{Multiple Learning Problems} \label{sec:multiple_obj}

We present the results of our experiments on models trained using MTL on multiple related learning problems. In contrast to the datasets used to train MTL models in the experiments presented from Section~\ref{sec:personalization}, the datasets are split by learnable classes in the dataset rather than by person. For example, Stack Overflow can be split into its constituent topics, and we can have an MTL model with a tailored head for detecting each individual topic's presence. We draw attention to the fact that in this setting, the labels for the learning problem are directly tied to the task. In other words, if a task is not included, there are no positive labels corresponding to that task in the training dataset. In our experiments on personalized multitask models, while there is some dependency between the task and the labeling of the data (e.g. a celebrity in CelebA with blonde hair is classified as "Blonde" 100\% of the time), there is not a direct mapping from the data's labeling to the tasks in MTL.

\subsubsection{CelebA} \label{sec:celeba_task_level}

For our evaluation on CelebA, we split the dataset into potentially overlapping tasks by facial attribute, 20 \textbf{IN} and 20 \textbf{OUT}, with at least 1024 samples with corresponding positive and negative labelings for each task. Because some tasks have very low positive label frequencies (e.g. < 1\%), this minimum sample size ensures sufficient positive examples for the task head to learn from. We use the ResNet50~\cite{ResNet} architecture as the backbone for our MTL model as there is ample data. Each task has a corresponding linear head for binary attribute prediction, and no two tasks share the same labeling. For example, the task head dedicated to hair color does not make predictions for the task head dedicated to detecting glasses. 

We average the results of our experiments over 8 MTL runs, and run the attacks on the 40 total tasks 128 times each, using 16 samples and 8 samples for the \textbf{strong} and \textbf{weak} adversary, respectively. In Table~\ref{tab:percentiles_task_level} and Figure~\ref{fig:task_level_celeba}, we see that the \textbf{strong} adversary is able to achieve an AUC of $0.745$ using our coordinate-wise variance attack and a TPR of 22.8\% at an empirical FPR of 0\% over all runs when using the $90^{th}$ percentile threshold. The \textbf{weak} sees nontrivial success rates, with a TPR roughly 2$\times$ larger than the FPR when using the $90^{th}$ percentile threshold.

\subsubsection{Stack Overflow} \label{sec:stack_overflow_task_level}

We also run our attack on language models where MTL is used for multiple related learning problems. Unlike the experiments in Section~\ref{sec:stackoverflow}, the posts are not split by user. Instead, each task corresponds to the inclusion of a positively labeled topic in the dataset. For example, if the Stack Overflow topic "Python" is \textbf{IN}, there is a task head dedicated to detecting whether a post includes the topic "Python". If the topic is \emph{not} included, there is no task head that can learn positively labeled "Python" samples. Additionally, because the posts in the Stack Overflow dataset can contain multiple topics, the training data is not disjoint between tasks; the labeling of each task dataset is unique. We randomly split the data into 70 \textbf{IN} tasks and 70 \textbf{OUT} tasks, with each task-specific dataset containing 1024 posts. Using a BERT Medium~\cite{bertsmall_1} (41M parameters) model as the shared representation, we use MTL to learn linear task heads that are specialized to detecting the presence of a single topic within a post.

We run both of our attacks on the BERT model 128 times, using 32 samples and 16 samples each trial for the \textbf{strong} and \textbf{weak} adversaries, respectively. Figure~\ref{fig:task_level_lm} shows that in the setting where tasks correspond directly to the training data labeling in MTL, the gap in attack performance between the \textbf{strong} and \textbf{weak} adversaries becomes small. This can be attributed to the fact that the model necessarily has to achieve high utility on the training tasks in order to solve the learning problems. This contrasts our findings in Section~\ref{sec:personalization}, where the task heads in MTL are not solving distinct learning problems, and the separation between both adversaries is notably larger. In Table~\ref{tab:percentiles_task_level}, we see that the (TPR, FPR) pairs for both adversaries are nearly equal, and the \textbf{weak} adversary is able to achieve a 19.6\% TPR at 0.2\% FPR. 


\begin{figure}[ht]
    \centering
    \begin{subfigure}[b]{0.44\textwidth}
        \includegraphics[width=\textwidth]{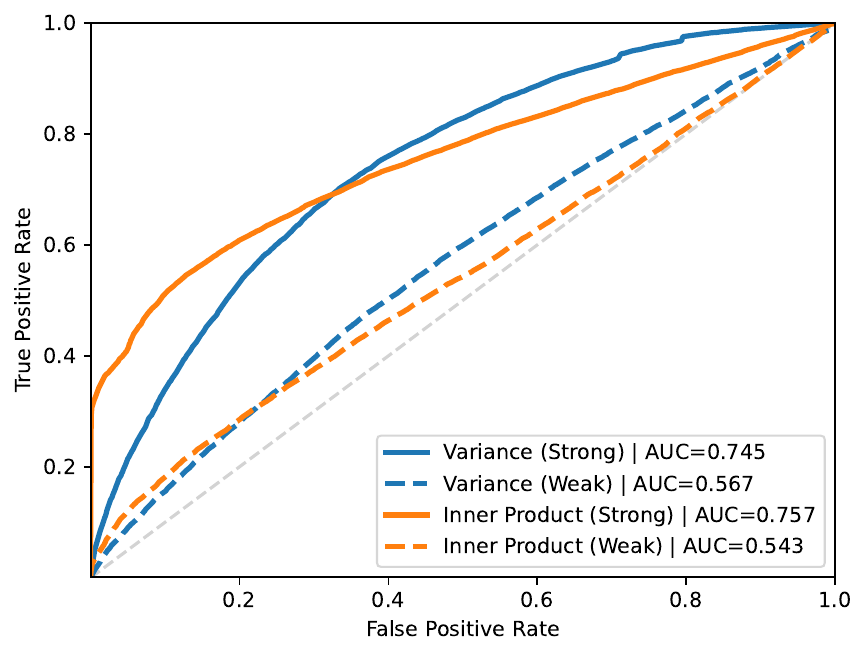}
        \caption{CelebA}
        \label{fig:task_level_celeba}
    \end{subfigure}
    \hfill
    \begin{subfigure}[b]{0.44\textwidth}
        \includegraphics[width=\textwidth]{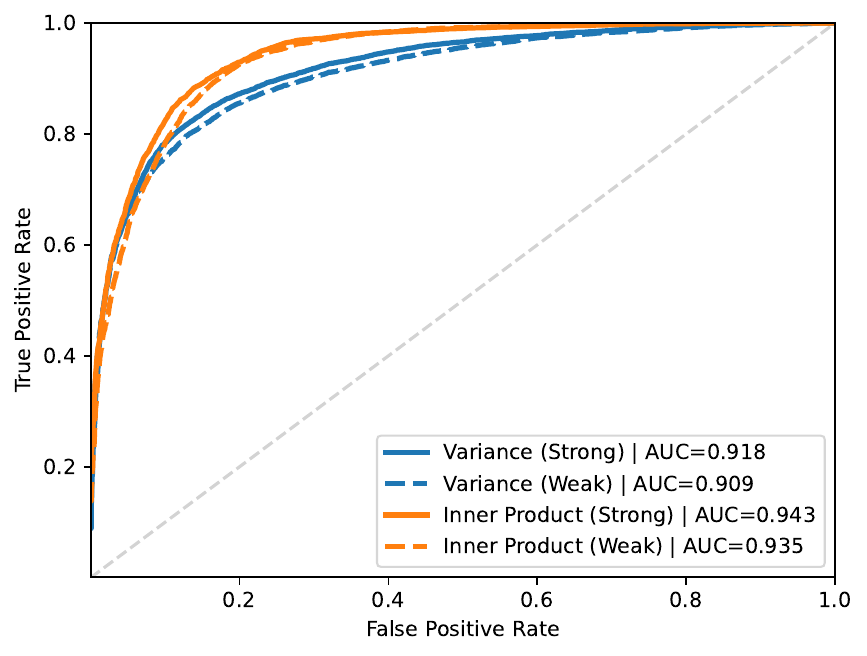}
        \caption{Stack Overflow}
        \label{fig:task_level_lm}
    \end{subfigure}    
    \caption{ROC of Task-Inference Attacks (Multiple Learning Problems)}
    \label{fig:task_inference}
\end{figure}

\subsection{Investigating Sources of Task-Inference Leakage}  Now, we present an investigation of the factors that lead to task-inference. We study how task-inference success varies as a function of gaps between the MTL model's loss on training samples, out of sample data, and out of distribution data. In Appendix~\ref{appdx:synth}, we conduct ablation studies over MTL parameters, such as the embedding dimension and number of tasks, using a synthetic dataset designed to necessitate MTL.

\subsubsection{Task-Inference and Generalization}

The \textbf{weak} adversary in our analysis of tracing attacks for task-inference (Theorem~\ref{thm:weak_adv}) has a notable disadvantage compared to the \textbf{strong adversary}, which has access to actual training samples. When estimating a mean of Gaussian vectors, having several tasks with sparse data yields an accurate enough estimate of the true mean such that the gap between the strong and weak adversaries is observable, but not too large. In our MTL experiments, we see that this gap in AUC between the \textbf{strong} and \textbf{weak} adversaries is larger, especially when each individual task has very few samples, such as in CelebA. In membership-inference, the adversary's success rate is often tied to the model's generalization gap~\cite{2018-yeom-mioverfitting} (and in very overparameterized models, the train loss as it approaches 0) as the model's outputs begin to heavily depend on training examples. To understand the relationship between generalization error and task-inference success, we train Stack Overflow topic classification models, using MTL for personalization over 128 users as in Section~\ref{sec:personalization}, and measure the AUC each adversary achieves using the inner product attack. In total, we train 8 MTL models on random splits of 128 tasks and run the attack 128 times per task, using 8 and 4 samples per run for the \textbf{strong} and \textbf{weak} adversaries, respectively.

Figure~\ref{fig:gengap_pers} shows the \textbf{strong} adversary's AUC as a function of the gap between the training loss and "zero-shot" loss, which we compute by taking the average prediction of 16 random task heads to make predictions on \textbf{OUT} task data. Each point on the scatterplot is shaded corresponding to how many epochs the model had been trained up to that point. We observe a similar trend to membership-inference, where our \textbf{strong} task-inference adversary, which has access to training samples, sees a sharp increase in attack AUC for small generalization gaps. Over many runs of model training on different task splits, we see that the points on the plot tend to concentrate as the generalization gap grows.

\begin{figure*}[t]
    \centering
    \begin{subfigure}[b]{0.57\linewidth}
        \centering
        \includegraphics[width=\textwidth]{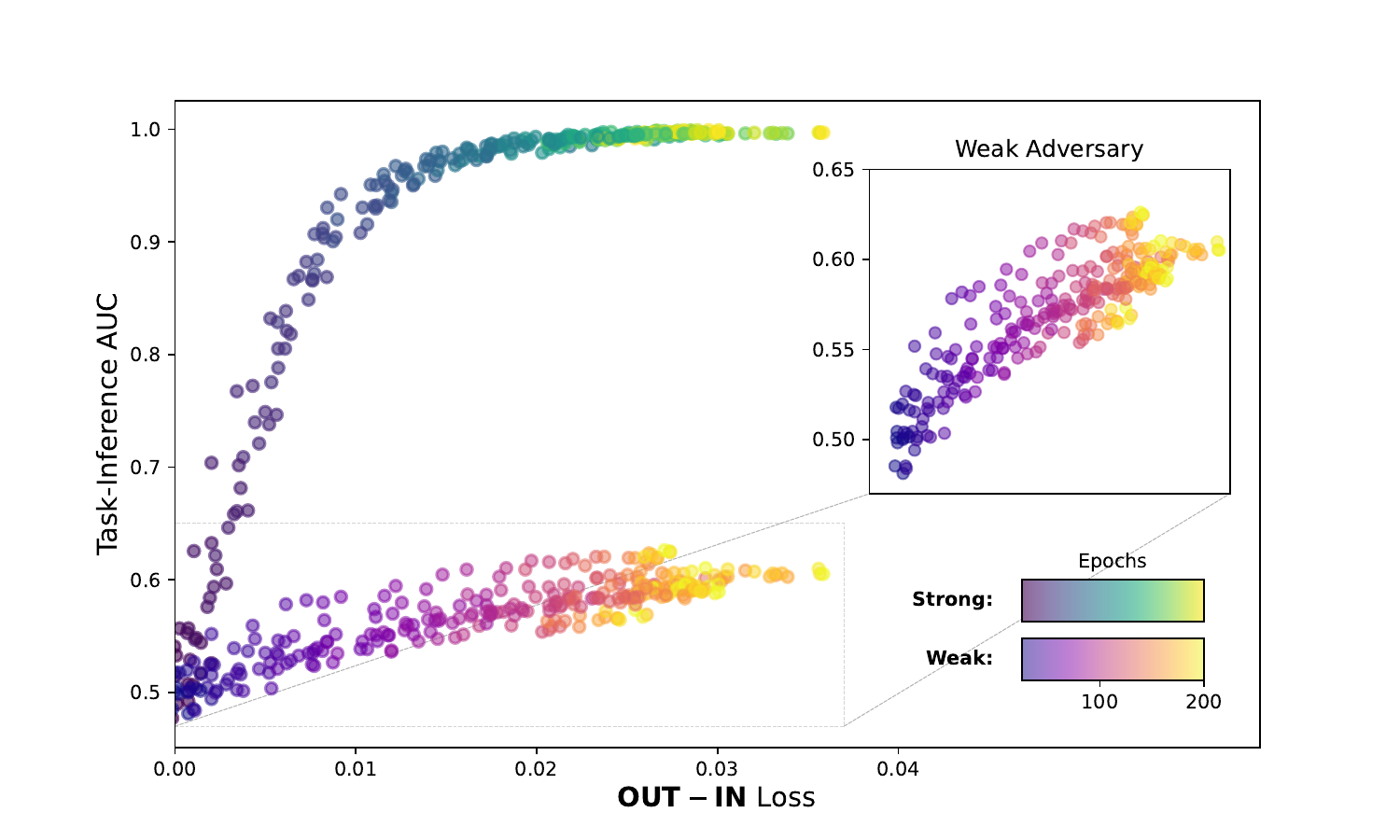}
        \caption{\textbf{Strong} and \textbf{Weak} (Personalization)}
        \label{fig:gengap_pers}
    \end{subfigure}
    \hfill
    \begin{subfigure}[b]{0.39\linewidth}
        \centering
        \includegraphics[width=\textwidth]{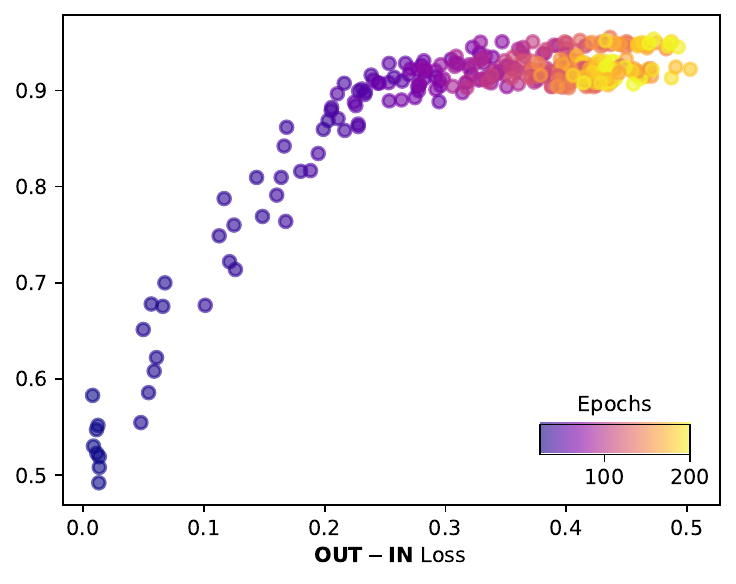}
        \caption{\textbf{Weak} (Multiple Learning Problems)}
        \label{fig:weak_gengap_task_level}
    \end{subfigure}
    \caption{Relationship Between Generalization Gaps and Inner Product Attack AUC on Stack Overflow}
    \label{fig:gengaps}
\end{figure*}

In contrast, when we run the exact same experiment with the \textbf{weak} adversary, we see that the attack AUC also grows as a function of the loss gap, but there is more variance. Here, the x-axis is the gap between the "zero-shot" loss and the loss on samples which did \emph{not} appear in training, but belong to training tasks. This observation implies that the model is both generalizing more effectively to tasks, or data distributions, that were seen in training and memorizing some amount of information about the task as a whole, rather than only particular training samples. A similar observation was made for user-inference attacks on generative LLMs in~\cite{kandpal2023userinference}, where the authors observe increasing attack AUC on user as a function of the model's validation loss on \textbf{IN} users. We supplement these findings by running another experiment, shown in Figure~\ref{fig:weak_gengap_task_level}, where the MTL model is trained to solve multiple learning problems at once. In this setting, the labeling of the training data for any task head directly corresponds to the task itself (e.g. the task-specific layer for "Python" only has positively labeled training samples that correspond to posts with the topic "Python"). Our observations highlight that the interdependence between the definition of tasks in an instance of MTL and the learning problems being solved by the task-specific layers has a measurable impact on task-inference AUC as the model becomes well-generalized on tasks included in training.

\section{Conclusion and Discussion} \label{sec:discussion}

In this work, we study privacy leakage when learning models over a mixture of tasks in multitask learning. In particular, we focus on the setting where MTL is used to jointly train a model for many tasks at once by learning a shared representation that captures common features between tasks. We propose a novel, purely black-box \emph{task-inference} threat model, where the adversary's goal is to infer the inclusion of a target task in training given only query access to the smallest shared component when learning jointly over many tasks, the shared representation. By analyzing our task-inference threat model in the context of tracing attacks on Gaussian mean estimation, we find a separation between \textbf{strong} adversaries with access to training samples and \textbf{weak} adversaries with access to fresh samples from the target task's distribution that did not appear in the training dataset. To verify our analysis, we propose purely black-box attacks on machine learning models trained with MTL and we conduct extensive experimentation in the vision and language domains for multiple use cases of MTL. We find that our attacks can consistently achieve non-trivial success rates in terms of AUC and TPR, even when the adversary has no reference knowledge and chooses thresholds based on percentiles. Additionally, we attempt to understand the factors that lead to task-inference leakage by measuring how attack AUC varies for both the \textbf{strong} and \textbf{weak} adversaries as a function of the model's generalization gap and running ablation studies over different MTL hyperparameters, such as the embedding dimension and number of tasks.

\subsection{Potential Defenses}
One potential defense against our attack could be user-level (and group-level) differential privacy~\cite{dwork-2006-dmns}, where neighboring datasets are defined by the inclusion or omission of a user's entire contribution to the dataset, rather than an individual sample. While there are several works study user-level differential privacy when training machine learning models~\cite{levy-2021-learninguserdp, chua-2024-privacyunituserlevel, charles-2024-finetuningllmuserlevel} and estimating high dimensional means~\cite{cummings-2022-userlevelmean, agarwal-2025-personlevel}, only one work develops algorithms with \emph{client-level} privacy guarantees~\cite{hu2023privatemtl} when MTL is applied to collaborative learning. In contrast to observations for empirical defenses made in the membership-inference literature~\cite{carlini-2022-lira}, our attack succeeds even when the target MTL model is trained using gradient clipping and normalization. In fact, we use gradient clipping in our evaluation to ensure that the MTL models converge smoothly and do not overfit when learning over very few samples from each task.


\section*{Acknowledgments}

This research was supported by NSF awards CNS-2247484 and CNS-2232692. A.S.'s work at BU was funded in part by NSF award CNS-2232694. In their work at Google, both M.J. and A.S. contributed only in an advisory capacity and did not train models or access datasets; all data was processed at Northeastern University.

\printbibliography

@misc{fallah2020personalizedfederatedlearningmetalearning,
      title={Personalized Federated Learning: A Meta-Learning Approach}, 
      author={Alireza Fallah and Aryan Mokhtari and Asuman Ozdaglar},
      year={2020},
      eprint={2002.07948},
      archivePrefix={arXiv},
      primaryClass={cs.LG},
      url={https://arxiv.org/abs/2002.07948}, 
}

@Misc{msr2022PersonalizedFL,
howpublished = {\url{https://github.com/microsoft/PersonalizedFL}},
year={2022},
title = {PersonalizedFL: Personalized Federated Learning Toolkit},  
author = {Lu, Wang and Wang, Jindong}
}

@article{zhang2024mink,
    title={Min-K\%++: Improved Baseline for Detecting Pre-Training Data from Large Language Models},
    author={Zhang, Jingyang and Sun, Jingwei and Yeats, Eric and Ouyang, Yang and Kuo, Martin and Zhang, Jianyi and Yang, Hao and Li, Hai},
    journal={arXiv preprint arXiv:2404.02936},
    year={2024}
}

@misc{duan2024membershipinferenceattackswork,
      title={Do Membership Inference Attacks Work on Large Language Models?}, 
      author={Michael Duan and Anshuman Suri and Niloofar Mireshghallah and Sewon Min and Weijia Shi and Luke Zettlemoyer and Yulia Tsvetkov and Yejin Choi and David Evans and Hannaneh Hajishirzi},
      year={2024},
      eprint={2402.07841},
      archivePrefix={arXiv},
      primaryClass={cs.CL},
      url={https://arxiv.org/abs/2402.07841}, 
}

@INPROCEEDINGS{suri2023dissectingdistributioninference,
  author={Suri, Anshuman and Lu, Yifu and Chen, Yanjin and Evans, David},
  booktitle={2023 IEEE Conference on Secure and Trustworthy Machine Learning (SaTML)}, 
  title={Dissecting Distribution Inference}, 
  year={2023},
  volume={},
  number={},
  pages={150-164},
  keywords={Training;Privacy;Closed box;Machine learning;Data models;Glass box;privacy preserving machine learning;inference privacy;distribution inference;property inference},
  doi={10.1109/SaTML54575.2023.00019}}

@inproceedings{ganju2018propinfpermutation,
author = {Ganju, Karan and Wang, Qi and Yang, Wei and Gunter, Carl A. and Borisov, Nikita},
title = {Property Inference Attacks on Fully Connected Neural Networks using Permutation Invariant Representations},
year = {2018},
isbn = {9781450356930},
publisher = {Association for Computing Machinery},
address = {New York, NY, USA},
url = {https://doi.org/10.1145/3243734.3243834},
doi = {10.1145/3243734.3243834},
abstract = {With the growing adoption of machine learning, sharing of learned models is becoming popular. However, in addition to the prediction properties the model producer aims to share, there is also a risk that the model consumer can infer other properties of the training data the model producer did not intend to share. In this paper, we focus on the inference of global properties of the training data, such as the environment in which the data was produced, or the fraction of the data that comes from a certain class, as applied to white-box Fully Connected Neural Networks (FCNNs). Because of their complexity and inscrutability, FCNNs have a particularly high risk of leaking unexpected information about their training sets; at the same time, this complexity makes extracting this information challenging. We develop techniques that reduce this complexity by noting that FCNNs are invariant under permutation of nodes in each layer. We develop our techniques using representations that capture this invariance and simplify the information extraction task. We evaluate our techniques on several synthetic and standard benchmark datasets and show that they are very effective at inferring various data properties. We also perform two case studies to demonstrate the impact of our attack. In the first case study we show that a classifier that recognizes smiling faces also leaks information about the relative attractiveness of the individuals in its training set. In the second case study we show that a classifier that recognizes Bitcoin mining from performance counters also leaks information about whether the classifier was trained on logs from machines that were patched for the Meltdown and Spectre attacks.},
booktitle = {Proceedings of the 2018 ACM SIGSAC Conference on Computer and Communications Security},
pages = {619–633},
numpages = {15},
keywords = {property inference, permutation equivalence, neural networks},
location = {Toronto, Canada},
series = {CCS '18}
}

@inproceedings{zhou2022propinfGANS,
  author       = {Junhao Zhou and
                  Yufei Chen and
                  Chao Shen and
                  Yang Zhang},
  title        = {Property Inference Attacks Against GANs},
  booktitle    = {29th Annual Network and Distributed System Security Symposium, {NDSS}
                  2022, San Diego, California, USA, April 24-28, 2022},
  publisher    = {The Internet Society},
  year         = {2022},
  url          = {https://www.ndss-symposium.org/ndss-paper/auto-draft-240/},
  timestamp    = {Thu, 15 Jun 2023 16:53:20 +0200},
  biburl       = {https://dblp.org/rec/conf/ndss/Zhou00022.bib},
  bibsource    = {dblp computer science bibliography, https://dblp.org}
}

@INPROCEEDINGS{xu2020subjectpropinf,
  author={Xu, Mingxue and Li, Xiangyang},
  booktitle={2020 12th International Conference on Intelligent Human-Machine Systems and Cybernetics (IHMSC)}, 
  title={Subject Property Inference Attack in Collaborative Learning}, 
  year={2020},
  volume={1},
  number={},
  pages={227-231},
  keywords={Feature extraction;Training data;Collaborative work;Training;Data models;Task analysis;Data privacy;collaborative learning;property inference attack;privacy},
  doi={10.1109/IHMSC49165.2020.00057}}

@article{huang2022crosssiloFL,
  title={Cross-Silo Federated Learning: Challenges and Opportunities},
  author={Chao Huang and Jianwei Huang and Xin Liu},
  journal={ArXiv},
  year={2022},
  volume={abs/2206.12949},
  url={https://api.semanticscholar.org/CorpusID:250073287}
}

@InProceedings{mcmahan2017FLoriginal,
  title = 	 {{Communication-Efficient Learning of Deep Networks from Decentralized Data}},
  author = 	 {McMahan, Brendan and Moore, Eider and Ramage, Daniel and Hampson, Seth and Arcas, Blaise Aguera y},
  booktitle = 	 {Proceedings of the 20th International Conference on Artificial Intelligence and Statistics},
  pages = 	 {1273--1282},
  year = 	 {2017},
  editor = 	 {Singh, Aarti and Zhu, Jerry},
  volume = 	 {54},
  series = 	 {Proceedings of Machine Learning Research},
  month = 	 {20--22 Apr},
  publisher =    {PMLR},
  pdf = 	 {http://proceedings.mlr.press/v54/mcmahan17a/mcmahan17a.pdf},
  url = 	 {https://proceedings.mlr.press/v54/mcmahan17a.html},
  abstract = 	 {Modern mobile devices have access to a wealth of data suitable for learning models, which in turn can greatly improve the user experience on the device. For example, language models can improve speech recognition and text entry, and image models can automatically select good photos. However, this rich data is often privacy sensitive, large in quantity, or both, which may preclude logging to the data center and training there using conventional approaches.  We advocate an alternative that leaves the training data distributed on the mobile devices, and learns a shared model by aggregating locally-computed updates. We term this decentralized approach Federated Learning.  We present a practical method for the federated learning of deep networks based on iterative model averaging, and conduct an extensive empirical evaluation, considering five different model architectures and four datasets. These experiments demonstrate the approach is robust to the unbalanced and non-IID data distributions that are a defining characteristic of this setting. Communication costs are the principal constraint, and we show a reduction in required communication rounds by 10-100x as compared to synchronized stochastic gradient descent. }
}

@inproceedings{jagielski2020auditing,
 author = {Jagielski, Matthew and Ullman, Jonathan and Oprea, Alina},
 booktitle = {Advances in Neural Information Processing Systems},
 editor = {H. Larochelle and M. Ranzato and R. Hadsell and M.F. Balcan and H. Lin},
 pages = {22205--22216},
 publisher = {Curran Associates, Inc.},
 title = {Auditing Differentially Private Machine Learning: How Private is Private SGD?},
 url = {https://proceedings.neurips.cc/paper_files/paper/2020/file/fc4ddc15f9f4b4b06ef7844d6bb53abf-Paper.pdf},
 volume = {33},
 year = {2020}
}

@misc{chen2023faceauditor,
      title={FACE-AUDITOR: Data Auditing in Facial Recognition Systems}, 
      author={Min Chen and Zhikun Zhang and Tianhao Wang and Michael Backes and Yang Zhang},
      year={2023},
      eprint={2304.02782},
      archivePrefix={arXiv},
      primaryClass={cs.CR},
      url={https://arxiv.org/abs/2304.02782}, 
}

@article{abascal2023tmi,
  title={TMI! Finetuned Models Leak Private Information from their Pretraining Data},
  author={John Abascal and Stanley Wu and Alina Oprea and Jonathan Ullman},
  journal={Proc. Priv. Enhancing Technol.},
  year={2023},
  volume={2024},
  pages={202-223},
  url={https://api.semanticscholar.org/CorpusID:259064156}
}

@INPROCEEDINGS{mahloujifar2022propertypoisoning,
  author={Mahloujifar, Saeed and Ghosh, Esha and Chase, Melissa},
  booktitle={2022 IEEE Symposium on Security and Privacy (SP)}, 
  title={Property Inference from Poisoning}, 
  year={2022},
  volume={},
  number={},
  pages={1120-1137},
  keywords={Privacy;Soft sensors;Training data;Machine learning;Data models;Security;Data mining;Poisoning-attacks;privacy;property-inference-attacks},
  doi={10.1109/SP46214.2022.9833623}}

@inproceedings{
kandpal2023userinference,
title={User Inference Attacks on Large Language Models},
author={Nikhil Kandpal and Krishna Pillutla and Alina Oprea and Peter Kairouz and Christopher Choquette-Choo and Zheng Xu},
booktitle={International Workshop on Federated Learning in the Age of Foundation Models in Conjunction with NeurIPS 2023},
year={2023},
url={https://openreview.net/forum?id=4uyyLG4KCH}
}

@inproceedings{ye-2022-enhanced_mi,
author = {Ye, Jiayuan and Maddi, Aadyaa and Murakonda, Sasi Kumar and Bindschaedler, Vincent and Shokri, Reza},
title = {Enhanced Membership Inference Attacks against Machine Learning Models},
year = {2022},
isbn = {9781450394505},
publisher = {Association for Computing Machinery},
address = {New York, NY, USA},
url = {https://doi.org/10.1145/3548606.3560675},
doi = {10.1145/3548606.3560675},
abstract = {How much does a machine learning algorithm leak about its training data, and why? Membership inference attacks are used as an auditing tool to quantify this leakage. In this paper, we present a comprehensivehypothesis testing framework that enables us not only to formally express the prior work in a consistent way, but also to design new membership inference attacks that use reference models to achieve a significantly higher power (true positive rate) for any (false positive rate) error. More importantly, we explainwhy different attacks perform differently. We present a template for indistinguishability games, and provide an interpretation of attack success rate across different instances of the game. We discuss various uncertainties of attackers that arise from the formulation of the problem, and show how our approach tries to minimize the attack uncertainty to the one bit secret about the presence or absence of a data point in the training set. We perform adifferential analysis between all types of attacks, explain the gap between them, and show what causes data points to be vulnerable to an attack (as the reasons vary due to different granularities of memorization, from overfitting to conditional memorization). Our auditing framework is openly accessible as part of thePrivacy Meter software tool.},
booktitle = {Proceedings of the 2022 ACM SIGSAC Conference on Computer and Communications Security},
pages = {3093–3106},
numpages = {14},
keywords = {membership inference, indistinguishability game, privacy auditing},
location = {Los Angeles, CA, USA},
series = {CCS '22}
}

@article{ateniese-2015-property_inference,
author = {Ateniese, Giuseppe and Mancini, Luigi V. and Spognardi, Angelo and Villani, Antonio and Vitali, Domenico and Felici, Giovanni},
title = {Hacking Smart Machines with Smarter Ones: How to Extract Meaningful Data from Machine Learning Classifiers},
year = {2015},
issue_date = {September 2015},
publisher = {Inderscience Publishers},
address = {Geneva 15, CHE},
volume = {10},
number = {3},
issn = {1747-8405},
url = {https://doi.org/10.1504/IJSN.2015.071829},
doi = {10.1504/IJSN.2015.071829},
abstract = {Machine-learning ML enables computers to learn how to recognise patterns, make unintended decisions, or react to a dynamic environment. The effectiveness of trained machines varies because of more suitable ML algorithms or because superior training sets. Although ML algorithms are known and publicly released, training sets may not be reasonably ascertainable and, indeed, may be guarded as trade secrets. In this paper we focus our attention on ML classifiers and on the statistical information that can be unconsciously or maliciously revealed from them. We show that it is possible to infer unexpected but useful information from ML classifiers. In particular, we build a novel meta-classifier and train it to hack other classifiers, obtaining meaningful information about their training sets. Such information leakage can be exploited, for example, by a vendor to build more effective classifiers or to simply acquire trade secrets from a competitor's apparatus, potentially violating its intellectual property rights.},
journal = {Int. J. Secur. Netw.},
month = {sep},
pages = {137–150},
numpages = {14}
}

@article{SankararamanOJH09,
	Author = {Sankararaman, Sriram and Obozinski, Guillaume and Jordan, Michael I and Halperin, Eran},
	Date-Added = {2018-08-21 14:21:07 -0400},
	Date-Modified = {2018-08-21 14:21:07 -0400},
	Journal = {Nature genetics},
	Number = {9},
	Pages = {965--967},
	Publisher = {Nature Publishing Group},
	Title = {Genomic privacy and limits of individual detection in a pool},
	Volume = {41},
	Year = {2009}}

@article{carlini-2023-extraction_diffusion,
  title={Extracting Training Data from Diffusion Models},
  author={Nicholas Carlini and Jamie Hayes and Milad Nasr and Matthew Jagielski and Vikash Sehwag and Florian Tram{\`e}r and Borja Balle and Daphne Ippolito and Eric Wallace},
  journal={ArXiv},
  year={2023},
  volume={abs/2301.13188}
}

@inproceedings{carlini-2020-extraction_llm,
  title={Extracting Training Data from Large Language Models},
  author={Nicholas Carlini and Florian Tram{\`e}r and Eric Wallace and Matthew Jagielski and Ariel Herbert-Voss and Katherine Lee and Adam Roberts and Tom B. Brown and Dawn Xiaodong Song and {\'U}lfar Erlingsson and Alina Oprea and Colin Raffel},
  booktitle={USENIX Security Symposium},
  year={2020}
}

@inproceedings{2018-yeom-mioverfitting,
    author = {Yeom, Samuel and Giacomelli, Irene and Fredrikson, Matt and Jha, Somesh},
    year = {2018},
    month = {07},
    pages = {268-282},
    title = {Privacy Risk in Machine Learning: Analyzing the Connection to Overfitting},
    doi = {10.1109/CSF.2018.00027}
}

@article{homer-2008-mi,
  title     = "Resolving individuals contributing trace amounts of {DNA} to
               highly complex mixtures using high-density {SNP} genotyping
               microarrays",
  author    = "Homer, Nils and Szelinger, Szabolcs and Redman, Margot and
               Duggan, David and Tembe, Waibhav and Muehling, Jill and Pearson,
               John V and Stephan, Dietrich A and Nelson, Stanley F and Craig,
               David W",
  abstract  = "We use high-density single nucleotide polymorphism (SNP)
               genotyping microarrays to demonstrate the ability to accurately
               and robustly determine whether individuals are in a complex
               genomic DNA mixture. We first develop a theoretical framework
               for detecting an individual's presence within a mixture, then
               show, through simulations, the limits associated with our
               method, and finally demonstrate experimentally the
               identification of the presence of genomic DNA of specific
               individuals within a series of highly complex genomic mixtures,
               including mixtures where an individual contributes less than
               0.1\% of the total genomic DNA. These findings shift the
               perceived utility of SNPs for identifying individual trace
               contributors within a forensics mixture, and suggest future
               research efforts into assessing the viability of previously
               sub-optimal DNA sources due to sample contamination. These
               findings also suggest that composite statistics across cohorts,
               such as allele frequency or genotype counts, do not mask
               identity within genome-wide association studies. The
               implications of these findings are discussed.",
  journal   = "PLoS Genet.",
  publisher = "Public Library of Science (PLoS)",
  volume    =  4,
  number    =  8,
  pages     = "e1000167",
  month     =  aug,
  year      =  2008,
  language  = "en"
}

@misc{chaudhari-2022-snap,
      title={SNAP: Efficient Extraction of Private Properties with Poisoning}, 
      author={Harsh Chaudhari and John Abascal and Alina Oprea and Matthew Jagielski and Florian Tramèr and Jonathan Ullman},
      year={2022},
      eprint={2208.12348},
      archivePrefix={arXiv},
      primaryClass={cs.LG}
}

@inproceedings{carlini-2022-lira,
  author    = {Nicholas Carlini and
               Steve Chien and
               Milad Nasr and
               Shuang Song and
               Andreas Terzis and
               Florian Tram{\`{e}}r},
  title     = {Membership Inference Attacks From First Principles},
  booktitle = {43rd {IEEE} Symposium on Security and Privacy, {SP} 2022, San Francisco,
               CA, USA, May 22-26, 2022},
  pages     = {1897--1914},
  publisher = {{IEEE}},
  year      = {2022},
  url       = {https://doi.org/10.1109/SP46214.2022.9833649},
  doi       = {10.1109/SP46214.2022.9833649},
  timestamp = {Sun, 02 Oct 2022 16:15:56 +0200},
  biburl    = {https://dblp.org/rec/conf/sp/CarliniCN0TT22.bib},
  bibsource = {dblp computer science bibliography, https://dblp.org}
}

@inproceedings{2021-liu-encodermi,
author = {Liu, Hongbin and Jia, Jinyuan and Qu, Wenjie and Gong, Neil Zhenqiang},
title = {EncoderMI: Membership Inference against Pre-Trained Encoders in Contrastive Learning},
year = {2021},
isbn = {9781450384544},
publisher = {Association for Computing Machinery},
address = {New York, NY, USA},
url = {https://doi.org/10.1145/3460120.3484749},
doi = {10.1145/3460120.3484749},
abstract = {Given a set of unlabeled images or (image, text) pairs, contrastive learning aims to pre-train an image encoder that can be used as a feature extractor for many downstream tasks. In this work, we propose EncoderMI, the first membership inference method against image encoders pre-trained by contrastive learning. In particular, given an input and a black-box access to an image encoder, EncoderMI aims to infer whether the input is in the training dataset of the image encoder. EncoderMI can be used 1) by a data owner to audit whether its (public) data was used to pre-train an image encoder without its authorization or 2) by an attacker to compromise privacy of the training data when it is private/sensitive. Our EncoderMI exploits the overfitting of the image encoder towards its training data. In particular, an overfitted image encoder is more likely to output more (or less) similar feature vectors for two augmented versions of an input in (or not in) its training dataset. We evaluate EncoderMI on image encoders pre-trained on multiple datasets by ourselves as well as the Contrastive Language-Image Pre-training (CLIP) image encoder, which is pre-trained on 400 million (image, text) pairs collected from the Internet and released by OpenAI. Our results show that EncoderMI can achieve high accuracy, precision, and recall. We also explore a countermeasure against EncoderMI via preventing overfitting through early stopping. Our results show that it achieves trade-offs between accuracy of EncoderMI and utility of the image encoder, i.e., it can reduce the accuracy of EncoderMI, but it also incurs classification accuracy loss of the downstream classifiers built based on the image encoder.},
booktitle = {Proceedings of the 2021 ACM SIGSAC Conference on Computer and Communications Security},
pages = {2081–2095},
numpages = {15},
keywords = {membership inference, contrastive learning, privacy-preserving, machine learning},
location = {Virtual Event, Republic of Korea},
series = {CCS '21}
}

@article{shokri-2016-mi,
  author    = {Reza Shokri and
               Marco Stronati and
               Vitaly Shmatikov},
  title     = {Membership Inference Attacks against Machine Learning Models},
  journal   = {CoRR},
  volume    = {abs/1610.05820},
  year      = {2016},
  url       = {http://arxiv.org/abs/1610.05820},
  eprinttype = {arXiv},
  eprint    = {1610.05820},
  timestamp = {Mon, 13 Aug 2018 16:47:19 +0200},
  biburl    = {https://dblp.org/rec/journals/corr/ShokriSS16.bib},
  bibsource = {dblp computer science bibliography, https://dblp.org}
}

@article{song-2018-auditingmi,
  author    = {Congzheng Song and
               Vitaly Shmatikov},
  title     = {The Natural Auditor: How To Tell If Someone Used Your Words To Train
               Their Model},
  journal   = {CoRR},
  volume    = {abs/1811.00513},
  year      = {2018},
  url       = {http://arxiv.org/abs/1811.00513},
  eprinttype = {arXiv},
  eprint    = {1811.00513},
  timestamp = {Thu, 22 Nov 2018 17:58:30 +0100},
  biburl    = {https://dblp.org/rec/journals/corr/abs-1811-00513.bib},
  bibsource = {dblp computer science bibliography, https://dblp.org}
}

@inproceedings{nasr-2019-federatedmia,
	doi = {10.1109/sp.2019.00065},
  
	url = {https://doi.org/10.1109%2Fsp.2019.00065},
  
	year = 2019,
	month = {may},
  
	publisher = {{IEEE}
},
  
	author = {Milad Nasr and Reza Shokri and Amir Houmansadr},
  
	title = {Comprehensive Privacy Analysis of Deep Learning: Passive and Active White-box Inference Attacks against Centralized and Federated Learning},
  
	booktitle = {2019 {IEEE} Symposium on Security and Privacy ({SP})}
}

@INPROCEEDINGS{dwork-2015-trace,
  author={Dwork, Cynthia and Smith, Adam and Steinke, Thomas and Ullman, Jonathan and Vadhan, Salil},
  booktitle={2015 IEEE 56th Annual Symposium on Foundations of Computer Science}, 
  title={Robust Traceability from Trace Amounts}, 
  year={2015},
  volume={},
  number={},
  pages={650-669},
  doi={10.1109/FOCS.2015.46}}

@InProceedings{dwork-2006-dmns,
author="Dwork, Cynthia
and McSherry, Frank
and Nissim, Kobbi
and Smith, Adam",
editor="Halevi, Shai
and Rabin, Tal",
title="Calibrating Noise to Sensitivity in Private Data Analysis",
booktitle="Theory of Cryptography",
year="2006",
publisher="Springer Berlin Heidelberg",
address="Berlin, Heidelberg",
pages="265--284",
abstract="We continue a line of research initiated in [10,11]on privacy-preserving statistical databases. Consider a trusted server that holds a database of sensitive information. Given a query function f mapping databases to reals, the so-called true answer is the result of applying f to the database. To protect privacy, the true answer is perturbed by the addition of random noise generated according to a carefully chosen distribution, and this response, the true answer plus noise, is returned to the user.",
isbn="978-3-540-32732-5"
}

@inproceedings{liu-2015-celeba,
  title = {Deep Learning Face Attributes in the Wild},
  author = {Liu, Ziwei and Luo, Ping and Wang, Xiaogang and Tang, Xiaoou},
  booktitle = {Proceedings of International Conference on Computer Vision (ICCV)},
  month = {December},
  year = {2015} 
}

@article{caruana-1997-multitask,
  title={Multitask Learning},
  volume = {28},
  ISSN = {0885-6125},
  url = {http://dx.doi.org/10.1023/A:1007379606734},
  DOI = {10.1023/a:1007379606734},
  number = {1},
  journal = {Machine Learning},
  publisher = {Springer Science and Business Media LLC},
  author = {Caruana,  Rich},
  year = {1997},
  pages = {41–75}
}

@INPROCEEDINGS{imagenet,
  author={Deng, Jia and Dong, Wei and Socher, Richard and Li, Li-Jia and Kai Li and Li Fei-Fei},
  booktitle={2009 IEEE Conference on Computer Vision and Pattern Recognition}, 
  title={ImageNet: A large-scale hierarchical image database}, 
  year={2009},
  volume={},
  number={},
  pages={248-255},
  doi={10.1109/CVPR.2009.5206848}}

@inproceedings{loshchilov-2017-adamw,
  title={Decoupled Weight Decay Regularization},
  author={Ilya Loshchilov and Frank Hutter},
  booktitle={International Conference on Learning Representations},
  year={2017}
}

@misc{ResNet,
      title={Deep Residual Learning for Image Recognition}, 
      author={Kaiming He and Xiangyu Zhang and Shaoqing Ren and Jian Sun},
      year={2015},
      eprint={1512.03385},
      archivePrefix={arXiv},
      primaryClass={cs.CV}
}

@article{belkin-2019-reconciling,
	doi = {10.1073/pnas.1903070116},
	url = {https://doi.org/10.1073\%2Fpnas.1903070116},
	year = 2019,
	month = {jul},
	publisher = {Proceedings of the National Academy of Sciences},
	volume = {116},
	number = {32},
	pages = {15849--15854},
	author = {Mikhail Belkin and Daniel Hsu and Siyuan Ma and Soumik Mandal},
	title = {Reconciling modern machine-learning practice and the classical bias{\textendash}variance trade-off},
	journal = {Proceedings of the National Academy of Sciences}
}

@inproceedings{brown-2021-irrelevant,
author = {Brown, Gavin and Bun, Mark and Feldman, Vitaly and Smith, Adam and Talwar, Kunal},
title = {When is Memorization of Irrelevant Training Data Necessary for High-Accuracy Learning?},
year = {2021},
isbn = {9781450380539},
publisher = {Association for Computing Machinery},
address = {New York, NY, USA},
url = {https://doi.org/10.1145/3406325.3451131},
doi = {10.1145/3406325.3451131},
abstract = {Modern machine learning models are complex and frequently encode surprising amounts of information about individual inputs. In extreme cases, complex models appear to memorize entire input examples, including seemingly irrelevant information (social security numbers from text, for example). In this paper, we aim to understand whether this sort of memorization is necessary for accurate learning. We describe natural prediction problems in which every sufficiently accurate training algorithm must encode, in the prediction model, essentially all the information about a large subset of its training examples. This remains true even when the examples are high-dimensional and have entropy much higher than the sample size, and even when most of that information is ultimately irrelevant to the task at hand. Further, our results do not depend on the training algorithm or the class of models used for learning. Our problems are simple and fairly natural variants of the next-symbol prediction and the cluster labeling tasks. These tasks can be seen as abstractions of text- and image-related prediction problems. To establish our results, we reduce from a family of one-way communication problems for which we prove new information complexity lower bounds.},
booktitle = {Proceedings of the 53rd Annual ACM SIGACT Symposium on Theory of Computing},
pages = {123–132},
numpages = {10},
keywords = {Information Complexity, Memorization, Overparameterization},
location = {Virtual, Italy},
series = {STOC 2021}
}

@inproceedings{zhang-2017-rethinking,
title={Understanding deep learning requires rethinking generalization},
author={Chiyuan Zhang and Samy Bengio and Moritz Hardt and Benjamin Recht and Oriol Vinyals},
booktitle={International Conference on Learning Representations},
year={2017},
url={https://openreview.net/forum?id=Sy8gdB9xx}
}

@inproceedings{feldman-2020-longtail,
author = {Feldman, Vitaly},
title = {Does Learning Require Memorization? A Short Tale about a Long Tail},
year = {2020},
isbn = {9781450369794},
publisher = {Association for Computing Machinery},
address = {New York, NY, USA},
url = {https://doi.org/10.1145/3357713.3384290},
doi = {10.1145/3357713.3384290},
abstract = {State-of-the-art results on image recognition tasks are achieved using over-parameterized learning algorithms that (nearly) perfectly fit the training set and are known to fit well even random labels. This tendency to memorize seemingly useless training data labels is not explained by existing theoretical analyses. Memorization of the training data also presents significant privacy risks when the training data contains sensitive personal information and thus it is important to understand whether such memorization is necessary for accurate learning. We provide a simple conceptual explanation and a theoretical model demonstrating that for natural data distributions memorization of labels is necessary for achieving close-to-optimal generalization error. The model is motivated and supported by the results of several recent empirical works. In our model, data is sampled from a mixture of subpopulations and the frequencies of these subpopulations are chosen from some prior. The model allows to quantify the effect of not fitting the training data on the generalization performance of the learned classifier and demonstrates that memorization is necessary whenever frequencies are long-tailed. Image and text data are known to follow such distributions and therefore our results establish a formal link between these empirical phenomena. Our results also have concrete implications for the cost of ensuring differential privacy in learning.},
booktitle = {Proceedings of the 52nd Annual ACM SIGACT Symposium on Theory of Computing},
pages = {954–959},
numpages = {6},
keywords = {Overfitting, Privacy-preserving Learning, Interpolation, Long-tailed Distribution, Generalization},
location = {Chicago, IL, USA},
series = {STOC 2020}
}

@inproceedings{liu2021encodermi,
  title={Encodermi: Membership inference against pre-trained encoders in contrastive learning},
  author={Liu, Hongbin and Jia, Jinyuan and Qu, Wenjie and Gong, Neil Zhenqiang},
  booktitle={Proceedings of the 2021 ACM SIGSAC Conference on Computer and Communications Security},
  pages={2081--2095},
  year={2021}
}

@article{devlin2018bert,
  title={Bert: Pre-training of deep bidirectional transformers for language understanding},
  author={Devlin, Jacob and Chang, Ming-Wei and Lee, Kenton and Toutanova, Kristina},
  journal={arXiv preprint arXiv:1810.04805},
  year={2018}
}

@inproceedings{collobert2008multitasklanguage,
author = {Collobert, Ronan and Weston, Jason},
title = {A unified architecture for natural language processing: deep neural networks with multitask learning},
year = {2008},
isbn = {9781605582054},
publisher = {Association for Computing Machinery},
address = {New York, NY, USA},
url = {https://doi.org/10.1145/1390156.1390177},
doi = {10.1145/1390156.1390177},
abstract = {We describe a single convolutional neural network architecture that, given a sentence, outputs a host of language processing predictions: part-of-speech tags, chunks, named entity tags, semantic roles, semantically similar words and the likelihood that the sentence makes sense (grammatically and semantically) using a language model. The entire network is trained jointly on all these tasks using weight-sharing, an instance of multitask learning. All the tasks use labeled data except the language model which is learnt from unlabeled text and represents a novel form of semi-supervised learning for the shared tasks. We show how both multitask learning and semi-supervised learning improve the generalization of the shared tasks, resulting in state-of-the-art-performance.},
booktitle = {Proceedings of the 25th International Conference on Machine Learning},
pages = {160–167},
numpages = {8},
location = {Helsinki, Finland},
series = {ICML '08}
}

@INPROCEEDINGS{girshick2015fastrcnn,
  author={Girshick, Ross},
  booktitle={2015 IEEE International Conference on Computer Vision (ICCV)}, 
  title={Fast R-CNN}, 
  year={2015},
  volume={},
  number={},
  pages={1440-1448},
  keywords={Training;Proposals;Feature extraction;Object detection;Pipelines;Computer architecture;Open source software},
  doi={10.1109/ICCV.2015.169}}

@article{ramsundar2015drugdiscovery,
  title={Massively Multitask Networks for Drug Discovery},
  author={Bharath Ramsundar and Steven M. Kearnes and Patrick F. Riley and Dale R. Webster and David E. Konerding and Vijay S. Pande},
  journal={ArXiv},
  year={2015},
  volume={abs/1502.02072},
  url={https://api.semanticscholar.org/CorpusID:2127453}
}

@inproceedings{smith2017federated,
 author = {Smith, Virginia and Chiang, Chao-Kai and Sanjabi, Maziar and Talwalkar, Ameet S},
 booktitle = {Advances in Neural Information Processing Systems},
 editor = {I. Guyon and U. Von Luxburg and S. Bengio and H. Wallach and R. Fergus and S. Vishwanathan and R. Garnett},
 pages = {},
 publisher = {Curran Associates, Inc.},
 title = {Federated Multi-Task Learning},
url ={https://proceedings.neurips.cc/paper_files/paper/2017/file/6211080fa89981f66b1a0c9d55c61d0f-Paper.pdf},
 volume = {30},
 year = {2017}
}

@misc{hanzely2021federatedmixture,
title={Federated Learning of a Mixture of Global and Local Models},
author={Filip Hanzely and Peter Richtarik},
year={2021},
url={https://openreview.net/forum?id=nLYMajjctMh}
}

@misc{mansour2020personalization,
      title={Three Approaches for Personalization with Applications to Federated Learning}, 
      author={Yishay Mansour and Mehryar Mohri and Jae Ro and Ananda Theertha Suresh},
      year={2020},
      eprint={2002.10619},
      archivePrefix={arXiv},
      primaryClass={cs.LG},
      url={https://arxiv.org/abs/2002.10619}, 
}

@inproceedings{ghosh2020clusteredfederated,
 author = {Ghosh, Avishek and Chung, Jichan and Yin, Dong and Ramchandran, Kannan},
 booktitle = {Advances in Neural Information Processing Systems},
 editor = {H. Larochelle and M. Ranzato and R. Hadsell and M.F. Balcan and H. Lin},
 pages = {19586--19597},
 publisher = {Curran Associates, Inc.},
 title = {An Efficient Framework for Clustered Federated Learning},
 url = {https://proceedings.neurips.cc/paper_files/paper/2020/file/e32cc80bf07915058ce90722ee17bb71-Paper.pdf},
 volume = {33},
 year = {2020}
}

@inproceedings{ghosn1996finance,
 author = {Ghosn, Joumana and Bengio, Yoshua},
 booktitle = {Advances in Neural Information Processing Systems},
 editor = {M.C. Mozer and M. Jordan and T. Petsche},
 pages = {},
 publisher = {MIT Press},
 title = {Multi-Task Learning for Stock Selection},
 url = {https://proceedings.neurips.cc/paper_files/paper/1996/file/1d72310edc006dadf2190caad5802983-Paper.pdf},
 volume = {9},
 year = {1996}
}

@article{
hu2023privatemtl,
title={Private Multi-Task Learning: Formulation and Applications to Federated Learning},
author={Shengyuan Hu and Steven Wu and Virginia Smith},
journal={Transactions on Machine Learning Research},
issn={2835-8856},
year={2023},
url={https://openreview.net/forum?id=onufdyHvqN},
note={}
}

@ARTICLE {yan2024miamtl,
author = {H. Yan and A. Yan and L. Hu and J. Liang and H. Hu},
journal = {IEEE Transactions on Dependable and Secure Computing},
title = {MTL-Leak: Privacy Risk Assessment in Multi-Task Learning},
year = {2024},
volume = {21},
number = {01},
issn = {1941-0018},
pages = {204-215},
abstract = {Multi-task learning (MTL) supports simultaneous training over multiple related tasks and learns the shared representation. While improving the generalization ability of training on a single task, MTL has higher privacy risk than traditional single-task learning because more sensitive information is extracted and learned in a correlated manner. Unfortunately, very few works have attempted to address the privacy risks posed by MTL. In this article, we first investigate such risk by designing model extraction attack (MEA) and membership inference attack (MIA) in MTL. Then we evaluate the privacy risks on six MTL model architectures and two popular MTL datasets, whose results show that both the number of tasks and the complexity of training data play an important role in the attack performance. Our investigation shows that MTL is more vulnerable than traditional single-task learning under both attacks.},
keywords = {task analysis;multitasking;data models;predictive models;training;data privacy;privacy},
doi = {10.1109/TDSC.2023.3247869},
publisher = {IEEE Computer Society},
address = {Los Alamitos, CA, USA},
month = {jan}
}

@misc{suri2023subjectmembershipinferenceattacks,
      title={Subject Membership Inference Attacks in Federated Learning}, 
      author={Anshuman Suri and Pallika Kanani and Virendra J. Marathe and Daniel W. Peterson},
      year={2023},
      eprint={2206.03317},
      archivePrefix={arXiv},
      primaryClass={cs.LG},
      url={https://arxiv.org/abs/2206.03317}, 
}

@inproceedings{shen2023sharerep,
title={Share Your Representation Only: Guaranteed Improvement of the Privacy-Utility Tradeoff in Federated Learning},
author={Zebang Shen and Jiayuan Ye and Anmin Kang and Hamed Hassani and Reza Shokri},
booktitle={The Eleventh International Conference on Learning Representations },
year={2023},
url={https://openreview.net/forum?id=oJpVVGXu9i}
}

@article{stackoverflow,
title = {Multi-view approach to suggest moderation actions in community question answering sites},
journal = {Information Sciences},
volume = {600},
pages = {144-154},
year = {2022},
issn = {0020-0255},
doi = {https://doi.org/10.1016/j.ins.2022.03.085},
url = {https://www.sciencedirect.com/science/article/pii/S0020025522003127},
author = {Issa Annamoradnejad and Jafar Habibi and Mohammadamin Fazli},
keywords = {Automatic moderation, User-generated content, Community question answering, Multi-view learning, Decision support system, Stack overflow},
abstract = {With thousands of new questions posted every day on popular Q&A websites, there is a need for automated and accurate software solutions to replace manual moderation. In this paper, we address the critical drawbacks of crowdsourcing moderation actions in Q&A communities and demonstrate the ability to automate moderation using the latest machine learning models. From a technical point, we propose a multi-view approach that generates three distinct feature groups that examine a question from three different perspectives: 1) question-related features extracted using a BERT-based regression model; 2) context-related features extracted using a named-entity-recognition model; and 3) general lexical features derived using statistical and analytical methods. As a last step, we train a gradient boosting classifier to predict a moderation action. For evaluation purposes, we created a new dataset consisting of 60,000 Stack Overflow questions classified into three choices of moderation actions. Based on cross-validation on the novel dataset, our approach reaches 95.6\% accuracy as a multiclass task and outperforms all state-of-the-art and previously-published models. Our results clearly demonstrate the high influence of our feature generation components on the overall success of the classifier.}
}

@misc{bertsmall_1,
      title={Well-Read Students Learn Better: On the Importance of Pre-training Compact Models}, 
      author={Iulia Turc and Ming-Wei Chang and Kenton Lee and Kristina Toutanova},
      year={2019},
      eprint={1908.08962},
      archivePrefix={arXiv},
      primaryClass={cs.CL},
      url={https://arxiv.org/abs/1908.08962}, 
}

@misc{bertsmall_2,
      title={Generalization in NLI: Ways (Not) To Go Beyond Simple Heuristics}, 
      author={Prajjwal Bhargava and Aleksandr Drozd and Anna Rogers},
      year={2021},
      eprint={2110.01518},
      archivePrefix={arXiv},
      primaryClass={cs.CL}
}

@misc{paszke2019pytorch,
      title={PyTorch: An Imperative Style, High-Performance Deep Learning Library}, 
      author={Adam Paszke and Sam Gross and Francisco Massa and Adam Lerer and James Bradbury and Gregory Chanan and Trevor Killeen and Zeming Lin and Natalia Gimelshein and Luca Antiga and Alban Desmaison and Andreas Köpf and Edward Yang and Zach DeVito and Martin Raison and Alykhan Tejani and Sasank Chilamkurthy and Benoit Steiner and Lu Fang and Junjie Bai and Soumith Chintala},
      year={2019},
      eprint={1912.01703},
      archivePrefix={arXiv},
      primaryClass={cs.LG},
      url={https://arxiv.org/abs/1912.01703}, 
}

@InProceedings{MNLI,
  author = "Williams, Adina
            and Nangia, Nikita
            and Bowman, Samuel",
  title = "A Broad-Coverage Challenge Corpus for 
           Sentence Understanding through Inference",
  booktitle = "Proceedings of the 2018 Conference of 
               the North American Chapter of the 
               Association for Computational Linguistics:
               Human Language Technologies, Volume 1 (Long
               Papers)",
  year = "2018",
  publisher = "Association for Computational Linguistics",
  pages = "1112--1122",
  location = "New Orleans, Louisiana",
  url = "http://aclweb.org/anthology/N18-1101"
}

@misc{zarifzadeh2024lowcostmia,
      title={Low-Cost High-Power Membership Inference Attacks}, 
      author={Sajjad Zarifzadeh and Philippe Liu and Reza Shokri},
      year={2024},
      eprint={2312.03262},
      archivePrefix={arXiv},
      primaryClass={stat.ML},
      url={https://arxiv.org/abs/2312.03262}, 
}

@misc{chen-2018-gradnormalization,
      title={GradNorm: Gradient Normalization for Adaptive Loss Balancing in Deep Multitask Networks}, 
      author={Zhao Chen and Vijay Badrinarayanan and Chen-Yu Lee and Andrew Rabinovich},
      year={2018},
      eprint={1711.02257},
      archivePrefix={arXiv},
      primaryClass={cs.CV},
      url={https://arxiv.org/abs/1711.02257}, 
}

@inproceedings{melis-2018-exploitingunintendedcollab,
  author       = {Luca Melis and
                  Congzheng Song and
                  Emiliano De Cristofaro and
                  Vitaly Shmatikov},
  title        = {Exploiting Unintended Feature Leakage in Collaborative Learning},
  booktitle    = {2019 {IEEE} Symposium on Security and Privacy, {SP} 2019, San Francisco,
                  CA, USA, May 19-23, 2019},
  pages        = {691--706},
  publisher    = {{IEEE}},
  year         = {2019},
  url          = {https://doi.org/10.1109/SP.2019.00029},
  doi          = {10.1109/SP.2019.00029},
  timestamp    = {Wed, 16 Oct 2019 14:14:51 +0200},
  biburl       = {https://dblp.org/rec/conf/sp/MelisSCS19.bib},
  bibsource    = {dblp computer science bibliography, https://dblp.org}
}

@inproceedings{song-2020-embeddingleakage,
author = {Song, Congzheng and Raghunathan, Ananth},
title = {Information Leakage in Embedding Models},
year = {2020},
isbn = {9781450370899},
publisher = {Association for Computing Machinery},
address = {New York, NY, USA},
url = {https://doi.org/10.1145/3372297.3417270},
doi = {10.1145/3372297.3417270},
abstract = {Embeddings are functions that map raw input data to low-dimensional vector representations, while preserving important semantic information about the inputs. Pre-training embeddings on a large amount of unlabeled data and fine-tuning them for downstream tasks is now a de facto standard in achieving state of the art learning in many domains.We demonstrate that embeddings, in addition to encoding generic semantics, often also present a vector that leaks sensitive information about the input data. We develop three classes of attacks to systematically study information that might be leaked by embeddings. First, embedding vectors can be inverted to partially recover some of the input data. As an example, we show that our attacks on popular sentence embeddings recover between 50\%--70\% of the input words (F1 scores of 0.5--0.7). Second, embeddings may reveal sensitive attributes inherent in inputs and independent of the underlying semantic task at hand. Attributes such as authorship of text can be easily extracted by training an inference model on just a handful of labeled embedding vectors. Third, embedding models leak moderate amount of membership information for infrequent training data inputs. We extensively evaluate our attacks on various state-of-the-art embedding models in the text domain. We also propose and evaluate defenses that can prevent the leakage to some extent at a minor cost in utility.},
booktitle = {Proceedings of the 2020 ACM SIGSAC Conference on Computer and Communications Security},
pages = {377–390},
numpages = {14},
keywords = {privacy, machine learning, embeddings},
location = {Virtual Event, USA},
series = {CCS '20}
}

@misc{yu-2022-FLadaptation,
      title={Salvaging Federated Learning by Local Adaptation}, 
      author={Tao Yu and Eugene Bagdasaryan and Vitaly Shmatikov},
      year={2022},
      eprint={2002.04758},
      archivePrefix={arXiv},
      primaryClass={cs.LG},
      url={https://arxiv.org/abs/2002.04758}, 
}

@ARTICLE{tan-2023-personalizedFL,
  author={Tan, Alysa Ziying and Yu, Han and Cui, Lizhen and Yang, Qiang},
  journal={IEEE Transactions on Neural Networks and Learning Systems}, 
  title={Towards Personalized Federated Learning}, 
  year={2023},
  volume={34},
  number={12},
  pages={9587-9603},
  keywords={Data models;Training;Adaptation models;Collaborative work;Data privacy;Servers;Edge computing;Federated learning;Edge computing;federated learning (FL);non-IID data;personalized FL (PFL);privacy preservation;statistical heterogeneity},
  doi={10.1109/TNNLS.2022.3160699}}

@inproceedings{song-2020-overlearning,
title={Overlearning Reveals Sensitive Attributes},
author={Congzheng Song and Vitaly Shmatikov},
booktitle={International Conference on Learning Representations},
year={2020},
url={https://openreview.net/forum?id=SJeNz04tDS}
}

@inproceedings{mcmahan-2018-learningdplm,
title={Learning Differentially Private Recurrent Language Models},
author={H. Brendan McMahan and Daniel Ramage and Kunal Talwar and Li Zhang},
booktitle={International Conference on Learning Representations},
year={2018},
url={https://openreview.net/forum?id=BJ0hF1Z0b},
}

@misc{caldas-2019-leafbenchmark,
      title={LEAF: A Benchmark for Federated Settings}, 
      author={Sebastian Caldas and Sai Meher Karthik Duddu and Peter Wu and Tian Li and Jakub Konečný and H. Brendan McMahan and Virginia Smith and Ameet Talwalkar},
      year={2019},
      eprint={1812.01097},
      archivePrefix={arXiv},
      primaryClass={cs.LG},
      url={https://arxiv.org/abs/1812.01097}, 
}

@inproceedings{kingma-2015-adam,
  author       = {Diederik P. Kingma and
                  Jimmy Ba},
  editor       = {Yoshua Bengio and
                  Yann LeCun},
  title        = {Adam: {A} Method for Stochastic Optimization},
  booktitle    = {3rd International Conference on Learning Representations, {ICLR} 2015,
                  San Diego, CA, USA, May 7-9, 2015, Conference Track Proceedings},
  year         = {2015},
  url          = {http://arxiv.org/abs/1412.6980},
  timestamp    = {Thu, 25 Jul 2019 14:25:37 +0200},
  biburl       = {https://dblp.org/rec/journals/corr/KingmaB14.bib},
  bibsource    = {dblp computer science bibliography, https://dblp.org}
}

@inproceedings{chua-2024-privacyunituserlevel,
title={Mind the Privacy Unit! User-Level Differential Privacy for Language Model Fine-Tuning},
author={Lynn Chua and Badih Ghazi and Yangsibo Huang and Pritish Kamath and Ravi Kumar and Daogao Liu and Pasin Manurangsi and Amer Sinha and Chiyuan Zhang},
booktitle={First Conference on Language Modeling},
year={2024},
url={https://openreview.net/forum?id=Jd0bCD12DS}
}

@misc{charles-2024-finetuningllmuserlevel,
      title={Fine-Tuning Large Language Models with User-Level Differential Privacy}, 
      author={Zachary Charles and Arun Ganesh and Ryan McKenna and H. Brendan McMahan and Nicole Mitchell and Krishna Pillutla and Keith Rush},
      year={2024},
      eprint={2407.07737},
      archivePrefix={arXiv},
      primaryClass={cs.LG},
      url={https://arxiv.org/abs/2407.07737}, 
}

@inproceedings{cummings-2022-userlevelmean,
author = {Cummings, Rachel and Feldman, Vitaly and McMillan, Audra and Talwar, Kunal},
title = {Mean estimation with user-level privacy under data heterogeneity},
year = {2022},
isbn = {9781713871088},
publisher = {Curran Associates Inc.},
address = {Red Hook, NY, USA},
abstract = {A key challenge in many modern data analysis tasks is that user data is heterogeneous. Different users may possess vastly different numbers of data points. More importantly, it cannot be assumed that all users sample from the same underlying distribution. This is true, for example in language data, where different speech styles result in data heterogeneity. In this work we propose a simple model of heterogeneous user data that differs in both distribution and quantity of data, and we provide a method for estimating the population-level mean while preserving userlevel differential privacy. We demonstrate asymptotic optimality of our estimator and also prove general lower bounds on the error achievable in our problem.},
booktitle = {Proceedings of the 36th International Conference on Neural Information Processing Systems},
articleno = {2113},
numpages = {13},
location = {New Orleans, LA, USA},
series = {NIPS '22}
}

@inbook{agarwal-2025-personlevel,
author = {Sushant Agarwal and Gautam Kamath and Mahbod Majid and Argyris Mouzakis and Rose Silver and Jonathan Ullman},
title = {Private Mean Estimation with Person-Level Differential Privacy},
booktitle = {Proceedings of the 2025 Annual ACM-SIAM Symposium on Discrete Algorithms (SODA)},
chapter = {},
year = {2025},
pages = {2819-2880},
doi = {10.1137/1.9781611978322.92},
URL = {https://epubs.siam.org/doi/abs/10.1137/1.9781611978322.92},
eprint = {https://epubs.siam.org/doi/pdf/10.1137/1.9781611978322.92},
    abstract = { Abstract We study person-level differentially private (DP) mean estimation in the case where each person holds multiple samples. DP here requires the usual notion of distributional stability when all of a person’s datapoints can be modified. Informally, if n people each have m samples from an unknown d-dimensional distribution with bounded k-th moments, we show that people are necessary and sufficient to estimate the mean up to distance α in ℓ2-norm under ε-differential privacy (and its common relaxations). In the multivariate setting, we give computationally efficient algorithms under approximate DP and computationally inefficient algorithms under pure DP, and our nearly matching lower bounds hold for the most permissive case of approximate DP. Our computationally efficient estimators are based on the standard clip-and-noise framework, but the analysis for our setting requires both new algorithmic techniques and new analyses. In particular, our new bounds on the tails of sums of independent, vector-valued, bounded-moments random variables may be of interest. *Authors are listed in alphabetical order. }
}

@inproceedings{levy-2021-learninguserdp,
author = {Levy, Daniel and Sun, Ziteng and Amin, Kareem and Kale, Satyen and Kulesza, Alex and Mohri, Mehryar and Suresh, Ananda Theertha},
title = {Learning with user-level privacy},
year = {2021},
isbn = {9781713845393},
publisher = {Curran Associates Inc.},
address = {Red Hook, NY, USA},
abstract = {We propose and analyze algorithms to solve a range of learning tasks under user-level differential privacy constraints. Rather than guaranteeing only the privacy of individual samples, user-level DP protects a user's entire contribution (m ≥ 1 samples), providing more stringent but more realistic protection against information leaks. We show that for high-dimensional mean estimation, empirical risk minimization with smooth losses, stochastic convex optimization, and learning hypothesis classes with finite metric entropy, the privacy cost decreases as O(1/√m) as users provide more samples. In contrast, when increasing the number of users n, the privacy cost decreases at a faster O(1/n) rate. We complement these results with lower bounds showing the minimax optimality of our algorithms for mean estimation and stochastic convex optimization. Our algorithms rely on novel techniques for private mean estimation in arbitrary dimension with error scaling as the concentration radius τ of the distribution rather than the entire range.},
booktitle = {Proceedings of the 35th International Conference on Neural Information Processing Systems},
articleno = {954},
numpages = {14},
series = {NIPS '21}
}

@misc{nguyen-2023-flwarmup,
      title={Where to Begin? On the Impact of Pre-Training and Initialization in Federated Learning}, 
      author={John Nguyen and Jianyu Wang and Kshitiz Malik and Maziar Sanjabi and Michael Rabbat},
      year={2023},
      eprint={2206.15387},
      archivePrefix={arXiv},
      primaryClass={cs.LG},
      url={https://arxiv.org/abs/2206.15387}, 
}

@INPROCEEDINGS {hartmann-2023-distributioninference,
author = { Hartmann, Valentin and Meynent, Leo and Peyrard, Maxime and Dimitriadis, Dimitrios and Tople, Shruti and West, Robert },
booktitle = { 2023 IEEE Conference on Secure and Trustworthy Machine Learning (SaTML) },
title = {{ Distribution Inference Risks: Identifying and Mitigating Sources of Leakage }},
year = {2023},
volume = {},
ISSN = {},
pages = {136-149},
abstract = { A large body of work shows that machine learning (ML) models can leak sensitive or confidential information about their training data. Recently, leakage due to distribution inference (or property inference) attacks is gaining attention. In this attack, the goal of an adversary is to infer distributional information about the training data. So far, research on distribution inference has focused on demonstrating successful attacks, with little attention given to identifying the potential causes of the leakage and to proposing mitigations. To bridge this gap, as our main contribution, we theoretically and empirically analyze the sources of information leakage that allows an adversary to perpetrate distribution inference attacks. We identify three sources of leakage: (1) memorizing specific information about the $\mathbb{E}[Y\vert X]$ (expected label given the feature values) of interest to the adversary, (2) wrong inductive bias of the model, and (3) finiteness of the training data. Next, based on our analysis, we propose principled mitigation techniques against distribution inference attacks. Specifically, we demonstrate that causal learning techniques are more resilient to a particular type of distribution inference risk termed distributional membership inference than associative learning methods. And lastly, we present a formalization of distribution inference that allows for reasoning about more general adversaries than was previously possible. },
keywords = {Learning systems;Training;Bridges;Training data;Machine learning;Data models;Libraries},
doi = {10.1109/SaTML54575.2023.00018},
url = {https://doi.ieeecomputersociety.org/10.1109/SaTML54575.2023.00018},
publisher = {IEEE Computer Society},
address = {Los Alamitos, CA, USA},
month =Feb}

@unknown{maini-2024-datasetinference,
author = {Maini, Pratyush and Jia, Hengrui and Papernot, Nicolas and Dziedzic, Adam},
year = {2024},
month = {06},
pages = {},
title = {LLM Dataset Inference: Did you train on my dataset?},
doi = {10.48550/arXiv.2406.06443}
}

@inproceedings{ren-2025-datasetself,
author = {Ren, Jie and Chen, Kangrui and Chen, Chen and Sehwag, Vikash and Xing, Yue and Tang, Jiliang and Lyu, Lingjuan},
title = {Self-Comparison for Dataset-Level Membership Inference in Large (Vision-)Language Model},
year = {2025},
isbn = {9798400712746},
publisher = {Association for Computing Machinery},
address = {New York, NY, USA},
url = {https://doi.org/10.1145/3696410.3714703},
doi = {10.1145/3696410.3714703},
abstract = {Large Language Models (LLMs) and Vision-Language Models (VLMs) have made significant advancements in a wide range of natural language processing and vision-language tasks. Access to large web-scale datasets has been a key factor in their success. However, concerns have been raised about the unauthorized use of copyrighted materials and potential copyright infringement. Existing methods, such as sample-level Membership Inference Attacks (MIA) and distribution-based dataset, inference distinguish member and non-member data by leveraging the common observation that models tend to memorize and show greater confidence in member data. Nevertheless, these methods face challenges when applied to LLMs and VLMs, such as the requirement for ground-truth member data or non-member data that shares the same distribution as the test data. In this paper, we propose a novel dataset-level membership inference method based on Self-Comparison. We find that a member prefix followed by a non-member suffix (paraphrased from a member suffix) can further trigger the model's memorization on training data. Instead of directly comparing member and non-member data, we introduce paraphrasing to the second half of the sequence and evaluate how the likelihood changes before and after paraphrasing. Unlike prior approaches, our method does not require access to ground-truth member data or non-member data in identical distribution, making it more practical. Extensive experiments demonstrate that our proposed method outperforms traditional MIA and dataset inference techniques across various datasets and models, including GPT-4o.},
booktitle = {Proceedings of the ACM on Web Conference 2025},
pages = {910–920},
numpages = {11},
keywords = {LLM, VLM, dataset protection, membership inference},
location = {Sydney NSW, Australia},
series = {WWW '25}
}

@INPROCEEDINGS {humphries-2023-dependencies,
author = { Humphries, Thomas and Oya, Simon and Tulloch, Lindsey and Rafuse, Matthew and Goldberg, Ian and Hengartner, Urs and Kerschbaum, Florian },
booktitle = { 2023 IEEE 36th Computer Security Foundations Symposium (CSF) },
title = {{ Investigating Membership Inference Attacks under Data Dependencies }},
year = {2023},
volume = {},
ISSN = {},
pages = {473-488},
abstract = { Training machine learning models on privacy-sensitive data has become a popular practice, driving innovation in ever-expanding fields. This has opened the door to new attacks that can have serious privacy implications. One such attack, the Membership Inference Attack (MIA), exposes whether or not a particular data point was used to train a model. A growing body of literature uses Differentially Private (DP) training algorithms as a defence against such attacks. However, these works evaluate the defence under the restrictive assumption that all members of the training set, as well as non-members, are independent and identically distributed. This assumption does not hold for many real-world use cases in the literature. Motivated by this, we evaluate membership inference with statistical dependencies among samples and explain why DP does not provide meaningful protection (the privacy parameter $\epsilon$ scales with the training set size $n$) in this more general case. We conduct a series of empirical evaluations with off-the-shelf MIAs using training sets built from real-world data showing different types of dependencies among samples. Our results reveal that training set dependencies can severely increase the performance of MIAs, and therefore assuming that data samples are statistically independent can significantly underestimate the performance of MIAs. },
keywords = {Training;Privacy;Data privacy;Technological innovation;Machine learning;Fasteners;Data models},
doi = {10.1109/CSF57540.2023.00013},
url = {https://doi.ieeecomputersociety.org/10.1109/CSF57540.2023.00013},
publisher = {IEEE Computer Society},
address = {Los Alamitos, CA, USA},
month =Jul}

@unknown{rezaei-2022-subpopulationbasedmia,
author = {Rezaei, Shahbaz and Liu, Xin},
year = {2022},
month = {03},
pages = {},
title = {An Efficient Subpopulation-based Membership Inference Attack},
doi = {10.48550/arXiv.2203.02080}
}

\appendix

\section{Models and Datasets}

In this section, we provide additional details for the MTL model training loop described in Section~\ref{alg:mtl_training}. 

\subsection{MTL Training} \label{appdx:training}

We use the Adam~\cite{kingma-2015-adam} and AdamW~\cite{loshchilov-2017-adamw} optimizers with weight decay to perform MTL training. Because each task could have only a small number of samples, we apply aggressive regularization to the task-specific layers to ensure that the final layers do not overfit to low-quality embeddings during training. When training our vision models where the total number of samples is small, we induce more regularization on top of weight decay by applying standard augmentations to the data such as random crops, rotations, horizontal flips, translations, and scaling. 

In our vision experiments, the CelebA dataset has very few samples per task-specific layer; 22 per celebrity after holding out samples for the weak adversary. Thus, we do two iterations of "warm start" training~\cite{nguyen-2023-flwarmup}, where we freeze the shared representation and optimize the task-specific layers for 40 epochs with with a low learning rate (e.g. $\eta = 10^{-4}$).  In our language experiments where we use BERT models, we do 20 epochs of warm start training. We also apply a linear projection on the typical, high-dimensional penultimate layer of our BERT and ResNet models to 16 or 32 dimensions in order to reduce the chance of task-specific layers overfitting. Each training step in MTL is performed over a minibatch of tasks, rather than the entire accumulated gradient of the dataset. We additionally use gradient clipping ($C = 1$), gradient normalization across tasks~\cite{chen-2018-gradnormalization}, and a smaller weight decay parameter (e.g. $\lambda = 10^{-4}$) than the task-specific layers (e.g. $\lambda = 10^{-3}$) to regularize the shared layers. In total, we train for 200-300 epochs, or communication rounds, for all of our models, passing through all of the tasks in the dataset each time.

\section{Proofs for Section~\ref{sec:mean_est}} \label{sec:proofs}

In this section, we provide the proofs for our theorems in Section~\ref{sec:mean_est}. Because the random variables in our estimation and attack are nested, we use the subscript $X$ to denote taking probability over sampling the data and subscript $\mu$ to denote taking probability over sampling tasks.
\newcommand{\mubar}[0]{\bar{\mu}}
\newcommand{\muhat}[0]{\hat{\mu}}
\newcommand{\muB}[0]{\mu_B}

\newtheorem*{strongadv}{Theorem~\ref{thm:strong_adv}}
\begin{strongadv}[Strong Adversary]  

Let $\tau$ be the index of the target task and suppose that the challenger sends the adversary a challenge set of $k$ samples $B$ such that, when the task is \textbf{IN}, the $k$ samples are drawn uniformly at random from $X_\tau$. Then the expected value of the statistic, $z$, when $\mu_\tau$ is \textbf{OUT} is 
    \[
        \ex{\mu, \: X}{z_{\textit{OUT}}} = 0
    \]
    and when  $\mu_\tau$ is \textbf{IN}, the expected value of $z$ is
    \[
        \ex{\mu, X}{z_{\textit{IN}}} = \frac{d}{T} \Big( \bar{\sigma}^2 + \frac{\sigma^2}{N} \Big)
    \]
\end{strongadv}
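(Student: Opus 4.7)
}

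The plan is to handle the two cases separately, relying on a symmetry/independence argument for \textbf{OUT} and a careful decomposition of $\hat{\mu} - \bar{\mu}$ for \textbf{IN}. For the \textbf{OUT} case, observe that when $\mu_\tau$ is not among the $T$ training tasks, the challenge batch $B$ (drawn from a fresh $\mu_\tau \sim \mathcal{N}(\bar{\mu}, \bar{\sigma}^2 \mathbb{I}_d)$ and then fresh samples around $\mu_\tau$) is statistically independent of $\hat{\mu}$. Since $\ex{}{\hat{\mu}} = \ex{}{\mu_B} = \bar{\mu}$, the expected inner product factorizes into a product of two zero vectors, yielding $\ex{}{z_{\text{OUT}}} = 0$.

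For the \textbf{IN} case, I will split $\hat{\mu} - \bar{\mu}$ into the contribution of the target task and the rest:
\[
\hat{\mu} - \bar{\mu} = \underbrace{\frac{1}{TN} \sum_{j=1}^{N} (X_{\tau,j} - \bar{\mu})}_{A} \; + \; \underbrace{\frac{1}{TN} \sum_{i \neq \tau} \sum_{j=1}^{N} (X_{i,j} - \bar{\mu})}_{C}.
\]
The term $C$ is independent of everything involving task $\tau$ (including $\mu_B$), and both $C$ and $\mu_B - \bar{\mu}$ have mean zero, so $\ex{}{\inn{C}{\mu_B - \bar{\mu}}} = 0$ and it suffices to compute $\ex{}{\inn{A}{\mu_B - \bar{\mu}}}$.

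Writing $\mu_B - \bar{\mu} = \frac{1}{k}\sum_{\ell=1}^{k}(X_{\tau, i_\ell} - \bar{\mu})$ where each $i_\ell$ is uniform on $[N]$, the expected inner product becomes a sum over $(j, \ell)$ pairs of terms $\ex{}{\inn{X_{\tau,j} - \bar{\mu}}{X_{\tau,i_\ell} - \bar{\mu}}}$. I would evaluate each such term by conditioning on $\mu_\tau$ and using the decomposition $X_{\tau,j} - \bar{\mu} = (X_{\tau,j} - \mu_\tau) + (\mu_\tau - \bar{\mu})$: the cross terms vanish because conditional on $\mu_\tau$ the sample noise has zero mean, leaving $d(\sigma^2 + \bar{\sigma}^2)$ when $i_\ell = j$ (same sample contributes both noise and task variance) and $d\bar{\sigma}^2$ when $i_\ell \neq j$ (only the shared task-mean offset survives). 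Since $\Pr[i_\ell = j] = 1/N$, each of the $Nk$ pairs contributes $\frac{d\sigma^2}{N} + d\bar{\sigma}^2$ on average; multiplying by $Nk$ and dividing by the normalization $TNk$ yields exactly $\frac{d}{T}(\bar{\sigma}^2 + \frac{\sigma^2}{N})$.

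The main obstacle will be bookkeeping: ensuring the $1/N$ probability of sampling the exact same index is correctly propagated (regardless of whether $B$ is drawn with or without replacement from $X_\tau$, the marginal is uniform), and handling the nested expectations over the task mean $\mu_\tau$ and the samples $X_\tau$ so that the $\bar{\sigma}^2$ and $\sigma^2/N$ contributions cleanly separate. Once the conditional-on-$\mu_\tau$ decomposition is in place, the rest reduces to careful but routine accounting.
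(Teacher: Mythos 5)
Your proposal is correct and follows essentially the same route as the paper's proof: the \textbf{OUT} case via independence of $\hat{\mu}$ and $\mu_B$, and the \textbf{IN} case by isolating the target task's contribution to $\hat{\mu}$, distinguishing same-sample from different-sample pairs, and nesting the expectation over the data inside the expectation over $\mu_\tau$ to separate the $\sigma^2/N$ and $\bar{\sigma}^2$ terms. The only differences are cosmetic: you center everything against $\bar{\mu}$ up front and track the index collision probability $1/N$ explicitly, whereas the paper works with uncentered products, subtracts $\bar{\mu}^2$ at the end, and counts the $k$ matching pairs out of $Nk$ directly — the accounting is identical.
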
 

\begin{proof}
In the \textbf{strong} case, for tasks that were included, the adversary has access to samples which were used to compute the mean, $\muhat$. Suppose that the strong adversary computes $z$, then, when $\mu_\tau$ is \textbf{OUT},
\begin{align*}
    \ex{\mu, X}{z_{\textit{OUT}}} &= \ex{\mu, X}{\inn{\muhat - \mubar}{\muB - \mubar}} \\
    &= \inn{\ex{\mu, X}{\muhat - \mubar}}{\ex{\mu, X}{\muB - \mubar}} \\ 
    &= 0
\end{align*}

In contrast, when the batch of points is \textbf{IN}, 
\begin{align*}
    \ex{\mu, X}{z_{\textit{IN}}} &= \ex{\mu, X}{\inn{\muhat- \mubar}{\muB - \mubar}} \\ 
    &= \sum_{i=1}^{d} \ex{\mu, X}{(\muhat- \mubar)_i \cdot (\muB - \mubar)_i} \\ 
\end{align*}

\noindent For succinctness, we drop the summation over $d$ dimensions as they are i.i.d. 
\begin{align*}
    \ex{\mu, X}{(\muhat - \mubar) \cdot (\muB - \mubar)} &= \ex{\mu, X}{\muhat\muB - \mubar\muB - \mubar\muhat + \mubar^{2}} \\
    &= \ex{\mu, X}{\muhat \muB} - \mubar \ex{\mu, X}{\muB} - \mubar \ex{\mu, X}{\muhat} + \mubar^{2} \\ 
    &= \ex{\mu, X}{\muhat \muB} - \mubar^2 
\end{align*}

\noindent Now, expanding $\muhat$ and $\muB$
\begin{align*}
\ex{\mu, X}{\muhat \muB} - \mubar^2  &= \ex{\mu, X}{\left( \frac{1}{T} \sum_{i=1}^{T}  \frac{1}{N} \sum_{j=1}^{N} X_{i,j} \right) \left( \frac{1}{k} \sum_{j=1}^{k} X_{\tau , j} \right) } - \mubar^2 
\end{align*}

We assume that $k \leq N$. Separating the correlated and uncorrelated tasks, we have
\begin{align*}
    &= \ex{\mu, X}{ \left( \frac{1}{T} \sum_{i \neq \tau}  \frac{1}{N} \sum_{j=1}^{N} X_{i,j} \right) \left( \frac{1}{k} \sum_{j=1}^{k} X_{\tau , j} \right) }  \\ &+   
    {\ex{\mu, X}{\left( \frac{1}{TN} \sum_{j=1}^{N} X_{\tau,j} \right) \left( \frac{1}{k} \sum_{j=1}^{k} X_{\tau,j} \right)} } - \mubar^2  \\ 
    &= \ex{\mu, X}{ \left( \frac{1}{T} \sum_{i \neq \tau}  \frac{1}{N} \sum_{j=1}^{N} X_{i,j} \right)} \cdot \ex{\mu, X}{\left( \frac{1}{k} \sum_{j=1}^{k} X_{\tau , j} \right) } \\ &+  
    {\ex{\mu, X}{\left( \frac{1}{TN} \sum_{j=1}^{N} X_{\tau,j} \right) \left( \frac{1}{k} \sum_{j=1}^{k} X_{\tau,j} \right)} } - \mubar^2  \\ 
    &= \frac{T-1}{T} \mubar^2 + {\ex{\mu, X}{\left( \frac{1}{TN} \sum_{j=1}^{N} X_{\tau,j} \right) \left( \frac{1}{k} \sum_{j=1}^{k} X_{\tau,j} \right)} } - \mubar^2  \\ 
    &=  \frac{1}{TNk} \left( {\sum_{j=1}^{k}\ex{\mu, X}{ X_{\tau,j}^{2}}  + \sum_{j \neq \ell} \ex{\mu, X}{ X_{\tau,j} \cdot X_{\tau, \ell} } }  -  Nk \mubar^2  \right)
\end{align*}

Now, taking the expectation over sampling the data then taking the expectation over sampling tasks,
\begin{align*}
    &=  \frac{1}{TNk} \left( {\sum_{j=1}^{k}\ex{\mu}{ \mu_{\tau}^{2} + \sigma^{2} }  + \sum_{j \neq \ell} \ex{\mu}{ \mu_{\tau}^{2} } }  -  Nk \mubar^2  \right)  \\ 
    &=  \frac{1}{TNk} \left(  k (\mubar^{2} + \bar{\sigma}^{2} + \sigma^{2})  + (Nk - k)  (\mubar^{2} + \bar{\sigma}^{2})    -  Nk \mubar^2  \right) \\ 
    &=  \frac{1}{TNk} \left(    k \bar{\sigma}^{2} + k\sigma^{2}  + (Nk - k)  \bar{\sigma}^{2}    \right)  \\ 
    &=  \frac{1}{TNk} \left(      Nk \bar{\sigma}^{2} + k\sigma^{2}    \right)  \\ 
    &=  \frac{\bar{\sigma}^{2}}{T} + \frac{\sigma^{2}}{NT}
\end{align*}

Summing over the $d$ i.i.d. dimensions, we get 
\[
\ex{\mu, X}{z_{\textit{IN}}} = \frac{d}{T} (\bar{\sigma}^{2} + \frac{\sigma^{2}}{N})
\]

\end{proof}

\newtheorem*{weakadv}{Theorem~\ref{thm:weak_adv}}
\begin{weakadv}[Weak Adversary] 
    Let $\tau$ be the index of the target task and suppose the challenger sends the adversary a challenge set of $k$ samples $B$ such that, when the task is \textbf{IN}, the $k$ samples are drawn i.i.d. from the same distribution as $X_\tau$, $\mathcal{N}(\mu_\tau, \sigma^2 \mathbb{I}_d)$. Then the expected value of the statistic, $z$, when $\mu_\tau$ is \textbf{OUT} is 0. When $\mu_\tau$ is \textbf{IN}, the expected value of $z$ is 
    \[
        \ex{\mu, X}{z_{\textit{IN}}} = \frac{d}{T} \bar{\sigma}^2
    \]
\end{weakadv}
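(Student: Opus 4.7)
The plan is to mirror the proof of Theorem~\ref{thm:strong_adv} for the strong adversary, with the single critical modification that the challenge batch $B$ is now conditionally independent of the training samples $X_\tau$ given the task mean $\mu_\tau$, rather than being a subset of $X_\tau$. First I would handle the OUT case in one line: when $\mu_\tau$ is OUT, the task mean $\mu_\tau$ is drawn freshly from $\mathcal{N}(\bar{\mu}, \bar{\sigma}^2 \mathbb{I}_d)$ independently of everything in $\hat{\mu}$, so $\hat{\mu} - \bar{\mu}$ and $\mu_B - \bar{\mu}$ are independent with mean zero, giving $\mathbb{E}[z_{\text{OUT}}] = 0$.

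For the IN case, I would reduce to a single coordinate using that the $d$ dimensions are i.i.d., and expand
\[
\mathbb{E}[(\hat{\mu} - \bar{\mu})(\mu_B - \bar{\mu})] = \mathbb{E}[\hat{\mu}\,\mu_B] - \bar{\mu}^2,
\]
exactly as in the strong proof. Then I would split $\hat{\mu} = \frac{1}{T}\sum_{i \neq \tau} \frac{1}{N}\sum_j X_{i,j} + \frac{1}{TN}\sum_j X_{\tau,j}$. The first sum is independent of $\mu_\tau$ and therefore independent of $\mu_B$, so its contribution factors to $\frac{T-1}{T}\bar{\mu}^2$.

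The key step (and the only place the proof diverges from the strong case) is evaluating $\mathbb{E}\bigl[\bigl(\frac{1}{TN}\sum_{j=1}^N X_{\tau,j}\bigr)\bigl(\frac{1}{k}\sum_{\ell=1}^k B_\ell\bigr)\bigr]$. Conditioning on $\mu_\tau$, every $X_{\tau,j}$ and every $B_\ell$ is an independent $\mathcal{N}(\mu_\tau, \sigma^2)$ variable, so for \emph{every} pair $(j,\ell)$ one gets $\mathbb{E}[X_{\tau,j} B_\ell \mid \mu_\tau] = \mu_\tau^2$ with no diagonal variance term. Averaging over $\mu_\tau$ yields $\bar{\mu}^2 + \bar{\sigma}^2$ for each pair, and the double average over $j,\ell$ plus the $1/T$ prefactor gives $\frac{1}{T}(\bar{\mu}^2 + \bar{\sigma}^2)$. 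Adding the two contributions and subtracting $\bar{\mu}^2$ produces $\bar{\sigma}^2/T$ per coordinate, and summing over the $d$ i.i.d. coordinates yields $\mathbb{E}[z_{\text{IN}}] = \frac{d}{T}\bar{\sigma}^2$.

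There is no real technical obstacle here; the proof is structurally identical to Theorem~\ref{thm:strong_adv} but easier. The only conceptual point worth emphasizing is why the $\sigma^2/N$ term vanishes: in the strong case it arose from diagonal pairs where $B_\ell$ coincided with $X_{\tau,j}$ (inducing an extra $\sigma^2$ from the within-sample variance), but in the weak case the challenge batch shares only the latent mean $\mu_\tau$ with the training data, so no diagonal coupling exists and the signal is driven entirely by the between-task variance $\bar{\sigma}^2$.
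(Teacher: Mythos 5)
Your proposal is correct and follows essentially the same route as the paper's proof: the same reduction to a single coordinate, the same split of $\hat{\mu}$ into the $i \neq \tau$ and $i = \tau$ parts, and the same key observation that conditioned on $\mu_\tau$ the challenge batch is independent of $X_\tau$, so every cross term contributes $\bar{\mu}^2 + \bar{\sigma}^2$ with no diagonal $\sigma^2$ correction. Your closing remark about why the $\sigma^2/N$ term vanishes matches the paper's own discussion of the strong/weak separation.
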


\begin{proof}

The proof for the \textbf{OUT} case is identical to the proof for Theorem~\ref{thm:strong_adv}. When the \textbf{weak} adversary's challenge batch is \textbf{IN} 

\begin{align*}
    \ex{\mu, X}{z_{\textit{IN}}} &= \ex{\mu, X}{\inn{\muhat- \mubar}{\muB - \mubar}} \\ 
    &= \sum_{i=1}^{d} \ex{\mu, X}{(\muhat- \mubar)_i \cdot (\muB - \mubar)_i}
\end{align*}

\noindent For succinctness, we drop the summation over $d$ dimensions as they are i.i.d. 
\begin{align*}
    \ex{\mu, X}{(\muhat - \mubar) \cdot (\muB - \mubar)} &= \ex{\mu, X}{\muhat\muB - \mubar\muB - \mubar\muhat + \mubar^{2}} \\
    &= \ex{\mu, X}{\muhat \muB} - \mubar \ex{\mu, X}{\muB} - \mubar \ex{\mu, X}{\muhat} + \mubar^{2} \\ 
    &= \ex{\mu, X}{\muhat \muB} - \mubar^2 
\end{align*}

\noindent Expanding $\muhat$ and $\muB$ yields
\begin{align*}
\ex{\mu, X}{\muhat \muB} - \mubar^2  &= \ex{\mu, X}{\left( \frac{1}{T} \sum_{i=1}^{T}  \frac{1}{N} \sum_{j=1}^{N} X_{i,j} \right) \left( \frac{1}{k} \sum_{j=1}^{k} X_{\tau , j} \right) } - \mubar^2 
\end{align*}

We assume that $k \leq N$. Once again separating the correlated and uncorrelated tasks, we have
\begin{align*}
    ={} &\ex{\mu, X}{ \left( \frac{1}{T} \sum_{i \neq \tau}  \frac{1}{N} \sum_{j=1}^{N} X_{i,j} \right)} \cdot \ex{\mu, X}{\left( \frac{1}{k} \sum_{j=1}^{k} X_{\tau , j} \right) } \\ 
    &+  {\ex{\mu, X}{\left( \frac{1}{TN} \sum_{j=1}^{N} X_{\tau,j} \right) \left( \frac{1}{k} \sum_{j=1}^{k} X_{\tau,j} \right)} } - \mubar^2  \\ 
    ={} &\frac{T-1}{T} \mubar^2 + {\ex{\mu, X}{\left( \frac{1}{TN} \sum_{j=1}^{N} X_{\tau,j} \right) \left( \frac{1}{k} \sum_{j=1}^{k} X_{\tau,j} \right)} } - \mubar^2 
\end{align*}

Now, we note that the challenge batch samples are fresh, i.i.d samples from the task distribution $\mathcal{N}(\mu_\tau , \sigma^{2} \mathbb{I}_d)$. Thus, the challenge batch and $\hat{\mu}$ are independent over sampling the data, but correlated over tasks. Taking the expectation over both data and tasks,
\begin{align*}
    &= \frac{T-1}{T} \mubar^2 + \underset{\mu}{\mathbb{E}} \left[ \frac{1}{T} \underset{X}{\mathbb{E}}  \left[ \frac{1}{N} \sum_{j=1}^{N} X_{\tau,j} \right]  \cdot \underset{X}{\mathbb{E}}  \left[ \frac{1}{k} \sum_{j=1}^{k} X_{\tau,j} \right]  \right]  - \mubar^2  \\
    &= \frac{T-1}{T} \mubar^2 + {\frac{1}{T} \ex{\mu}{\mu_{\tau}^{2}} } - \mubar^2  \\
    &= \frac{T-1}{T} \mubar^2 + {\frac{1}{T} (\mubar^{2} + \bar{\sigma}^2) } - \mubar^2  \\
    &=   \frac{1}{T} (\mubar^{2} + \bar{\sigma}^2) - \frac{1}{T}\mubar^2 \\ 
    &=   \frac{\bar{\sigma}^2}{T} 
\end{align*}

Summing over the $d$ i.i.d. dimensions, we get that under the \textbf{weak} adversary
\[
\ex{\mu, X}{z_{\textit{IN}}} = \frac{d}{T} \bar{\sigma}^{2} 
\]

\end{proof}

\newtheorem*{vars}{Theorem~\ref{thm:variance} (Variance)}
\begin{vars}
    The variance of $z$ when $\mu_\tau$ is \textbf{OUT} is
    \[
        \var{\mu, X}{z_{\textit{OUT}}} = \frac{d}{T} \Big[ \bar{\sigma}^4 + \frac{\sigma^4}{kN} + \Big( \frac{k+N}{kN} \Big) ( \bar{\sigma}^2\sigma^2 ) \Big]
    \]
    \\
    When $\mu_\tau$ is \textbf{IN}, 
    \[
        \var{\mu,X}{{z}_{\textit{IN}}} \leq {3} \var{\mu,X}{{z}_{\textit{OUT}}}
    \]
\end{vars}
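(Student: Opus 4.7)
The plan is to handle the OUT case directly via independence, then reduce the IN case to the OUT case plus a controllable extra term by decomposing $\hat{\mu} - \bar{\mu}$ into a target-task piece and an independent-of-target piece. For OUT, the challenge batch is drawn from a task $\mu_{\text{OUT}}$ sampled freshly from the task distribution, so $\mu_B - \bar{\mu}$ is independent of $\hat{\mu}-\bar{\mu}$. Since both vectors have i.i.d.\ coordinates, $\var{}{z_{\text{OUT}}} = d\cdot \var{}{(\hat{\mu}_1-\bar{\mu})(\mu_{B,1}-\bar{\mu})}$, and for independent mean-zero $U,V$, $\var{}{UV}=\var{}{U}\var{}{V}$. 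Plugging in the per-coordinate variances $\var{}{\hat{\mu}_1-\bar{\mu}} = (\bar{\sigma}^2+\sigma^2/N)/T$ (already computed implicitly via the covariance of $\hat{\mu}$) and $\var{}{\mu_{B,1}-\bar{\mu}} = \bar{\sigma}^2+\sigma^2/k$ (obtained by writing $\mu_B = \mu_{\text{OUT}} + (\mu_B-\mu_{\text{OUT}})$ as task-level plus within-task noise) and expanding the product yields the stated formula.

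For the IN case, I split $\hat{\mu} - \bar{\mu} = A + B$, where $A = \tfrac{1}{T}(\bar{X}_\tau - \bar{\mu})$ is the target task's contribution to $\hat{\mu}$ and $B = \tfrac{1}{T}\sum_{i\neq \tau}(\bar{X}_i - \bar{\mu})$ is the contribution of the remaining $T-1$ tasks. Because $B$ depends only on data from tasks $i \neq \tau$, it is independent of $\mu_B - \bar{\mu}$ in both the strong and weak variants. This independence delivers two simplifications: the cross-term vanishes, $\cov{}{\inn{A}{\mu_B-\bar{\mu}}, \inn{B}{\mu_B-\bar{\mu}}} = 0$ (the relevant expectation factors through $\ex{}{B}=0$), and $\var{}{\inn{B}{\mu_B-\bar{\mu}}} = \tfrac{T-1}{T}\var{}{z_{\text{OUT}}}$ by the same independent-product calculation as in the OUT case. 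It therefore suffices to bound $\var{}{\inn{A}{\mu_B-\bar{\mu}}}$.

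The key tool for the remaining piece is Isserlis' identity: for mean-zero jointly Gaussian $(U,V)$, $\var{}{UV} = \var{}{U}\var{}{V} + \cov{}{U,V}^2$. Coordinate-wise, $(A_1, \mu_{B,1}-\bar{\mu})$ is jointly Gaussian (in the strong case this holds conditionally on the random size-$k$ subset used to form the batch, but by the symmetry of i.i.d.\ samples the joint law does not depend on that subset, so Isserlis survives marginalization). This reduces $\var{}{\inn{A}{\mu_B-\bar{\mu}}}$ to $\tfrac{1}{T}\var{}{z_{\text{OUT}}} + d\cdot\cov{}{A_1,\mu_{B,1}-\bar{\mu}}^2$, where the single-coordinate covariance equals $(\bar{\sigma}^2+\sigma^2/N)/T$ for the strong adversary and $\bar{\sigma}^2/T$ for the weak one, i.e.\ exactly the per-coordinate quantities already extracted in Theorems~\ref{thm:strong_adv} and~\ref{thm:weak_adv}. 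Summing the $A$ and $B$ pieces gives $\var{}{z_{\text{IN}}} = \var{}{z_{\text{OUT}}} + d\cdot\cov{}{A_1,\mu_{B,1}-\bar{\mu}}^2$.

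To finish, I upper bound the extra term by $d(\bar{\sigma}^2+\sigma^2/N)^2/T^2$ and compare against $\var{}{z_{\text{OUT}}} = d(\bar{\sigma}^2+\sigma^2/N)(\bar{\sigma}^2+\sigma^2/k)/T$; since $k\leq N$ and $T\geq 1$, the extra term is at most $\var{}{z_{\text{OUT}}}$, so $\var{}{z_{\text{IN}}} \leq 2\var{}{z_{\text{OUT}}} \leq 3\var{}{z_{\text{OUT}}}$ (the bound of $3$ is loose but convenient). The main obstacle I anticipate is the joint-Gaussianity step in the strong case: the random $k$-of-$N$ subsampling formally destroys unconditional Gaussianity of $(A_1,\mu_{B,1}-\bar{\mu})$, and one has to argue by conditioning on a fixed subset, applying Isserlis there, and then invoking the symmetry of the i.i.d.\ $X_{\tau,j}$'s to show the moments are subset-independent. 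Everything else — expanding the OUT product, the $A$--$B$ decomposition, and the final numerical comparison — is routine.
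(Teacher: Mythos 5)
Your proof is correct, but it takes a genuinely different route from the paper's for the \textbf{IN} bound. The paper keeps $\alpha = \hat{\mu}-\bar{\mu}$ intact, writes $\var{\mu,X}{\alpha_i\beta_i} = \ex{\mu,X}{(\alpha_i\beta_i)^2} - \rho^2\var{\mu,X}{\alpha_i}\var{\mu,X}{\beta_i}$, drops the (nonnegative) subtracted term, and bounds $\ex{\mu,X}{(\alpha_i\beta_i)^2}$ by Cauchy--Schwarz together with the Gaussian fourth-moment identity $\ex{\mu,X}{\alpha_i^4}=3\var{\mu,X}{\alpha_i}^2$; the factor $3$ is exactly the Gaussian kurtosis surfacing through that inequality, and only marginal Gaussianity of each factor is needed. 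You instead split $\hat{\mu}-\bar{\mu}$ into the target task's contribution $A$ and the independent remainder $B$, and apply Isserlis' identity $\var{}{UV}=\var{}{U}\var{}{V}+\cov{}{U,V}^2$ to the correlated piece, which requires the joint-Gaussianity argument you flag (conditioning on the $k$-of-$N$ subset and invoking exchangeability) but yields the exact identity $\var{\mu,X}{z_{\textit{IN}}}=\var{\mu,X}{z_{\textit{OUT}}}+d\cdot\cov{}{A_1,\beta_1}^2$ and hence the sharper bound $\var{\mu,X}{z_{\textit{IN}}}\leq(1+\tfrac{1}{T})\var{\mu,X}{z_{\textit{OUT}}}$, of which the stated factor of $3$ is a loose corollary. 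Your approach buys a quantitatively tighter and more informative result (it isolates precisely where the \textbf{strong}/\textbf{weak} gap enters the variance, via the squared covariance term that the paper's own closing remark only gestures at); the paper's approach buys brevity and avoids any joint-Gaussianity bookkeeping. Both OUT-case computations are the same independence-plus-product-of-variances argument.
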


\begin{proof}
First, we define the following random variables:
\[
\alpha = \muhat - \mubar; \:
\alpha \sim \mathcal{N} \left( \vec{0}, \: \left( \frac{\bar{\sigma}^{2}}{T} + \frac{\sigma^{2}}{NT} \right) \mathbb{I}_d \right)
\]

\[
\beta = \mu_B - \mubar; \:
\beta \sim \mathcal{N} \left( \vec{0}, \: \left( \bar{\sigma}^{2} + \frac{\sigma^{2}}{k} \right) \mathbb{I}_d \right)
\]

Then, the variance in the \textbf{OUT} case is
\begin{align*}
    \var{\mu, X}{z_{\textit{OUT}}} &= \var{\mu, X}{\inn{\alpha}{\beta}} \\ 
    &= \sum_{i=1}^{d} \var{\mu, X}{\alpha_i \cdot\beta_i}
\end{align*}

For succinctness, we drop the summation with index $i$ over the i.i.d dimensions of the random variables. Then, we have
\begin{align*}
    \var{\mu, X}{\alpha \cdot \beta} &= \ex{\mu, X}{\alpha^{2} \beta^{2}} - \ex{\mu, X}{\alpha \beta}^{2} \\
    &= \ex{\mu, X}{\alpha^{2} \beta^{2}} \\ 
    &= \ex{\mu, X}{\alpha^{2}} \ex{\mu,X}{\beta{2}} \\ 
    &= \var{\mu, X}{\alpha} \var{\mu, X}{\beta} \\
    &= \left( \frac{\bar{\sigma}^{2}}{T} + \frac{\sigma^{2}}{NT} \right) \cdot \left( \bar{\sigma}^{2} + \frac{\sigma^{2}}{k} \right)  \\ 
    &= \frac{1}{T} \left[ \bar{\sigma}^4 + \frac{\sigma^4}{kN} + \Big( \frac{k+N}{kN} \Big) ( \bar{\sigma}^2\sigma^2 ) \right]
\end{align*}

Summing over the $d$ dimensions, we have 

\[
\var{\mu, X}{z_{\textit{OUT}}} = \frac{d}{T} \left[ \bar{\sigma}^4 + \frac{\sigma^4}{kN} + \Big( \frac{k+N}{kN} \Big) ( \bar{\sigma}^2\sigma^2 ) \right]
\]

Now, we can bound the variance of $z_{\textit{IN}} = \inn{\alpha}{\beta}$
\begin{align*}
     \var{\mu, X}{z_{\textit{IN}}} &= \var{\mu, X}{\inn{\alpha}{\beta}} \\
     &= \sum_{i=1}^{d} \var{\mu, X}{\alpha_i \cdot\beta_i} \\ 
     &= \sum_{i=1}^{d} \ex{\mu, X}{(\alpha_{i} \cdot \beta_{i})^{2}} - \ex{\mu, X}{\alpha_{i} \cdot \beta_{i}}^{2} \\ 
     &= \sum_{i=1}^{d} \ex{\mu, X}{(\alpha_{i} \cdot \beta_{i})^{2}} - \cov{\mu, X}{\alpha_i , \beta_i}^{2} \\
     &= \sum_{i=1}^{d} \ex{\mu, X}{(\alpha_{i} \cdot \beta_{i})^{2}} - \rho^{2} \var{\mu, X}{\alpha_i} \var{\mu, X}{\beta_i}
\end{align*}

\noindent where $\rho$ is the correlation coefficient between $\alpha_i$ and $\beta_i$. Now, using the Cauchy-Schwarz inequality and the fact that $\rho^{2} \geq 0$
\begin{align*}
     &\leq \sum_{i=1}^{d} \left(\sqrt{ \ex{\mu, X}{\alpha_{i}^{4}} \ex{\mu, X}{\beta_{i}^{4}} } \right) - \rho^{2} \var{\mu, X}{\alpha_i} \var{\mu, X}{\beta_i} \\
     &= \sum_{i=1}^{d} \left(\sqrt{3\var{\mu, X}{\alpha_{i}}^{2} \cdot 3\var{\mu, X}{\beta_{i}}^{2}}\right) - \rho^{2} \var{\mu, X}{\alpha_i} \var{\mu, X}{\beta_i} \\
     &= \sum_{i=1}^{d} 3\var{\mu, X}{\alpha_{i}} \var{\mu, X}{\beta_{i}} - \rho^{2} \var{\mu, X}{\alpha_i} \var{\mu, X}{\beta_i} \\
     &\leq \sum_{i=1}^{d} 3\var{\mu, X}{\alpha_{i}} \var{\mu, X}{\beta_{i}} \\
     &= 3\var{\mu, X}{z_{\textit{OUT}}}
\end{align*}

\newcommand{\indep}[0]{\perp\!\!\!\perp}
\newcommand{\alphaind}[0]{\alpha_{i,\indep}}
\newcommand{\alphatau}[0]{\alpha_{i,\tau}}
\newcommand{\alphatausplit}[1]{\alpha_{i,\tau,\textbf{#1}}}

We note that in this proof, the variance of $z_{\textit{IN}}$ is bounded by setting the squared correlation equal to 0. In practice, $\rho^{2} > 0$ since we know from Theorems~\ref{thm:strong_adv}  and \ref{thm:weak_adv} that $\ex{\mu,X}{\alpha_i \cdot \beta_i} =  \frac{\bar{\sigma}^2}{T}$ in the \textbf{weak} case and $\frac{\bar{\sigma}^2}{T} + \frac{\sigma^2}{NT}$ in the \textbf{strong} case.

\end{proof}

\section{Evaluation} \label{appdx:eval}

Here, we provide additional figures and results for Section~\ref{sec:eval}

\subsection{Variance Attack Test Statistic} \label{appdx:variance_attack}

We investigate the discrepancy in the ordering of our coordinate-wise variance test statistic when attacking MTL models trained for personalization. After MTL training, we compute the pairwise inner product, or similarity, of the weights in the task-specific layers for our CelebA, FEMNIST, and Stack Overflow models. Figure~\ref{fig:hist_taskheads} shows the distributions of the task head inner products, where we see that the vision models produce embeddings that lead task-specific layers to be correlated on average (i.e. inner product not centered around 0).

\begin{figure}[]
    \centering
    \includegraphics[width=0.4\linewidth]{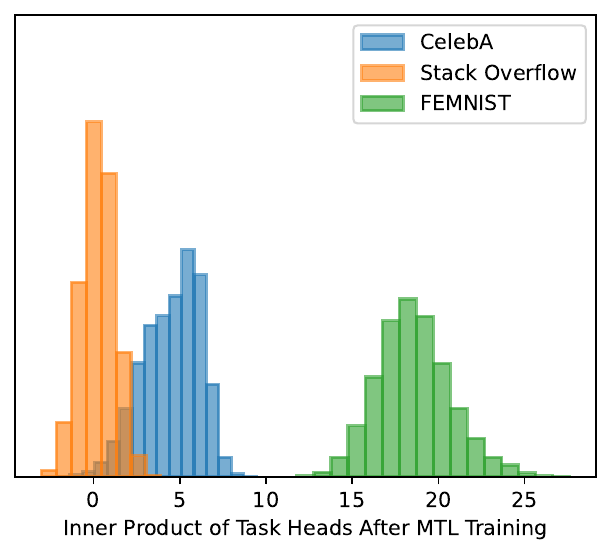}
    \caption{Task-Specific Layer Similarity (Personalization)}
    \label{fig:hist_taskheads}
\end{figure}



\subsection{Ablations on Synthetic Data} \label{appdx:synth}

We attempt to understand task-inference leakage by running our attacks on a synthetic dataset. The data generation process for this dataset is identical to the mean estimation example in Section~\ref{sec:mean_est}, but we adapt it for machine learning. To create the synthetic dataset, we start by sampling $2T$ i.i.d. tasks, $\{ \mu_1 \dots \mu_{2T} \}$ from a $d$-dimensional Gaussian distribution. For each of the tasks, $\mu_i$, we sample a dataset of $N$ $d$-dimensional vectors from the corresponding task distribution. To label the data for a learning problem, we first generate a random projection matrix $H \in \mathbb{R}^{k \times d}$, where $d$ is the data dimension and $k$ is the embedding dimension. Then, we randomly sample the "target" task-specific layers $g_1, \dots g_T \in \mathbb{R}^k$ and get the label for each sample, $\vec{x}$, coming from task $\tau \in [T]$ to be the sign of the inner product between $g_\tau$ and the embedding vector $H\vec{x}$ (that is, for any sample $X_{i,j}$, $y_{i,j} = \texttt{sign}(\inn{\: g_i}{ H X_{i,j} \:} )$.  This particular dataset is well suited for MTL as all tasks have unique labeling functions, but a common projection into the embedding space.

In our experiments on synthetic data, we use a simple neural network with 512 hidden units and a linear projection layer into the embedding space to approximate $H$. We vary the embedding dimension, number of samples per task, and number of tasks in the dataset, then measure our coordinate-wise variance attack's AUC over 4 training runs, with 64 trials of each attack on 8 random samples from each task. The results of this experiment are shown in Figure~\ref{fig:synthetic}. We observe that increasing the number of total samples in the dataset, whether by increasing the number of samples per task or the total number of tasks, has a sharp impact on the \textbf{strong} adversary's AUC. As the model has more samples to learn from, the gap between the \textbf{strong} and \textbf{weak} adversary's AUC scores shrinks. We also find that, in this suite of experiments, the embedding dimension has little impact on task-inference AUC.

\begin{figure*}[h]
    \centering
    \begin{subfigure}[b]{0.3\textwidth}
        \centering
        \includegraphics[width=\textwidth]{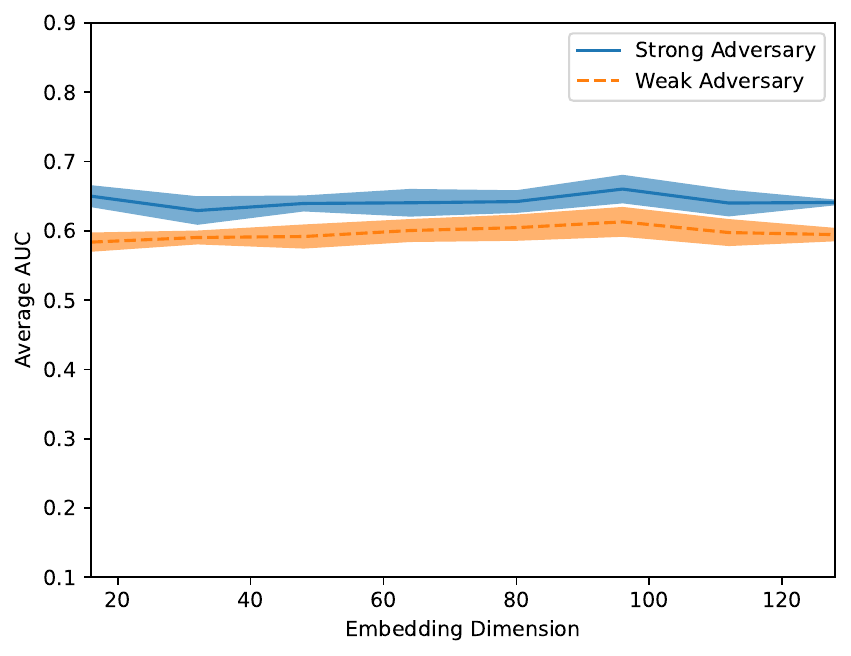}
        \caption{Embedding Dimension}
        \label{fig:embedding_dim}
    \end{subfigure}
    \hfill
    \begin{subfigure}[b]{0.3\textwidth}
        \centering
        \includegraphics[width=\textwidth]{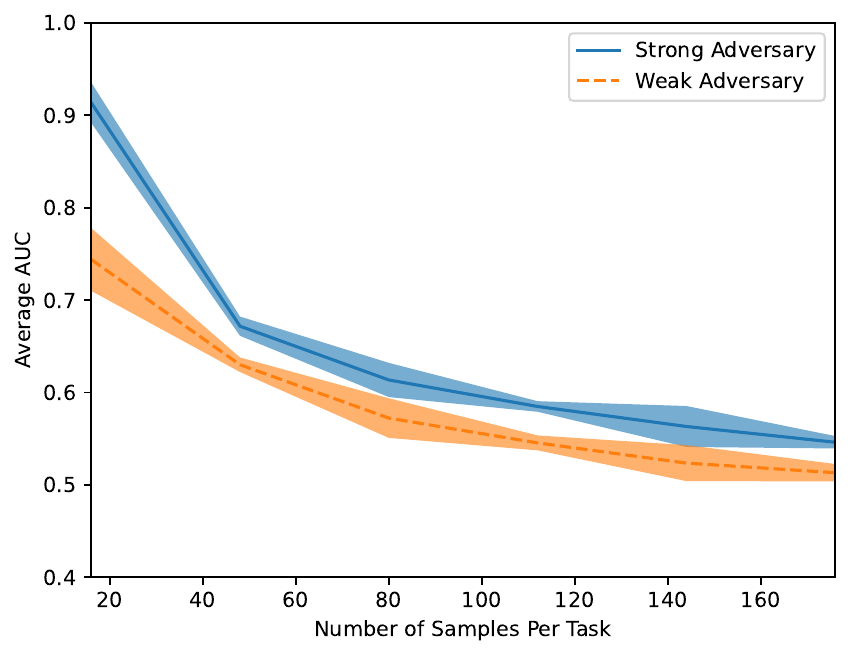}
        \caption{Number of Samples Per Task}
        \label{fig:num_samples}
    \end{subfigure}
    \hfill
    \begin{subfigure}[b]{0.3\textwidth}
        \centering
        \includegraphics[width=\textwidth]{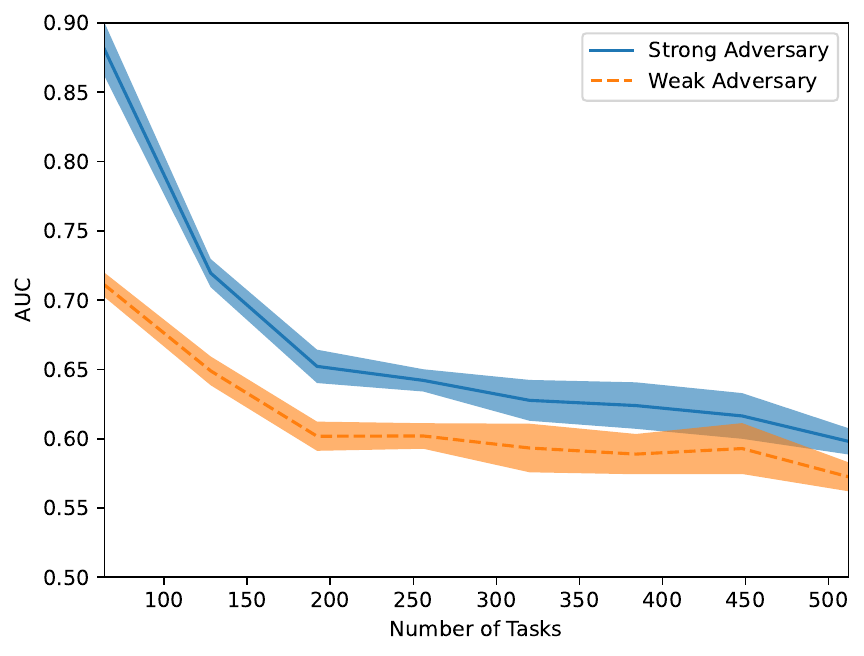}
        \caption{Number of Tasks}
        \label{fig:num_tasks}
    \end{subfigure}
    \caption{Ablations on Synthetic Data}
    \label{fig:synthetic}
\end{figure*}



\end{document}